\newcounter{counter}[section]
\newtheorem{thm}{Theorem}[section]
\newtheorem{lem}{Lemma}[section]
\newtheorem{defn}{Definition}
\theoremstyle{plain}
\theoremstyle{definition}
\theoremstyle{remark}
\newcommand{\cA}{\mathcal{A}}
\newcommand{\cO}{\mathcal{O}}
\newcommand{\cN}{\mathcal{N}}
\newcommand{\cF}{\mathcal{F}}
\newcommand{\cH}{\mathcal{H}}
\newcommand{\cK}{\mathcal{K}}
\newcommand{\cI}{\mathcal{I}}
\newcommand{\cC}{\mathcal{C}}
\newcommand{\bR}{\mathbb{R}}
\newcommand{\bN}{\mathbb{N}}
\newcommand{\bE}{\mathbb{E}}
\newcommand{\bP}{\mathbb{P}}
\newcommand{\wtU}{\widetilde{U}}
\newcommand{\wtW}{\widetilde{W}}
\newcommand{\wtH}{\widetilde{H}}
\newcommand{\wtL}{\widetilde{L}}
\newcommand{\wtbeta}{\widetilde{\beta}}
\newcommand{\wtcF}{\widetilde{\cF}}
\newcommand{\wtcA}{\widetilde{\cA}}
\newcommand{\wbcup}{\mathop{\bigcup}\displaylimits}
\newcommand{\wbcap}{\mathop{\bigcap}\displaylimits}
\newcommand{\V}[1]{{\bm{\mathbf{\MakeLowercase{#1}}}}} %
\newcommand{\M}[1]{{\bm{\mathbf{\MakeUppercase{#1}}}}} %
\newcommand{\norm}[1]{\left\lVert#1\right\rVert}
\newcommand{\argminE}{\mathop{\mathrm{argmin}}} 
\newcommand{\mmts}{\text{MMTS}}
\newcommand{\cmcp}{\text{CMCPR}}
\newcommand{\mw}{\textit{m}-\text{type worker}}
\newcommand{\app}[1]{\text{applicant}}
\newcommand{\com}[1]{\text{company}}
\icmltitlerunning{Two-sided Competing Matching Recommendation Markets With Quota and Complementary Preferences Constraints}
\begin{document}

\twocolumn[
\icmltitle{Two-sided Competing Matching Recommendation Markets With Quota and Complementary Preferences Constraints}

\begin{icmlauthorlist}
\icmlauthor{Yuantong Li}{ucla}
\icmlauthor{Guang Cheng}{ucla}
\icmlauthor{Xiaowu Dai}{ucla,bio}
\end{icmlauthorlist}

\icmlaffiliation{ucla}{Department of Statistics and Data Science, UCLA;}
\icmlaffiliation{bio}{Department of Biostatistics, UCLA}

\icmlcorrespondingauthor{Yuantong Li}{yuantongli@ucla.edu}
\icmlcorrespondingauthor{Xiaowu Dai}{dai@stat.ucla.edu}

\icmlkeywords{Machine Learning, ICML}

\vskip 0.3in
]

\printAffiliationsAndNotice{}  %

\begin{abstract}
In this paper, we propose a new recommendation algorithm for addressing the problem of two-sided online matching markets with complementary preferences and quota constraints, where agents' preferences are unknown a priori and must be learned from data. The presence of mixed quota and complementary preferences constraints can lead to instability in the matching process, making this problem challenging to solve. To overcome this challenge, we formulate the problem as a bandit learning framework and propose the Multi-agent Multi-type Thompson Sampling ($\mmts$) algorithm. The algorithm combines the strengths of Thompson Sampling for exploration with a new double matching technique to provide a stable matching outcome. Our theoretical analysis demonstrates the effectiveness of $\mmts$ as it can achieve stability and has a total $\widetilde{\mathcal{O}}(Q{\sqrt{K_{\max}T}})$-Bayesian regret with high probability, which exhibits linearity with respect to the total firm's quota $Q$,  the square root of the maximum size of available type workers $\sqrt{K_{\max}}$ and time horizon $T$. In addition, simulation studies also demonstrate $\mmts$' effectiveness in various settings. We provide code
used in our experiments \url{https://github.com/Likelyt/Double-Matching}.
\end{abstract}

\section{Introduction}
Two-sided matching markets have been a mainstay of theoretical research and real-world applications for several decades since the seminal work by \citet{gale1962college}. Matching markets are used to allocate indivisible “goods” to
multiple decision-making agents based on mutual compatibility as assessed via sets of preferences. 
We consider the setting of matching markets with recommender systems, where
preferences are usually unknown in the recommendation process due to the large volume of participants.
One of the key concepts that contribute to the success of matching markets is \emph{stability}, which criterion ensures that all participants have no incentive to block a prescribed matching \citep{roth1982economics}.
Matching markets often consist of participants with \emph{complementary} preferences that can lead to instability \citep{che2019stable}. 
Examples of complementary preferences in matching markets include: firms seeking workers with skills that complement their existing workforce, sports teams forming teams with players that have complementary roles, and colleges admitting students with diverse backgrounds and demographics that complement each other.
Studying the stability issue in the context of complementary preferences is crucial in ensuring the successful functioning of matching markets with complementarities.

In this paper, we propose a novel algorithm and present an in-depth analysis of the problem of complementary preferences in matching markets. 
Specifically, we focus on a many-to-one matching scenario and use the job market as an example.
In our proposed model, there is a set of agents (e.g., firms), each with a limited quota, and a set of arms (e.g., workers), each of which can be matched to at most one
agent. Each arm belongs to a unique type, and each agent wants to match with a minimum quota of arms for each type and a maximum quota of arms from all types. This leads to complementarities in agents' preferences.  Additionally, the agents' preference of arms from each type is unknown a priori and must be learned from data, which we refer to as the \emph{competing matching under complementary preference recommendation problem} ($\cmcp$). 

The main contributions can be summarized as follows.
Our first result is the formulation of $\cmcp$ into a bandit learning framework as described in \citet{lattimore2020bandit}.  Using this framework, we propose a new algorithm, the Multi-agent Multi-type Thompson Sampling ($\mmts$), to solve $\cmcp$. Our algorithm builds on the strengths of Thompson Sampling (TS) \citep{thompson1933likelihood, agrawal2012analysis, russo2018tutorial} in terms of exploration and further enhances it by incorporating a new \emph{double matching} technique to find a stable solution for $\cmcp$, shown in Section \ref{sec: insufficient exploration-ucb-vs-ts}. Unlike the upper confidence bound (UCB) algorithm, the TS method can achieve sufficient exploration by incorporating a deterministic, non-negative bias inversely proportional to the number of matches into the observed empirical means. Furthermore,  the introduced double matching technique uses two stages of matching to satisfy both the type quota and total quota requirements. These two stages' matching mainly consists of using the deferred-acceptance (DA) algorithm from \citet{gale1962college}.

Secondly, we provide the theoretical analysis of the proposed $\mmts$  algorithm. Our analysis shows that $\mmts$ achieves stability and enjoys incentive compatibility (IC). The proof of stability is obtained through a two-stage design of the double matching technique, and the proof of incentive compatibility is obtained through the regret lower bound.
To the best of our knowledge, $\mmts$ is the first algorithm to achieve stability and incentive compatibility in the $\cmcp$.

Finally, our theoretical results indicate that $\mmts$ achieves a Bayesian regret that scales $\widetilde{\mathcal{O}}({\sqrt{T}})$ and is near linear in terms of the total quota of all firms ($Q$). Besides, we find that the Bayesian regret only depends on the square root of the \emph{maximum} number of workers ($K_{\max}$) in one type rather than the square root of the total number of workers ($\sum_{m}K_{m}$) in all types, which is important for the large market. This is a more challenging setting than that considered in previous works such as \citet{liu2020competing} and \citet{jagadeesan2021learning}, which only considers a single type of worker and a quota of one for each firm. 
To address these challenges, we use the eluder dimension \citep{russo2013eluder} to measure the uncertainty set widths and bound the instantaneous regret for each firm, and use the concentration results to measure the probability of \emph{bad events} occurring to get the final regret. Bounding the uncertainty set width is the key step for deriving the regret upper bound of $\mmts$.

The rest of this paper is organized as follows.
Section \ref{sec: problem} introduces basic concepts of $\cmcp$. Section \ref{sec: challenge and sol} presents the challenges of this problem. 
Section \ref{sec: algorithm}  provides  $\mmts$ algorithm, its comparison with UCB-family algorithms, and shows the incapable exploration of the UCB algorithm in CMCPR.
Section \ref{sec: theorem} provides the stability, regret upper bound, and the incentive-compatibility of $\mmts$.
Section \ref{sec: experiments} shows the application of $\mmts$ in simulations, including the distribution of learning parameters, and demonstrates the robustness of $\mmts$ in large markets. Finally, Section \ref{sec: related work} discusses related works.

\begin{figure}
    \centering
    \includegraphics[scale=0.165]{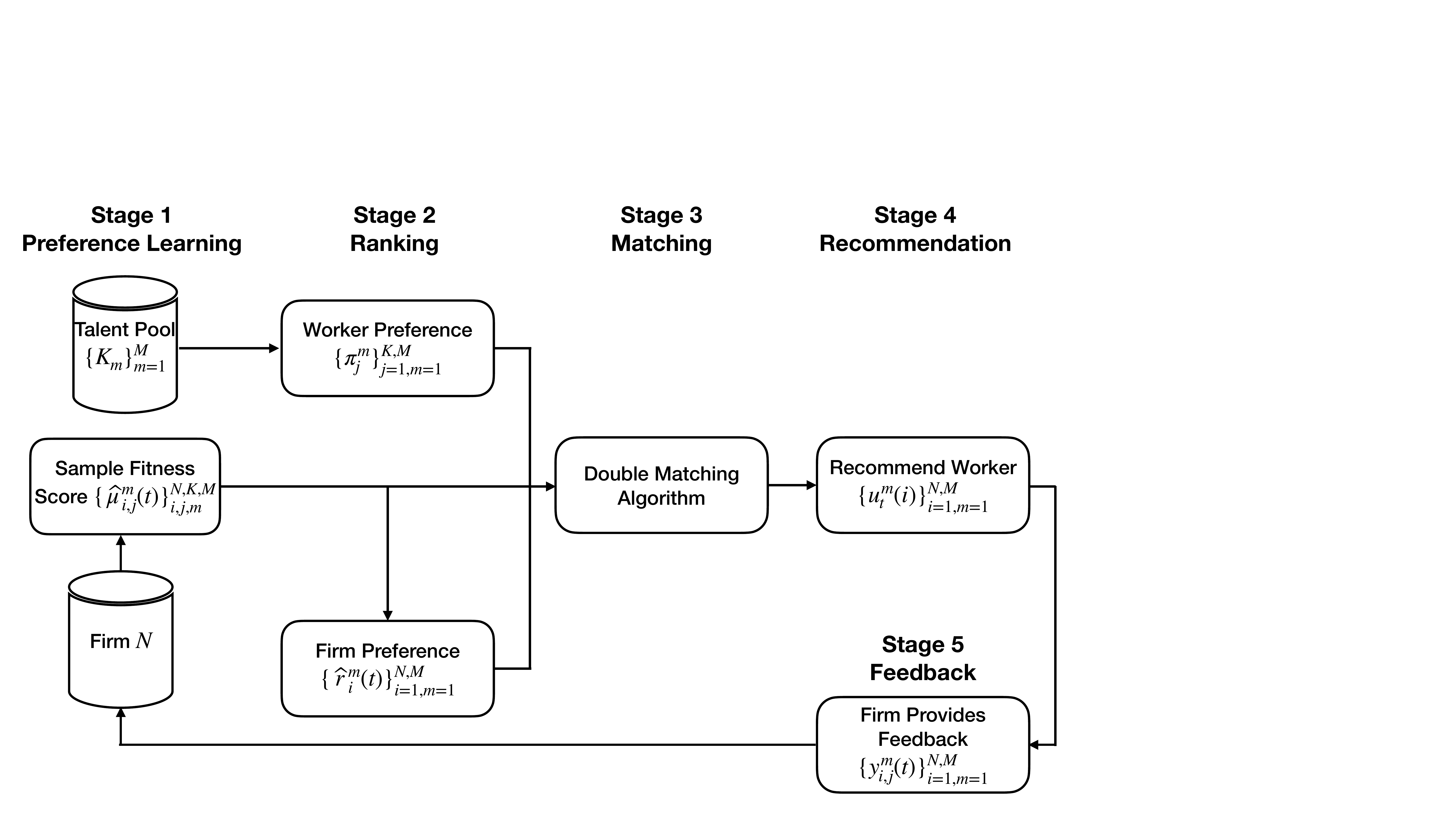}
    \caption{$\mmts$ Algorithm for $\cmcp$ with its application in the job market, including five stages: \textit{preference learning}, \textit{ranking construction}, \textit{matching}, \textit{recommendation}, \textit{feedback collection}.}
    \label{fig:mmts-algo}
\end{figure}

\section{Problem}
\label{sec: problem}
We now describe the problem formulation of the \textbf{C}ompeting \textbf{M}atching under \textbf{C}omplementary \textbf{P}references \textbf{R}ecommendation problem (CMCPR).

\noindent \textbf{Notations.} We define $T$ as the time horizon and assume it is known in advance\footnote{The unknown $T$ can be handled with the doubling trick \citep{auer1995gambling}.}. We denote $[N] = [1,2,..., N]$ where $N\in \mathbb{N}^{+}$.
Define the bold $\V{x} \in \bR^{d}$ be a $d$-dimensional vector.  

\subsection{Environment}
\label{sec: problem formulation}
The matching of workers and firms will be our running example throughout the paper. The organizer is the centralized platform, and the overall goal of the platform is to recommend the best-fit worker and match two-sided participants with their ideal objects over time. We first introduce seven elements in $\cmcp$.

\noindent
\textbf{(I) Participants.} In the centralized platform, there are $N$ firms (agents), denoted by $\mathcal{N} = \{p_1, p_2, ..., p_N\}$, and $M$ types of workers (arms), represented $\mathcal{K}_{m} = \{a_{1}^{m}, a_{2}^{m}, ... a_{K_{m}}^{m}\}, m \in [M]$, where $K_m$ is the number of $m$-th type workers and $M$ is the total types.

\textbf{(II) Quota.} Agent $p_i$ has a minimum quota $q_{i}^{m}$ for $m$-type workers, and a maximum total quota $Q_{i}$ (e.g., seasonal headcount in a company) and we assume $\sum_{i=1}^{M} q_{i}^{m} \leq Q_{i}$. Define the total market quota for all companies as $Q = \sum_{i=1}^{N}Q_{i}$ and the total number of available workers in the market as $K = \sum_{m=1}^{M}K_{m}$. We assume that $Q \ll K$ and $T$ is relatively large.

\noindent
\textbf{(III) Two-sided Complementary Preferences.} 
There are two kinds of preferences: workers to firms' preferences and firms to workers' preferences.

\emph{a. Preferences of $m$-type workers towards firms $\V{\pi}^{m}:\cK_{m} \mapsto \mathcal{N}$}. We assume that there exist fixed preferences from workers to firms, and these preferences are known for the platform. For instance, workers submit their preferences for different firms on the platform.
$\pi_{j,i}^{m}$ represents the rank for $p_{i}$ from the view of $a_{j}^{m}$, and we assume that there are no ties in the rank orders, %
$\V{\pi}_{j}^{m} \subseteq \{\pi_{j,1}^{m}, ..., \pi_{j,N}^{m}\}$.
In other words, $\V{\pi}_{j}^{m}$ is a subset of the permutation of $[N]$. And $\pi_{j,i}^{m} < \pi_{j,i'}^{m}$ implies that $\mw$ $a_{j}^{m}$ prefers firm $p_i$ over firm $p_{i'}$ and as a shorthand, denoted as $p_{i} <_{j}^{m} p_{i'}$. %
This known worker-to-firm preference is a mild and common assumption in matching market literature \citep{liu2020competing,liu2021bandit,li2022rate}.

\emph{b. Preferences of firms towards $\mw$s $\V{r}^{m}:\mathcal{N}\mapsto \cK_{m}$}. The preferences of firms towards workers are fixed, but \emph{unknown}. The goal of the platform is to infer these unknown preferences through historical matching data. We denote $r_{i,j}^{m}$ as the true rank of worker $a_{j}^{m}$ in the preference list of firm $p_{i}$, and assume there are no ties. $p_{i}$'s preferences towards workers is represented by $\V{r}_{i}^{m}$, which is a subset of the permutation of $[\cK_{m}]$. $r_{i,j}^{m} < r_{i, j'}^{m}$ implies that firm $p_{i}$ prefers worker $a_{j}^{m}$ over worker $a_{j'}^{m}$.

\subsection{Policy}
\noindent
\textbf{(IV) Matching Policy.}
$u^{m}_{t}(p_{i}): \cN \mapsto \cK_{m}$ is a recommendation mapping function from $p_{i}$ to  $\mw$s at time $t$.

\noindent
\textbf{(V) Stable Matching and Optimal Matching.} 
\label{def: stable matching}
We introduce key concepts in matching fields \citep{roth2008deferred}.

\begin{defn}[Blocking pair] 
A matching $u$ is blocked by firm $p_{i}$ if $p_{i}$ prefers being single to being matched with $u(p_{i})$, i.e. $p_{i} >_{i} u(p_{i})$. A matching $u$ is blocked by a pair of firm and worker $(p_{i}, a_{j})$ if they each prefer each other to the partner they receive at $u$, i.e. $a_{j} >_{i} u(p_{i})$ and $p_{i} >_{j} u^{-1}(a_{j}).$
\end{defn}

\begin{defn}[Stable Matching] 
A matching $u$ is stable if it isn't blocked by any individual or pair of workers and firms.
\end{defn}

\begin{defn}[Valid Match]
With true preferences from both sides, arm $a_{j}$ is called a \textit{valid match} of agent $p_{i}$ if there exists a stable matching according to those rankings such that $a_{i}$ and $p_{j}$ are matched.
\end{defn}

\begin{defn}[Agent Optimal Match]
Arm $a_{j}$ is an \textit{optimal match} of agent $p_{i}$ if it is the most preferred valid match.  
\end{defn}

Given two-sided true preferences, the deferred-acceptance (DA) algorithm \citep{gale1962college} will provide a stable matching outcome. The matching result by the DA algorithm is always optimal for members of the proposing side, and we denote the agent-optimal policy as $\{\overline{u}_{i}^{m}\}_{m=1}^{M}$.

In $\cmcp$, it is worth mentioning that each firm has a minimum quota constraint $\V{q}_{i} = [q_{i}^{1}, ..., q_{i}^{M}]$ for all type workers to fill and total quota cap is $Q_{i}$. Therefore, we define the concept of stability as the absence of ``blocking pairs" across all types of workers and firms.\footnote{The discussion of the feasibility of the stable matching in $\cmcp$ is in Appendix \ref{supp: fea of stable matching}.} %

\noindent
\textbf{(VI) Matching Score.}
If $p_i$ is matched with  $a_j^m$ at time $t$, $p_i$ provides a noisy reward $y_{i,j}^m(t)$ which is assumed to be the \emph{true matching score} $\mu_{i,j}^m$ plus a noise term $\epsilon_{i,j}^m(t)$, 
\begin{equation}
\label{eq: utility model}
    \begin{aligned}
        y_{i,j}^{m}(t) = \mu_{i,j}^{m}+ \epsilon_{i,j}^{m}(t),
    \end{aligned}
\end{equation}
$\forall i, j, m, t\in [N], [K_{m}], [M],[T]$, where we assume that $\epsilon_{i,j}^m(t)$'s are independently drawn from a sub-Gaussian random variable with parameter $\sigma$. That is, for every $\alpha \in \bR$, it is satisfied that 
$\mathbb{E}[\exp(\alpha \epsilon_{i,j}^{m}(t))] \leq \exp(\alpha^2\sigma^2/2)$.

\noindent
\textbf{(VII) Regret.} 
Based on model (\ref{eq: utility model}), we denote the matching score for $p_{i}$ as $\V{y}_{i}^{m}(t):= \V{y}_{i,u_{t}^{m}(p_{i})}(t)$ in short.
Define the \emph{firm-optimal regret with $m$-type worker} for $p_{i}$  as 
\begin{equation}
    \begin{aligned}
        R_{i}^{m}(T, \theta):= \sum_{t=1}^{T}[\mu_{i,\overline{u}_{i}^{m}} - \mu_{i,u_{t}^{m}(p_{i})}(t)|\
        \theta],
    \end{aligned}
\end{equation}
where denote $\theta$ as the sampled problem instance from the distribution $\Theta$.
$R_{i}^{m}(T, \theta)$ represents
the total expected score difference between the policy  $u_{i}^{m} := \{u_{t}^{m}(p_{i})\}_{t=1}^{T}$ and the optimal policy $\overline{u}_{i}^{m}$ in hindsight.

As each firm have to recruit $M$ types workers with total quota $Q_{i}$, the \emph{total firm-optimal stable regret} for $p_{i}$ is defined as 
\begin{equation}
R_{i}(T,\theta):= \sum_{m=1}^{M} R_{i}^{m}(T, \theta).
\end{equation}
Finally, define the \emph{Bayesian social welfare gap} (BSWG) $\mathfrak{R}(T)$ as the expected regret over all firms and problem instance, 
\begin{equation}
    \mathfrak{R}(T):= \mathbb{E}_{\theta \in \Theta}\left[\sum_{i=1}^{N}R_{i}(T,\theta)\right].
\end{equation}
The goal of the centralized platform is to design a learning algorithm that achieves stable matchings through learning the firms' complementary preferences for multiple types of workers preciously from the previous matchings for a better recommendation.
This is equivalent to designing an algorithm that minimizes BSWG $\mathfrak{R}(T)$.

\section{Challenges and Solutions}
\label{sec: challenge and sol}
When preferences are unknown a priori in matching markets, the stability issue while satisfying complementary preferences and quota requirements is a challenging problem due to the interplay of multiple factors. %

\textbf{Challenge 1: How to design a stable matching algorithm to solve complementary preferences?} 
This is a prevalent issue in real-world applications such as hiring workers with complementary skills in hospitals and high-tech firms or admitting students with diverse backgrounds in college admissions. Despite its importance, no implementable algorithm is currently available to solve this challenge. In this paper, we propose a novel approach to resolving this issue by utilizing a novel designed \emph{double matching} (Algorithm  \ref{algo:db-matching}) to marginalize complementary preferences and achieve stability. Our algorithm can efficiently learn a stable matching result using historical matching data, providing a practical solution to $\cmcp$.

\textbf{Challenge 2: How to balance exploration and exploitation to achieve the sublinear regret?} 
The platform must find a way to recommend the most suitable workers to firms to establish credibility among workers and firms to stay at the platform towards achieving optimal matching. Compared to traditional matching algorithms, the CMCPR is not a one-time recommendation algorithm but a \textit{recycled} online recommendation matching algorithm with supply and demand consideration (workers and firms), which is more challenging as it requires more time to balance this trade-off. In addition, the classic UCB bandit methods could not function well in exploration and suffer sublinear regret demonstrated in Section \ref{sec: insufficient exploration-ucb-vs-ts}. To overcome this challenge, we propose the use of a sampling algorithm, which allows for better exploration and achieves sublinear regret. %

\section{Algorithms}
\label{sec: algorithm}
In this section, we propose the  Multi-agent Multi-type Thompson Sampling algorithm  ($\mmts$), which aims to learn the true preferences of all firms over all types of workers, achieve stable matchings, and minimize firms' Bayesian regret. We provide a description of $\mmts$ and demonstrate the benefits of using the sampling method. The overall $\mmts$ algorithm procedure is in Figure \ref{fig:mmts-algo}. The computational complexity of $\mmts$ is in Appendix \ref{sec: com complexity}.

\begin{algorithm}[t]
\SetAlgoLined
	\DontPrintSemicolon
	\SetAlgoLined
	\SetKwInOut{Input}{Input}
    \SetKwInOut{Output}{Output}
	\Input{Time horizon $T$; firms' priors $(\V{\alpha}^{m,0}_{i}, \V{\beta}^{m,0}_{i}), \forall i, m \in [N], [M]$; workers' preference $\V{\pi}^{m}, \forall m \in [M]$.}
    \For{$t \in \{1, ..., T\}$}{
    	\textbf{\textsc{Step 1: preference learning Stage}} \\
        \hspace{0.5cm} Sample estimated mean reward $\widehat{\V{\mu}}_{i}^{m}(t)$ over all types of workers (Algo. \ref{algo:ts-sampling})\\
        \textbf{\textsc{Step 2: Ranking Construction Stage}} \\ Construct all firms' estimated rankings $\{\widehat{\V{r}}^{m}_{i}(t)\}_{i=1, m=1}^{N, M}$ according $\widehat{\V{\mu}}_{i}^{m}(t)$.\\
    	\textbf{\textsc{Step 3: Double Matching Stage}}\\
        \hspace{0.5cm}  Get the matching result $\V{u}_{t}^{m}(p_{i}), \forall i \in [N], m \in [M]$ from the \emph{double matching} in Algo \ref{algo:db-matching}.\\
        \textbf{\textsc{Step 4: Recommending and Collecting Feedback Stage}}\\
        \hspace{0.5cm} Each firm receives its corresponding rewards from recommended all types of workers $\V{y}_{i}^{m}(t)$.\\
        \textbf{\textsc{Step 5: Updating Belief Stage}}\\
        \hspace{0.5cm} Based on received rewards, the platform updates firms' posterior belief.\\
    }
	\caption{Multi-agent Multi-type Thompson Sampling Algorithm ($\mmts$)}
 \label{algo:mm-ts}
\end{algorithm}

\subsection{Algorithm Description}
\label{sec: algorithm desc}
The $\mmts$ (Algorithm  \ref{algo:mm-ts}) is composed of five stages, \emph{preference learning stage}, \emph{ranking construction stage}, \emph{double matching stage}, \emph{collecting feedback stage}, and \emph{updating belief stage}.
At each matching step $t$, $\mmts$ iterates these five steps.

\noindent
\textbf{Step 1: Preference Learning Stage.} (Algorithm \ref{algo:ts-sampling}). For agent $p_{i}$, platform samples the mean feedback (reward) $\widehat{\mu}_{i,j}(t)$ of arm $a_{j}^{m}$ from distribution $\mathcal{P}^{m}_{j}$ with estimated  parameters $(\alpha^{m,t-1}_{i,j}, \beta^{m,t-1}_{i,j})$ from the historical matching data. 

\textbf{Step 2: Ranking Construction Stage.}
Then the platform sorts these workers within each type according $\{\widehat{\mu}_{i,j}(t)\}$ in descending order and gets the estimated rank $\widehat{\V{r}}^{m}(t) = \{\widehat{\V{r}}^{m}_{i}(t)\}_{i=1,m=1}^{N,M}$ where we denote  $\widehat{\V{r}}^{m}_{i}(t) = \{\widehat{\V{r}}_{i,j}^{m}(t)\}_{j=1}^{K_{m}}$.

\noindent
\textbf{Step 3: Double Matching Stage.} (Algorithm \ref{algo:db-matching}). 
With sampled mean reward $\V{\widehat{\mu}}(t) := \{\widehat{\mu}_{i,j}^{m}(t)\}_{i=1,j=1,m=1}^{N,K_{m}, M}$, estimated ranks $\{\widehat{\V{r}}^{m}(t)\}_{m=1}^{M}$, quota constraints $\{Q_{i}\}_{i=1}^{N}$, the double matching algorithm provides the final matching result with two-stage matchings.

The goal of the first match is to allow all firms to satisfy their minimum type-specific quota ${q_{i}^{m}}$ first followed by sanitizing the status quo as a priori. The second match is to fill the left-over positions $\widetilde{Q}_{i}$ (defined below) for each firm and match firms and workers without type consideration. 

\paragraph{a). First Match:} The platform implements the type-specific DA (Algo. \ref{algo:da-type} in Appendix) given quota constraints $\{q_{i}^{m}\}_{i=1, m=1}^{N,M}$. The matching road map starts from matching all firms with type from 1 to $M$ and returns the matching result $\{\widetilde{u}_{t}^{m}(p_{i})\}_{m\in [M]}$. This step can be implemented in parallel.

\paragraph{b). Sanitize Quota:} After the first match, the centralized platform sanitizes each firm's left-over quota $\widetilde{Q}_{i} = Q_{i} - \sum_{m=1}^{M}q_{i}^{m}$. If there exists a firm $p_{i}, s.t., \widetilde{Q}_{i} > 0$, then the platform will step into the second match. For those firms like $p_{i}$ whose leftover quota is zero $\widetilde{Q}_{i} = 0$, they and their matched workers will skip the second match.

\paragraph{c). Second Match:} When rest firms and workers continue to join in the second match, the centralized platform implements the standard DA in Algorithm \ref{algo:da-no-type} without type consideration. 
That is, the platform re-ranks the rest $M$ types of workers who do not have a match in the first match for firms, and fill available vacant positions.
It is worth noting that in Algorithm \ref{algo:da-no-type}, each firm will not propose to the previous workers who rejected him/her already or matched in Step 1.
Then firm $p_{i}$ gets the corresponding matched workers $\breve{u}_{t}(p_{i})$ in the second match. 
Finally, the platform merges the first and second results to obtain a final matching
$\V{u}_{t}^{m}(p_{i}) = \text{Merge}(\widetilde{u}_{t}^{m}(p_{i}), \breve{u}_{t}(p_{i})), \forall i, m \in [N], [M]$.

\noindent
\textbf{Step 4: Recommending and Collecting Feedback Stage.} When the platform broadcasts the matching result $\V{u}_{t}^{m}(p_{i})$ to all firms, each firm then receives its corresponding stochastic reward $\V{y}_{i}^{m}(t), \forall i \in [N], m \in [M]$. 

\noindent
\textbf{Step 5: Updating Belief Stage.}
After receiving these noisy rewards, the platform updates firms' belief (posterior) parameters as follows: $(\V{\alpha}^{m,t}_{i}, \V{\beta}^{m,t}_{i}) = \text{Update}(\V{\alpha}^{m,t-1}_{i}, \V{\beta}^{m,t-1}_{i},\V{y}_{i}^{m}(t)), \forall i \in [N], \forall m \in [M].$

\begin{algorithm}[t]

\SetAlgoLined
	\DontPrintSemicolon
	\SetAlgoLined
	\SetKwInOut{Input}{Input}
    \SetKwInOut{Output}{Output}
	\Input{Time horizon $T$; firms' priors $(\V{\alpha}^{m,0}_{i}, \V{\beta}^{m,0}_{i}), \forall i \in [N], \forall m \in [M]$.}
    \textbf{Sample}: 
    Sample mean reward
    $\widehat{\mu}_{i,j}^{m}(t) \sim \mathcal{P}(\alpha^{m,t-1}_{i,j}, \beta^{m,t-1}_{i,j})$, $\forall i, m, j \in [N],[M],[\cK_{m}]$.\\    
	\textbf{Sort}: Sort estimated mean feedback $\widehat{\mu}_{i,j}^{m}(t)$ in descending order and get the estimated rank $\widehat{\V{r}}^{m}_{i}(t)$.\\
    \textbf{Output}: The estimated rank $\widehat{\V{r}}^{m}_{i}(t)$ and  the estimated mean feedback $\widehat{\V{\mu}}_{i}^{m}(t)$, $\forall i,m \in [N], [M]$.\\
	\caption{Preference Learning Stage}
 \label{algo:ts-sampling}
\end{algorithm}
\begin{algorithm}[t]
\SetAlgoLined
	\DontPrintSemicolon
	\SetAlgoLined
	\SetKwInOut{Input}{Input}
    \SetKwInOut{Output}{Output}
	\Input{Estimated rank $\widehat{\V{r}}(t)$, estimated mean $\widehat{\V{\mu}}_{i}^{m}(t)$, type quota $q_{i}^{m}, \forall m \in [M], i \in [N]$ and total quota $Q_{i}, \forall i \in [N]$; workers' preference $\{\V{\pi}^{m}\}_{m \in [M]}$.}
	\textbf{\textsc{Step 1: First Match}}  \\
    \hspace{0.5cm} Given estimated ranks $\widehat{\V{r}}(t)$ and all workers' preferences $\V{\pi}^{m}$, the platform operates the firm-propose DA Algo and return the matching $\{\widetilde{u}_{t}^{m}(p_{i})\}_{i=1, m}^{N, M}$.\\
    \textbf{\textsc{Step 2: Sanitize Quota}} \\
    \hspace{0.5cm} Sanitize whether all firms' positions have been filled. For each company $p_{i}$, if $Q_{i} - \sum_{m=1}^{M}q_{i}^{m} > 0$, set the left quota as $\widetilde{Q}_{i} \gets Q_{i} - \sum_{m=1}^{M}q_{i}^{m}$ for firm $p_{i}$.\\ 
    \textbf{\textsc{Step 3: Second Match}}\\
    \uIf{$\widetilde{\M{Q}} \neq 0$}{
        Given left quota $\{\widetilde{Q}_{i}\}_{i \in [N]}$, estimated means $\widehat{\V{\mu}}(t)$, and workers' preferences $\{\V{\pi}^{m}\}_{m \in [M]}$, the platform runs the firm-propose DA and return the matching $\breve{u}_{t}(p_{i})$.\;
    }\uElse{
        Set the matching $\breve{u}_{t}(p_{i}) = \emptyset$.\\
    }
    \Output{The matching $u_{t}^{m}(p_{i}) \gets \text{Merge}(\widetilde{u}_{t}^{m}(p_{i}), \breve{u}_{t}(p_{i}))$ for all firms.}
	\caption{Double Matching}
 \label{algo:db-matching}
\end{algorithm}

\subsection{Incapable Exploration}
\label{sec: insufficient exploration-ucb-vs-ts}
\begin{figure}
    \centering
    \includegraphics[width=0.47\textwidth]{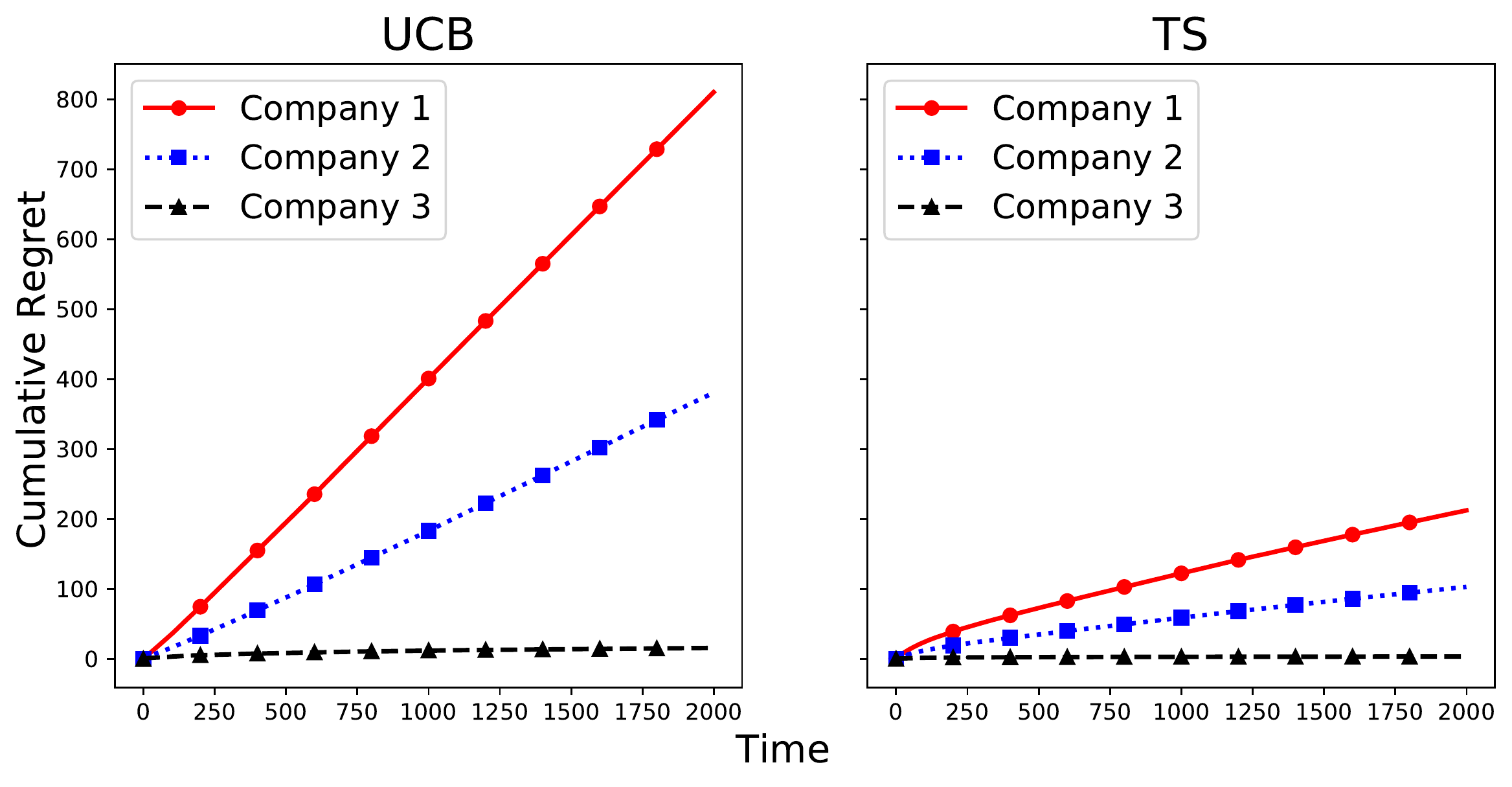}
    \caption{A comparison of centralized UCB and TS that demonstrates the incapable exploration of UCB.}
    \label{fig:ts_vs_ucb}
\end{figure}
We show why the sampling method has an advantage over the UCB method in estimating worker ranks.
We find that centralized UCB suffers linear firm-optimal stable regret in some cases and show it in Appendix \ref{supp-sec: insufficient exploration-ucb-vs-ts} with detailed experimental setting and analysis. %

\textit{Why sampling method is capable of avoiding the curse of linear regret?} By the property of sampling shown in Algorithm \ref{algo:ts-sampling}. Firm $p_{i}$'s initial prior over worker $a_{i}$ is a uniform random variable, and thus $r_{j}(t) > r_{i}(t)$ with probability $\widehat{\mu}_{j} \approx \mu_{j}$, rather than \emph{zero}! This differs from the UCB style method, which cannot update $a_{i}$'s upper bound due to lacking exploration over $a_{i}$. The benefit of TS is that it can occasionally explore different ranking patterns, especially when there exists such a previous example. In Figure \ref{fig:ts_vs_ucb}, we show a quick comparison of centralized UCB \citep{liu2020competing} in the settings shown above and $\mmts$ when $M=1, Q=1, N=3, K=3$. The UCB method produces a linear regret for firm 1 and firm 2. However, the TS method achieves a sublinear regret in firm 1 and firm 2. %

\section{Properties of \mmts: Stability and  
Regret}
\label{sec: theorem}
Section \ref{sec: stability} demonstrates the double matching algorithm can provide the stability property for $\cmcp$. Section \ref{sec: regret} establishes the Bayesian regret upper bound for all firms when they follow the $\mmts$. Section \ref{sec: strategy} discusses the incentive-compatibility property of the $\mmts$.

\subsection{Stability}
\label{sec: stability}

In the following theorem, we show the double matching algorithm (Algo.\ref{algo:db-matching}) provides a stable matching solution in the following theorem.

\begin{thm}
\label{thm: proof stability}
Given two sides' preferences from firms and $M$ types of workers. The double-matching procedure can provide a firm-optimal stable matching solution $\forall t \in [T]$. 
\end{thm}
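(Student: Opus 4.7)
The plan is to decompose the stability argument into (i) a within-stage guarantee obtained from the classical Gale--Shapley theorem and (ii) a cross-stage consistency argument that exploits the complementary-preference structure of $\cmcp$. Firm-optimality will then drop out as a corollary because firm-proposing DA is invoked in both stages.

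First I would analyze Stage 1 in isolation. For each fixed type $m$, the Stage-1 invocation of the type-specific DA (firm-proposing, with per-firm quota $q_i^m$) is exactly the many-to-one deferred-acceptance procedure on the sub-market $(\cN,\cK_m)$ with responsive preferences induced by $\widehat{\V{r}}_i^m(t)$ and $\V{\pi}^m$. By the classical theorem of \citet{gale1962college} (extended to many-to-one via responsive preferences), the output $\{\widetilde{u}_t^m(p_i)\}_{i=1}^{N}$ is a firm-optimal stable matching for the type-$m$ sub-market. In particular, there is no blocking pair $(p_i,a_j^m)$ in which both partners involved are indexed by type $m$ and matched in Stage 1.

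Next I would analyze Stage 2. The input to Stage 2 is the residual market consisting of firms with leftover quota $\widetilde{Q}_i = Q_i - \sum_{m=1}^{M} q_i^m \geq 0$ and of workers who went unmatched in Stage 1; no type labels are imposed. Again by the Gale--Shapley theorem, the firm-proposing DA produces a firm-optimal stable matching $\breve{u}_t$ on this residual sub-market, so no blocking pair can arise among agents whose entire current assignment lies in Stage 2.

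The main obstacle is the cross-stage case: showing that the merge $u_t^m(p_i)=\mathrm{Merge}(\widetilde{u}_t^m(p_i),\breve{u}_t(p_i))$ admits no blocking pair that straddles the two stages. I would argue this by cases on a hypothetical blocking pair $(p_i,a_j^m)$. If $a_j^m$ is Stage-1 matched to some $p_{i'}$ and $p_i$'s ``displaced" slot is also a Stage-1 type-$m$ slot, the contradiction is immediate from Stage-1 type-$m$ stability. If $a_j^m$ is Stage-2 matched and $p_i$'s displaced slot is Stage-2, the contradiction is immediate from Stage-2 stability, while the case where $p_i$ would need to propose ``back into" a Stage-1-matched worker is explicitly excluded by the Stage-2 rule that forbids firms from proposing to workers already matched in Stage 1. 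The remaining genuinely cross-stage possibility (e.g., $p_i$ trying to substitute a Stage-2 match for a type-$m'$ worker held by $p_{i'}$ in Stage 1) must be ruled out using the complementary-preference constraints: any such swap would either overflow $Q_i$ or leave some type-specific minimum $q_i^{m'}$ unsatisfied, so the resulting bundle is infeasible for $p_i$ and hence cannot make $p_i$ strictly better off in the $\cmcp$ preference structure. This is exactly the reason the algorithm first locks in the minimum per-type quotas before filling residual slots.

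Finally, firm-optimality of the merged matching will follow from the firm-optimality guarantees in each stage: Stage 1 gives each firm its best stable partner in every type-$m$ sub-market, and Stage 2 gives each firm its best stable extension consistent with the Stage-1 assignment, so no alternative stable matching of the combined market respecting the complementary constraints can Pareto-improve any firm. I expect the cross-stage case analysis sketched above to be the only real technical step; everything else is a direct appeal to Gale--Shapley.
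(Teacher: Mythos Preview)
Your decomposition into Stage-1 stability, Stage-2 stability, and a cross-stage case analysis matches the paper. The gap is in how you dispatch the cross-stage cases. In your case 3 you invoke the Stage-2 rule that forbids $p_i$ from proposing to workers already matched in Stage 1, but that is a feature of the algorithm, not a stability argument: the fact that the procedure never \emph{considers} the pair $(p_i,a)$ is precisely what might allow $(p_i,a)$ to block the output, and you still owe an argument that $a$ does not prefer $p_i$ to her Stage-1 partner. Your case 4 then tries to close the remaining possibility via quota feasibility, but the arithmetic does not work. A one-for-one swap of a Stage-2 partner for a Stage-1 worker held by another firm keeps $|u(p_i)|=Q_i$ unchanged, and since every type minimum $q_i^m$ is already filled by $p_i$'s Stage-1 assignment alone, dropping any Stage-2 partner can never push a minimum below its floor. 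Neither ``overflow $Q_i$'' nor ``leave $q_i^{m'}$ unsatisfied'' is triggered in the generic cross-stage situation, so the feasibility route carries no force here.

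The paper does not invoke the quota constraints at all for cross-stage stability. Its argument is a proposal-history one: because firms propose down their (sub-)preference lists in both stages, any worker that $p_i$ strictly prefers to one of its final partners has already been proposed to by $p_i$; such a worker has therefore either been retained by $p_i$ or has rejected $p_i$ in favor of a firm she likes better, and in either case cannot form a blocking pair with $p_i$. This ``already proposed, hence already matched or already rejected'' step is the idea missing from your sketch; the complementary-preference structure plays no role in the paper's stability proof.
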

\vspace{-5mm}
\begin{proof}
The sketch proof of the stability property of $\mmts$ is two steps, naturally following the design of $\mmts$. The first match is conducted in parallel, and the output is stable and guaranteed by \citep{gale1962college}. As the need of $\mmts$, before the second match, firms without leftover quotas ($\tilde{Q} = 0$) will quit the second round of matching, which will not affect the stability. After the quota sanitizing stage, firms and leftover workers will continue to join in the second matching stage, where firms do not need to consider the type of workers designed by double matching. And the DA algorithm still provides a stable result based on each firm's \emph{sub-preference} list. The reason is that for firm $p_{i}$, all previous possible favorite workers have been proposed in the first match. If they are matched in the first match, they quit together, which won't affect the stability property; otherwise, the worker has a better candidate (firm) and has already rejected the firm $p_{i}$. So for each firm $p_{i}$, it only needs to consider a sub-preference list excluding the already matched workers in the first match and the proposed workers in the first match. It will provide a stable match in the second match and won't be affected by the first match. So, the overall double matching is a stable algorithm.
The detailed proof can be found in Appendix Section \ref{supp-thm:  stability of mmts}.
\end{proof}
\vspace{-4mm}
\noindent

\subsection{Bayesian Regret Upper Bound}
\label{sec: regret}
Next, we provide $\mmts$'s Bayesian total firm-optimal regret upper bound. 
\begin{thm}
\label{thm: regret upper bound}
Assume $K_{\max} = \max\{K_{1},..., K_{M}\}, K = \sum_{m=1}^{M}K_{m}$, with probability $1-1/QT$, when all firms follow the $\mmts$ algorithm, firms together will suffer the Bayesian expected regret 
$$\mathfrak{R}(T) \leq 8Q\log(QT) \sqrt{K_{\max}T} + NK/Q.$$
\end{thm}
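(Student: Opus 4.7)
The plan is to follow the Bayesian Thompson sampling analysis template of Russo and Van Roy, adapted to the doubly-matched, multi-type setting. The crucial structural fact to exploit is that both the firm-optimal match $\overline{u}_i^m$ under the true preference parameter $\theta$ and the match $u_t^m(p_i)$ produced under the sampled $\theta_t$ are deterministic outputs of the same double-matching operator, so by the Thompson sampling posterior-matching property they share the same conditional distribution given the history $\mathcal{H}_{t-1}$.

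First I would introduce, for each firm--type pair $(i,m)$ and each arm $a$, a UCB surrogate $U_t^m(a) = \widehat{\mu}_{i,a}^m(t-1) + w_t^m(a)$, where $w_t^m(a) \asymp \sigma\sqrt{\log(QT)/N_t^m(a)}$ is the sub-Gaussian confidence width and $N_t^m(a)$ counts past matches between $p_i$ and $a$. The per-round, per-type regret contribution then decomposes as
\begin{equation*}
\mu_{i,\overline{u}_i^m} - \mu_{i,u_t^m(p_i)} = \bigl[\mu_{i,\overline{u}_i^m} - U_t^m(\overline{u}_i^m)\bigr] + \bigl[U_t^m(\overline{u}_i^m) - U_t^m(u_t^m(p_i))\bigr] + \bigl[U_t^m(u_t^m(p_i)) - \mu_{i,u_t^m(p_i)}\bigr].
\end{equation*}
On the good event $\mathcal{G}$ that all UCBs are valid, the first bracket is nonpositive. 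The second bracket has zero conditional expectation: because $\theta_t \mid \mathcal{H}_{t-1} \stackrel{d}{=} \theta \mid \mathcal{H}_{t-1}$ and $U_t^m$ is $\mathcal{H}_{t-1}$-measurable, $\mathbb{E}[U_t^m(\overline{u}_i^m)\mid\mathcal{H}_{t-1}] = \mathbb{E}[U_t^m(u_t^m(p_i))\mid\mathcal{H}_{t-1}]$. Only the width at the selected arm remains.

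Next I would bound the cumulative width per firm--type pair. Viewing the $q_i^m$ type-$m$ quota slots of firm $p_i$ as $K_m$-armed subproblems and applying the standard pigeonhole/eluder-dimension argument of Russo and Van Roy (2013) to $\sum_t 1/\sqrt{N_t^m(\cdot)}$ yields a width-sum of order $q_i^m \log(QT)\sqrt{K_m T}$. Summing over types with $K_m \le K_{\max}$ and $\sum_m q_i^m \le Q_i$ gives a per-firm bound of order $Q_i \log(QT)\sqrt{K_{\max}T}$, and summing over firms with $\sum_i Q_i = Q$ yields the leading $8Q\log(QT)\sqrt{K_{\max}T}$ term. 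To handle the complement of $\mathcal{G}$, I would apply a sub-Gaussian tail bound together with a union bound over the $NK$ firm--arm pairs and time steps, yielding failure probability at most $1/(QT)$; the crude per-round worst-case regret multiplied by this probability produces the additive $NK/Q$ slack.

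The main obstacle is justifying the zero-expectation identity for the middle bracket. In a single-agent bandit the selected arm is literally the $\arg\max$ of the sampled means, so Russo--Van Roy's identity is immediate; here $u_t^m(p_i)$ is a global output of the firm-proposing DA applied to the \emph{entire} sampled preference profile and couples with every other firm's sampled preferences. The resolution is to observe that the double-matching operator is a deterministic map from $\theta_t$ (together with the known worker-side preferences) to the full set of matches, so the conditional distributional equality of $\theta_t$ and $\theta$ given $\mathcal{H}_{t-1}$ transfers through the map. A secondary difficulty is that the widths are summed along a data-dependent trajectory rather than an i.i.d. sequence, which is exactly what the eluder-dimension control handles; the appearance of $K_{\max}$ rather than $K$ reflects that each firm--type decision is confined to an arm-space of size at most $K_{\max}$.
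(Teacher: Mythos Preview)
Your proposal is correct and follows the same Russo--Van Roy posterior-sampling template the paper uses: the three-bracket UCB decomposition, the posterior-matching identity to kill the middle bracket, an eluder/pigeonhole bound on the surviving width sum, and a Hoeffding union bound for the bad event yielding the $NK/Q$ term. Your explicit discussion of why the posterior-matching identity survives the multi-agent coupling (the double-matching operator is a deterministic map of the full sampled profile, so the distributional equality of $\theta_t$ and $\theta$ given history transfers through it) is actually more careful than the paper, which simply asserts $\mathbb{P}(\cA_{i,t}^m\in\cdot\mid H_{i,t}^m)=\mathbb{P}(\cA_i^{m,*}\in\cdot\mid H_{i,t}^m)$.

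One small accounting slip to fix: in your width-sum step you count only the $q_i^m$ first-match slots per firm--type pair and then invoke $\sum_m q_i^m\le Q_i$. This misses the $\widetilde{Q}_i=Q_i-\sum_m q_i^m$ second-match slots, which can land in any type. The paper handles this by working with $|\cA_{i,t}^m|$, the \emph{actual} number of type-$m$ matches (first and second stage combined), and using $\sum_m|\cA_{i,t}^m|=Q_i$; it then globally reorders all $QT$ confidence widths and bounds the $l$-th largest via $l\le(1+4/\tilde z_l^2)\,QK_{\max}$ before summing. Your per-slot argument is easily patched the same way: every slot, whether produced in the first or second match, draws from a typed arm space of size at most $K_{\max}$, so summing over all $Q_i$ slots per firm still yields $Q_i\log(QT)\sqrt{K_{\max}T}$ and hence $Q\log(QT)\sqrt{K_{\max}T}$ overall.
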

\vspace{-4mm}
\begin{proof}
     The detailed proof can be found in Appendix \ref{supp-sec: regret upper bound}.
\end{proof}
\vspace{-4mm}
\noindent
\textit{Remark}. The derived Bayesian regret bound, which is dependent on the square root of the time horizon $T$ and a logarithmic term, is nearly rate-optimal. Additionally, we examine the dependence of this regret bound on other key parameters. The first of which is a near-linear dependency on the total quota $Q$. Secondly, the regret bound is dependent only on the \textit{square root} of the maximum worker $K_{\max}$ of one type, as opposed to the total number of workers, $\sum_{m=1}^{M}K_{m}$ in previous literature  \citep{liu2020competing, jagadeesan2021learning}. This highlights the ability of our algorithm, $\mmts$, to effectively capture the interactions of multiple types of matching in $\cmcp$ for the adaptation to the large market ($K$).
The second term in the regret is a constant, which is only dependent on constants $N, K$, and the total quota $Q$. Notably, if we assume that each $q_{i} = 1$ and $Q_{i} = M$, then $NK/Q$ will be reduced to $NK/(NM) = K/M$, which is an unavoidable regret term due to the exploration in bandits \citep{lattimore2020bandit}. This also demonstrates that the Bayesian total cumulative firm-optimal exploration regret is only dependent on the \emph{average} number of workers of each type available in the market, as opposed to the \textit{total} number of workers or the maximum number of workers available of all types. Additionally, if one $Q_{i}$ is dominant over other firms' $Q_{i}$, then the regret will mainly be determined by that dominant quota $Q_{i}$ and $K_{\max}$, highlighting the inter-dependence of this complementary matching problem.

\subsection{Incentive-Compatibility}
\label{sec: strategy}
In this section, we discuss the incentive-compatibility property of $\mmts$. That is if one firm does not match the worker that $\mmts$ (platform) recommended when all other firms follow $\mmts$ recommended matching objects, which is equivalent to that firm submitting ranking preferences different from the sampled ranking list from $\mmts$, and we know that firm cannot benefit (matched with a better worker than his optimal stable matching worker) over a sublinear order. 
As we know, 
\citep{dubins1981machiavelli}
discussed the \emph{Machiavelli} firm could not benefit from incorrectly stating their true preference when there exists a unique stable matching. However, when one side's preferences are unknown and need to be learned through data, this result no longer holds. Thus, the maximum benefits that can be gained by the Machiavelli firm are under-explored in the setting of learning in matching. \citep{liu2020competing} discussed the benefits that can be obtained by Machiavelli firms when other firms follow the centralized-UCB algorithm with the problem setting of one type of worker and quota equal one in the market.

We now show in $\cmcp$,  when all firms except one $p_{i}$ accept their $\mmts$ recommended workers from the 
matching platform, the firm $p_{i}$ has an incentive also to follow the sampling rankings in a \emph{long horizon}, so long as the matching result do not have multiple stable solutions. Now we establish the following lemma, which is an upper bound of the expected number of pulls that a firm $p_{i}$ can match with a $m$-type worker that is better than their optimal $m$-type workers, regardless of what workers they want to match.

Let's use $\cH_{i,l}^{m}$ to define the achievable \emph{sub-matching} set of  $u^{m}$ when all firms follow the $\mmts$, which represents firm $p_{i}$ and $\mw$ $a_{l}^{m}$ is matched such that $a_{l}^{m} \in u_{i}^{m}$. Let \textUpsilon$_{u^{m}}(T)$ be the number of times sub-matching $u^{m}$ is played by time $t$. We also provide the blocking triplet in a matching definition as follows.

\begin{defn}[Blocking triplet]
A blocking triplet $(p_{i}, a_{k}, a_{k'})$ for a matching $u$ is that there must exist a firm $p_{i}$ and worker $a_{j}$ that they both prefer to match with each other than their current match. That is, if $a_{k'} \in u_{i}$, $\mu_{i, k'} < \mu_{i, k}$ and worker $a_{k}$ is either unmatched or $\pi_{k,i} < \pi_{k, u^{-1}(k)}$.  
\end{defn}

The following lemma presents the upper bound of the number of matching times of $p_{i}$ and $a_{l}^{m}$ by time $T$, where $a_{l}^{m}$ is a \emph{super optimal} $\mw$ (preferred than all stable optimal $\mw$s under true preferences), when all firms follow the $\mmts$.

\begin{lem}
\label{lem: IC lemma}
Let \textUpsilon$_{i,l}^{m}(T)$ be the number of times a firm $p_{i}$ matched with a $m$-type worker such that the mean reward of $a_{l}^{m}$ for firm $p_{i}$ is greater than $p_{i}$'s optimal match $\overline{u}_{i}^{m}$, which is $\mu_{i,a_{l}^{m}}^{m} > \underset{a_{j}^{m} \in \overline{u}_{i}^{m}}{\max}\mu_{i, j}^{m}$. Then the expected number of matches between $p_{i}$ and $a_{l}^{m}$ is upper bounded by
\begin{equation*}
\begin{aligned}
&\bE[\text{\textUpsilon}_{i,l}^{m}(T)] \leq 
\underset{S^{m} \in \cC(\cH_{i,l}^{m})}{\min}\\
&\sum_{(p_{j}, a_{k}^{m}, a_{k'}^{m})\in S^{m}}
        \big(
            C^{m}_{i,j,k'}(T) +
            \frac{\log(T)}{d(\mu_{j,\overline{u}_{i,\min}^{m}}, \mu_{j, k'})}
        \big),
\end{aligned}
\end{equation*}
where $\overline{u}_{i,\min}^{m} = \underset{a_{k}^{m} \in \overline{u}_{j}^{m}}{\argminE}\hspace{1mm} \mu_{i,k}^{m}$, and  $C^{m}_{i,j,k'} = \cO((\log(T))^{-1/3})$.
\end{lem}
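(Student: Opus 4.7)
The plan is to adapt the blocking-triplet argument of \citet{liu2020competing} to the $\mmts$ setting with quotas and $M$ worker types. First I would show that every sub-matching $u^m\in\cH_{i,l}^m$ is \emph{unstable} under the true preferences: since $\mu_{i,a_l^m}^m$ strictly exceeds $\mu_{i,k}^m$ for every $a_k^m\in\overline{u}_i^m$, having $a_l^m$ matched to $p_i$ while $p_i$ is excluded from its stable optimum forces at least one blocking triplet somewhere in $u^m$. Consequently, any family $S^m\in\cC(\cH_{i,l}^m)$ whose triplets cover $\cH_{i,l}^m$ yields
\[
\text{\textUpsilon}_{i,l}^m(T)\ \le\ \sum_{(p_j,a_k^m,a_{k'}^m)\in S^m}\text{\textUpsilon}_{(j,k,k')}(T),
\]
where $\text{\textUpsilon}_{(j,k,k')}(T)$ counts rounds $t\le T$ in which $(p_j,a_k^m,a_{k'}^m)$ is a realized blocking triplet of $u^m_t$. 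Taking the minimum over $S^m$ at the end is what produces the outer $\min_{S^m\in\cC(\cH_{i,l}^m)}$.

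Next I would translate the event ``$(p_j,a_k^m,a_{k'}^m)$ blocks $u^m_t$'' into a statement about $p_j$'s sampled means. Because every DA invocation in Algorithm~\ref{algo:db-matching} is run on the estimated ranks $\widehat{\V{r}}^m(t)$ and worker-to-firm preferences $\V{\pi}^m$ are fixed and known, the only way for $p_j$ to hold $a_{k'}^m$ while $a_k^m$ would accept $p_j$ is $\widehat{\mu}_{j,k'}^m(t)>\widehat{\mu}_{j,k}^m(t)$ at round $t$. Since the blocking definition forces $\mu_{j,k}>\mu_{j,k'}$, this reduces $\text{\textUpsilon}_{(j,k,k')}(T)$ to a single-agent TS inversion count for firm $p_j$ on a pair of type-$m$ arms, eliminating the market structure.

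For the single-agent TS inversion count I would invoke the concentration / anti-concentration decomposition of \citet{agrawal2012analysis}. The rounds split into two regimes: (i) rounds in which the posterior of $a_{k'}^m$ is already well-concentrated near $\mu_{j,k'}$ and the sample still beats the reference mean $\mu_{j,\overline{u}_{i,\min}^m}$, which contributes the Lai--Robbins term $\log(T)/d(\mu_{j,\overline{u}_{i,\min}^m},\mu_{j,k'})$ through a standard KL-change-of-measure; and (ii) rounds in which either the empirical estimate of $a_{k'}^m$ or the posterior of the reference arm is atypical, contributed as $C^m_{i,j,k'}(T)$. To squeeze the latter to $\cO((\log T)^{-1/3})$ rather than $\cO(1)$, I would tune the concentration threshold at scale $(\log T)^{-1/3}$, balancing a sub-Gaussian tail bound (coming from the reward noise $\epsilon_{j,k'}^m$) against a Gaussian-posterior anti-concentration lemma for $p_j$'s sampled means. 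Summing over the triplets in $S^m$ and minimizing over $S^m\in\cC(\cH_{i,l}^m)$ then yields the stated bound.

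The hard part will be the bookkeeping between the two DA stages of $\mmts$: a blocking triplet of $u^m_t$ may originate in the first (type-specific) DA, in the second (leftover-quota) DA, or from their interaction through the sanitization step, and I must verify that in every case the sampled inversion $\widehat{\mu}_{j,k'}^m(t)>\widehat{\mu}_{j,k}^m(t)$ is \emph{necessarily} realized so that the single-pair argument applies uniformly. Once this reduction is secured, the remainder reuses standard single-agent TS regret technology; a secondary but routine point is to check that the sub-matching space $\cH_{i,l}^m$ is finite (bounded by the combinatorics of $N$, $K_{\max}$, $Q$) so that the $\min$ over $\cC(\cH_{i,l}^m)$ is attained.
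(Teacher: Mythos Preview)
Your plan follows the paper's own argument almost step for step: establish that any $u^m\in\cH_{i,l}^m$ is unstable under the true preferences, cover $\cH_{i,l}^m$ by blocking triplets, reduce each triplet to a single-firm TS ranking inversion, and sum. Two points of comparison are worth noting.

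First, you skate over the crucial refinement that the blocking triplet can always be taken with $p_j\neq p_i$. The paper does not merely assert instability; it argues by contradiction, invoking Theorem~4.2 of \citet{abeledo1995paths} (paths to stability via gender-consistent satisfaction of blocking pairs): if every blocking triplet involved only $p_i$, then iteratively resolving them could only improve $p_i$'s match, so $p_i$ would retain $a_l^m$ (or better) in the resulting stable matching, contradicting that $a_l^m$ dominates $p_i$'s firm-optimal set $\overline{u}_i^m$. This step is what makes the lemma reusable in Theorem~\ref{thm: IC}, where $p_i$ may be strategic and you cannot apply a TS bound to $p_i$'s own samples. Your outline (``forces at least one blocking triplet somewhere in $u^m$'') would leave open the possibility that the only blocking triplet involves $p_i$ on one of its other quota slots, which in the strategic setting gives you nothing; you should make the $p_j\neq p_i$ argument explicit rather than fold it into the citation of \citet{liu2020competing}.

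Second, for the single-firm inversion count the paper simply invokes \citet{komiyama2015optimal}, which directly yields the KL bound $\log(T)/d(\mu_{j,\overline{u}_{i,\min}^m},\mu_{j,k'})$ with residual $C_{i,j,k'}^m(T)=\cO((\log T)^{-1/3})$. Your proposed route through \citet{agrawal2012analysis} with a hand-tuned $(\log T)^{-1/3}$ threshold would reproduce that analysis from scratch; it is correct in spirit but unnecessary, and the Agrawal--Goyal bounds as stated are not in KL form, so you would in fact be redoing the Komiyama--Honda--Nakagawa argument anyway. The two-stage DA bookkeeping you flag as ``the hard part'' is not treated separately in the paper: once the blocking-triplet-to-inversion reduction is framed at the level of the full estimated ranking $\widehat{\V r}^m(t)$, the double-matching structure plays no further role.
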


Then we show the benefit (lower bound of the regret) of Machiavelli firm $p_{i}$ can gain by not following the $\mmts$ recommended workers. Let's define the \emph{super reward gap} as $\overline{\Delta}_{i,l}^{m} = \underset{a_{j}^{m} \in \overline{u}_{i}^{m}}{\max}\mu_{i, j}^{m} - \mu_{i, l}^{m}$, where $a_{l}^{m} \notin \overline{u}_{i}^{m}$.
\begin{thm}
\label{thm: IC}
Suppose all firms other than firm $p_{i}$ follow the preferences according to the $\mmts$ to the centralized platform. Then the following upper bound on firm $p_{i}$'s optimal regret for $m$-type workers holds:
\begin{equation*}
    \begin{aligned}
        &R_{i}^{m}(T,\theta) \geq \sum_{l: \overline{\Delta}_{i,l}^{m} < 0} \overline{\Delta}_{i,l}^{m}
        \Bigg[
        \underset{S^{m} \in \cC(\cH_{i,l}^{m})}{\min}
        \\
        &\sum_{(p_{j}, a_{k}^{m}, a_{k'}^{m})\in S^{m}}
        \bigg(
        C^{m}_{i,j,k'} +
        \frac{\log(T)}{d(\mu_{j, \overline{u}_{i,\min}^{m}}, \mu_{j, k'})}
        \bigg)
        \Bigg].
    \end{aligned}
\end{equation*}
\end{thm}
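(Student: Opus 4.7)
The plan is to start from the regret definition and exploit the sign structure of the reward gaps. For each $m$-type worker $a_l^m$ that firm $p_i$ can be matched with, write $\text{\textUpsilon}_{i,l}^m(T)$ for the number of rounds in which $a_l^m$ appears in $u_t^m(p_i)$. Collecting the per-round contributions of $R_i^m(T,\theta)$ according to the matched worker gives a pathwise decomposition of the form
\[
R_i^m(T,\theta) \;=\; \sum_{l \in [K_m]} \text{\textUpsilon}_{i,l}^m(T)\, (\mu_{i,\overline{u}_i^m} - \mu_{i,l}^m),
\]
so that $R_i^m(T,\theta)$ is a weighted sum where each coefficient is either $\overline{\Delta}_{i,l}^m \ge 0$ (for workers that are not super-optimal, including those in $\overline{u}_i^m$) or $\overline{\Delta}_{i,l}^m < 0$ (for super-optimal workers).

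Since every contribution with $\overline{\Delta}_{i,l}^m \ge 0$ is non-negative, I would drop those terms to obtain the lower bound
\[
R_i^m(T,\theta) \;\ge\; \sum_{l: \overline{\Delta}_{i,l}^m < 0} \overline{\Delta}_{i,l}^m \cdot \text{\textUpsilon}_{i,l}^m(T).
\]
Taking expectation and exploiting the sign, I would note that because $\overline{\Delta}_{i,l}^m$ is strictly negative on this index set, an \emph{upper} bound on $\bE[\text{\textUpsilon}_{i,l}^m(T)]$ translates into a \emph{lower} bound on $\overline{\Delta}_{i,l}^m\, \bE[\text{\textUpsilon}_{i,l}^m(T)]$. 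Substituting the covering bound delivered by Lemma \ref{lem: IC lemma}, namely $\bE[\text{\textUpsilon}_{i,l}^m(T)] \le \min_{S^m \in \cC(\cH_{i,l}^m)} \sum_{(p_j,a_k^m,a_{k'}^m) \in S^m}\bigl(C^m_{i,j,k'}(T) + \log(T)/d(\mu_{j,\overline{u}_{i,\min}^m}, \mu_{j,k'})\bigr)$, yields exactly the expression claimed in the theorem.

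The main obstacle is ensuring that Lemma \ref{lem: IC lemma} remains applicable when firm $p_i$ deviates from the $\mmts$ recommendation. The decomposition above is unambiguous pathwise because the DA procedure outputs a well-defined matching at each round regardless of $p_i$'s reported ranking. What requires care is the lemma's underlying argument, which shows that every round in which $p_i$ is paired with a super-optimal worker $a_l^m$ must be certified by a blocking triplet in the realized market; that blocking event in turn forces some non-deviating firm $p_j$'s sampled posterior to misrank $\overline{u}_{i,\min}^m$ against $a_{k'}^m$, and the frequency of such misrankings is controlled by Thompson-sampling concentration for $p_j$. Because this concentration uses only the non-deviating firms' posteriors and is insensitive to $p_i$'s strategic report, the lemma continues to hold in the deviating regime, and the two substitutions above close the proof.
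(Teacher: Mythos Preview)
Your proposal is correct and matches the paper's approach: the paper simply states that the theorem ``can be directly derived from Lemma~\ref{lem: IC lemma}'' and gives no further detail, so your decomposition of $R_i^m(T,\theta)$ by matched worker, dropping the non-negative-gap terms, and converting the upper bound on $\bE[\text{\textUpsilon}_{i,l}^m(T)]$ into a lower bound on $\overline{\Delta}_{i,l}^m\,\bE[\text{\textUpsilon}_{i,l}^m(T)]$ via the sign of $\overline{\Delta}_{i,l}^m$ is exactly the intended argument. Your closing paragraph verifying that Lemma~\ref{lem: IC lemma} still applies when $p_i$ deviates (because the blocking-triplet argument and the Thompson-sampling concentration only use the non-deviating firms' posteriors) is a point the paper leaves implicit, so you have in fact supplied more justification than the original.
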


\noindent
This result can be directly derived from Lemma \ref{lem: IC lemma}.
Theorem \ref{thm: IC} demonstrates that there is no sequence of preferences that a firm can manipulate and does not follow $\mmts$ recommended workers that would achieve negative optimal regret and its absolute value greater than $\cO(\log T)$. Considering M types together for firm $p_{i}$, this magnitude remains $\cO(M\log T)$. Theorem \ref{thm: IC} confirms that when there is a unique stable matching, firms cannot gain a significant advantage in terms of firm-optimal stable regret due to incorrect estimated preferences if others follow $\mmts$. 

An example is provided in Section \ref{exp: negative regret} to illustrate this incentive compatibility property.
Figure \ref{fig: neg regret} illustrates the total regret, with solid lines representing the aggregate regret over all types for each firm and dashed lines representing each type's regret. It is observed that the type I regret of $p_{1}$ is negative, owing to the inaccuracies in the rankings estimated for both $p_{1}$ and $p_{2}$. A detailed analysis of this negative regret pattern is given in Appendix Section \ref{sec: learning parameters}.

\section{Experiments}
\label{sec: experiments}
In this section, we present simulation results to demonstrate the effectiveness of $\mmts$ in learning firms' unknown preferences. The detailed experiment setup and the result can be found in Appendix Section \ref{supp-sec: exps}. Section \ref{exp: negative regret} presents two examples to analyze the underlying causes of the novel phenomenon of negative regret (\textit{gain benefit by matching with over-optimal workers}) and large market effect. Appendix Section \ref{sec: learning parameters} showcases the distribution of learning parameters and provides insight into reasons for non-optimal stable matchings. Additionally, we demonstrate the robustness of $\mmts$ in large markets in Appendix \ref{sec: large market}. All simulation results are run in 100 trials.

\subsection{Two Examples}
\label{exp: negative regret}

\noindent
\textbf{Example 1.} There are $N=2$ firms, $M=2$ types of workers, and there are $K_{m}=5, \forall m \in [M]$. The quota $q_{i}^{m}$ for each type and each firm $p_{i}$ is 2, and the total quota/capacity for each firm is $Q_{i}=5$. The time horizon is $T = 2000$. 

\noindent
\textbf{Preferences.} True preferences from workers to firms and from firms to workers are all randomly generated. 
Preferences from workers to firms' $\{\V{\pi}^{m}\}_{m=1}^{M}$ are fixed and known. We use the data scientist (\emph{D} or \emph{DS}) and software developer engineer (\emph{S} or \emph{SDE}) as our example. The following are true preferences: $D_{1}: p_{1} \succ p_{2},
        D_{2}: p_{1} \succ p_{2}, 
        D_{3}: p_{2} \succ p_{1}, 
        D_{4}: p_{1} \succ p_{2},$
$D_{5}: p_{2} \succ p_{1},
        S_{1}: p_{1} \succ p_{2},
        S_{2}: p_{1} \succ p_{2},
        S_{3}: p_{2} \succ p_{1},$
$S_{4}: p_{2} \succ p_{1},
        S_{5}: p_{1} \succ p_{2},$
        and
\begin{equation*}
\label{eq: example preference 1}
    \begin{aligned}
       &\pi_{1}^{1}: 
          D_{4}\succ D_{2}\succ D_{3}\succ D_{5}\succ D_{1}, \\
        & \pi_{1}^{2}: 
          S_{1}\succ S_{4}\succ S_{5}\succ S_{2}\succ S_{3}, \\
        &\pi_{2}^{1}:
         D_{2}\succ D_{3}\succ D_{1}\succ D_{5}\succ D_{4},\\
         & \pi_{2}^{2}: 
         S_{4}\succ S_{2}\succ S_{5}\succ S_{1}\succ S_{3}.
    \end{aligned}
\end{equation*}
The true matching scores of each worker for firms are sampled from $U([0,1])$ and are available in Appendix Table \ref{table: mean reward of negative regret}.
In addition, feedback $y_{i,j}^{m}(t)$ (0 or 1) provided by firms is generated by $\text{Bernoulli}(\mu_{i,j}^{m}(t))$. 
If two sides' preferences are known, the firm optimal stable matching is $\bar{u}_{1} = \{[D_{2}, D_{4}], [S_{5}, S_{1}, S_{3}]\}$, $\bar{u}_{2} = \{ [D_{3}, D_{1}, D_{5}], [S_{4}, S_{2}]\}$ by the double matching algorithm. However, if firms' preferences are unknown, $\mmts$ can learn these unknown preferences and attain the optimal stable matching while achieving a sublinear regret for each firm. 

\noindent
\textbf{$\mmts$ Parameters.} We set priors $\alpha_{i,j}^{m,0} = \beta_{i,j}^{m,0} = 0.1, \forall i \in [N], \forall j \in [K_{m}], \forall m \in [M]$ to limit the strong impact of the prior belief. The update formula for each firm $p_{i}$ at time $t$ of the $m$-type worker $a_{j}^{m}$: $\alpha_{i,j}^{m,t+1} = \alpha_{i,j}^{m,t} + 1$ if the worker $a_{j}^{m}$ is matched with the firm $p_{i}$, that is $a_{j}^{m} \in \V{u}_{t}^{m}(p_{i})$, and the provided score is $y_{i,j}^{m}(t) = 1$; otherwise  $\alpha_{i,j}^{m,t+1} = \alpha_{i,j}^{m,t}$; $\beta_{i,j}^{m,t+1} =\beta_{i,j}^{m,t}+1$ if the provided score is $y_{i,j}^{m}(t) = 0$, otherwise $\beta_{i,j}^{m,t+1} = \beta_{i,j}^{m,t}$. For other unmatched pairs (firm, $\mw$), parameters are retained.

\textbf{Results.}
In Figure \ref{fig: neg regret}, we find that firms 1 and 2 achieve a total \textit{negative} sublinear regret and a total \textit{positive} sublinear regret separately (solid lines). However, we find that due to the incorrect rankings estimated for firms, firm 1 benefits from this non-optimal matching result to achieve \textit{negative} sublinear regret specifically for matching with type 1 workers often (blue dashed line). 

The occurrence of negative regret in multi-agent matching schemes presents an interesting phenomenon, contrasting the single-agent bandit problem wherein negative regret is non-existent.
In the context of the single-agent bandit problem, it is known that the best arm can be pulled, resulting in instantaneous regret that can attain zero but not take negative values. Conversely, in the multi-agent competing bandit problem, the oracle firm-optimal arm is determined by the true expected reward/utility, assuming knowledge of the true parameter $\mu^{*}$. However, due to the imprecise estimation of rankings/parameters at each time step, an exact match with the oracle policy cannot be guaranteed. This discrepancy leads to varied outcomes for firms in terms of benefits (negative instantaneous regret) or losses (positive instantaneous regret) from the matching process. Instances arise where firms may strategically submit inaccurate rankings to exploit these matches, a phenomenon termed Machiavelli/strategic behaviors. Nevertheless, over the long term, strategic actions do not yield utility gains in accordance with our policy.

\noindent
\textbf{Example 2.} We enlarge the market by expanding the DS market, particularly wanting to explore interactions between two types of workers. 
$N=2$ firms, $M=2$ types, $K_{1} = 20$ (DS) and $K_{2} = 6$  (SDE). 
The DS quota for two firms is $q_{1}^{1} = q_{2}^{1}= 1$ and the SDE quota for two firms is $q_{1}^{2} = q_{2}^{2}= 3$, and the total quota is $Q_{i} = 6$ for both firms. Preferences from firms to workers and workers to firms are still randomly generated. Therefore, the optimal matching result for each firm should consist of three workers for each type, and type II workers will be fully allocated in the first match, and the rest workers are all type II workers. All $\mmts$ initial parameters are set in the same procedure as in Example 1.

\textbf{Results.} In Figure \ref{fig: first match example 2}, we show when excessive type II workers exist, and type I workers are just right. Both firms can achieve positive sublinear regret. We find that since type II worker $K_{2} = q_{1}^{2} + q_{2}^{2} = 6$, which means in the first match stage, those type II workers are fully allocated into two firms. Thus, in the second match stage, the remaining quota would be all allocated to the type I workers for two firms. Two dotted lines represent type II regret suffered by two firms. Both firms can quickly find the type II optimal matching since finding the optimal type II match just needs the first stage of the match. However, the type I workers' matching takes a longer time to find the optimal matching (take two stages), represented by dashed lines, and both are positive sublinear regret. Therefore, these two types of matching are fully independent, which is different from Example 1.

\begin{figure}%
    \centering
    \subfigure{%
    \label{fig: neg regret}%
    \includegraphics[scale=0.22]{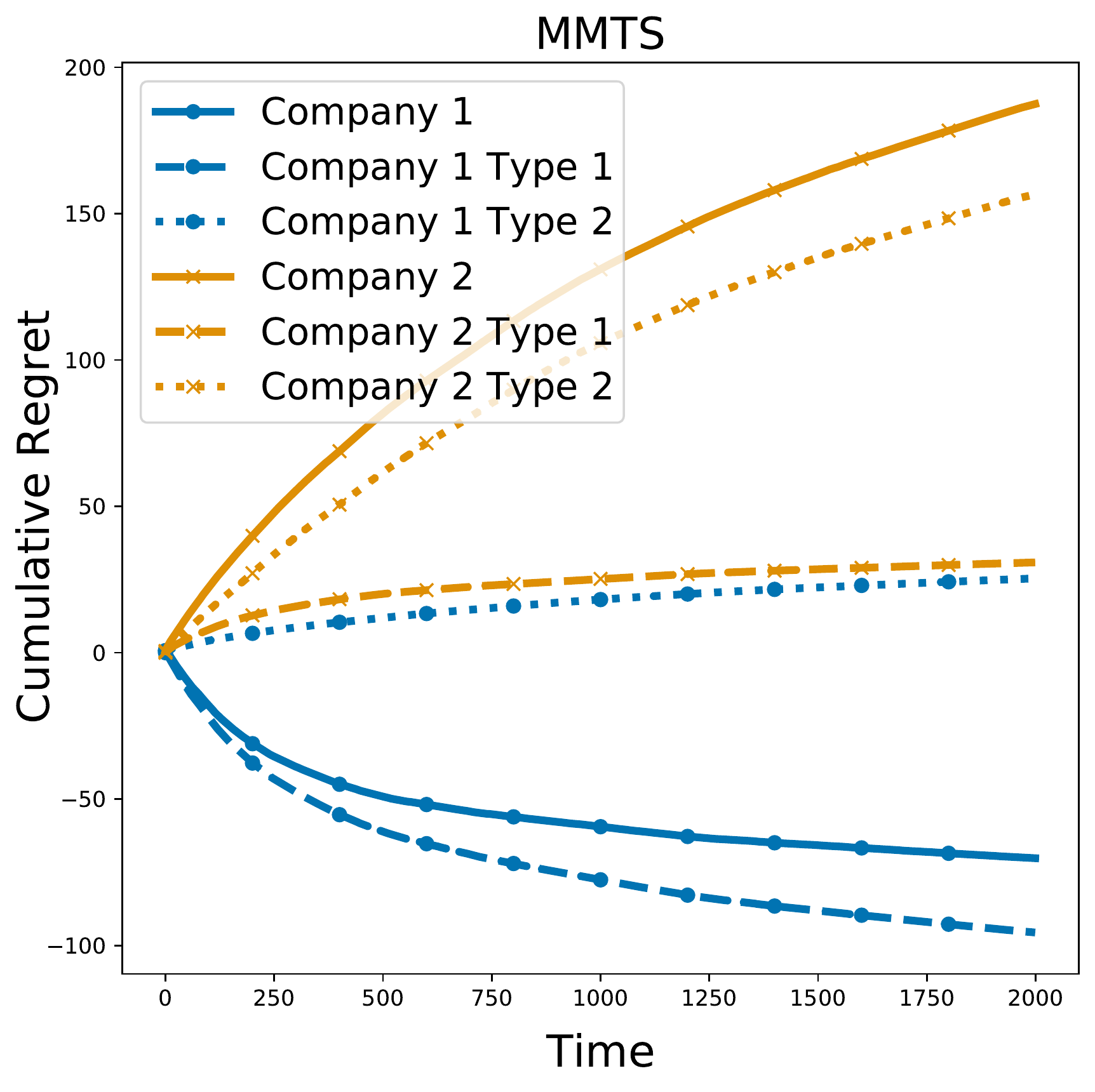}}%
    \quad
    \subfigure{%
    \label{fig: first match example 2}%
    \includegraphics[scale=0.22]{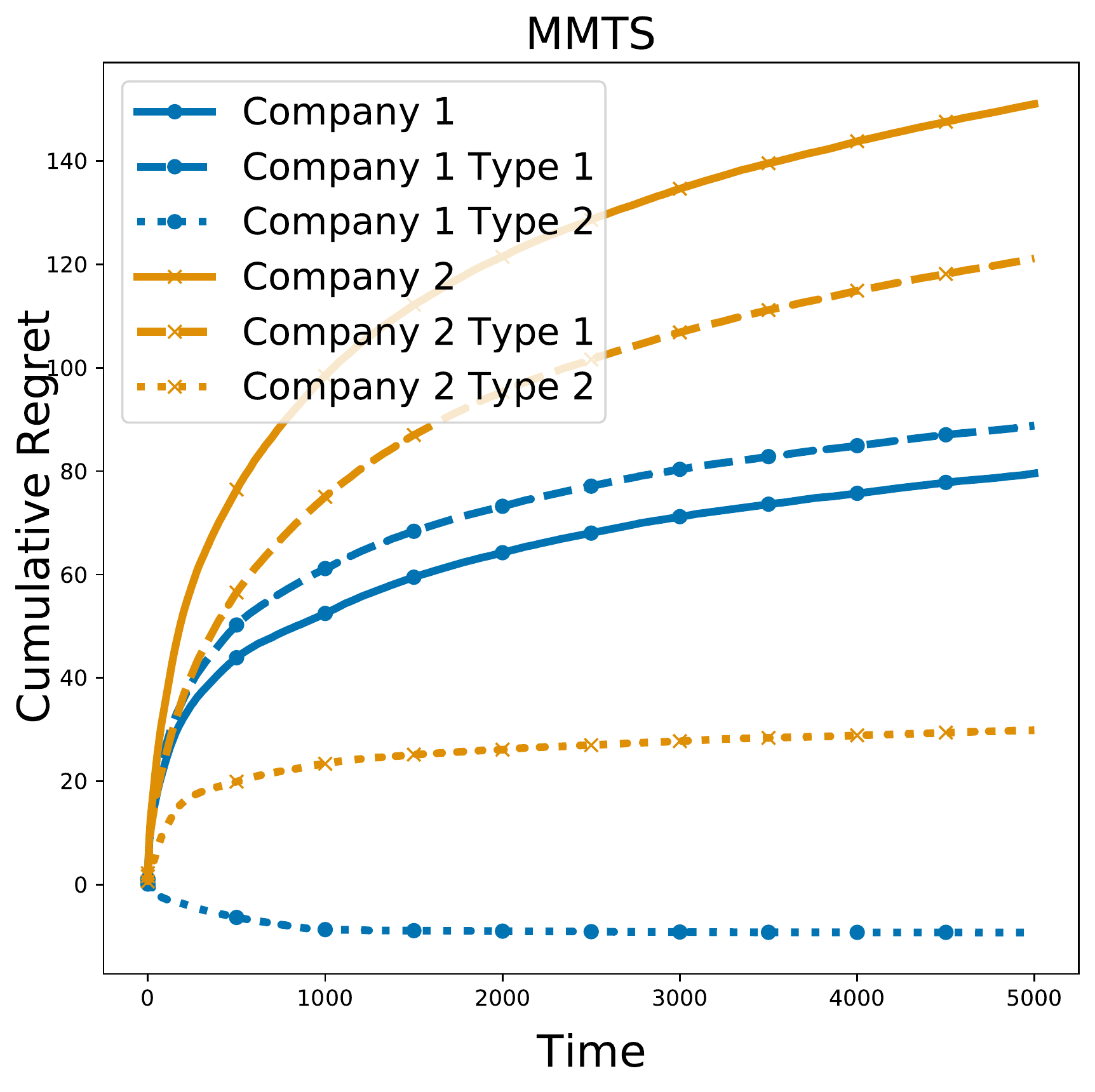}}%
    \vspace{-0.1in}
    \caption{Firms and their sub-types regret for Example 1 and, firms and their sub-types regret for Example 2.}
    \label{fig: neg and abundent}
\end{figure}
\vspace{-3mm}

\section{Related Works}
\label{sec: related work}
This section reviews two-sided matching markets with unknown preferences, multi-agent systems, assortment optimization, and matching markets.

\noindent
\textbf{Two-sided Matching Market with Unknown Preferences.} \citet{liu2020competing}  considers the multi-agent multi-armed competing problem in the centralized platform with explore-then-commit (ETC) and upper confidence bound  (UCB) style algorithms where preferences from agents to arms are unknown and need to be learned through streaming interactive data. 
\citet{jagadeesan2021learning} considers the two-sided matching problem where preferences from both sides are defined through dynamic utilities rather than fixed preferences and provide regret upper bounds over different contexts settings, and \citet{min2022learn} applies it to the Markov matching market.
\citet{cen2022regret} shows that if there is a transfer between agents, then the three desiderata (stability, low regret, and fairness) can be simultaneously achieved.
\citet{li2022rate} discusses the two-sided matching problem when the arm side has dynamic contextual information and preference is fixed from the arm side and proposes a centralized contextual ETC algorithm to obtain the near-optimal regret bound. 
Besides, there are a plethora of works discussing the two-sided matching problem in the decentralized markets \citep{liu2021bandit,basu2021beyond,sankararaman2021dominate,dai2021learningb,dai2021learninga, dai2022incentive, kong2022thompson, kong2023player, jagadeesan2022competition}. 
In particular, \citet{dai2021learninga} studies the college admission problem, provides an optimal strategy for agents, and shows its incentive-compatible property.

\textbf{Multi-Agent Systems and Game theory.}
There are some papers considering the multi-agent in sequential decision-making systems 
including the cooperative setting 
\citep{littman2001value, gonzalez2013discrete, zhang2018finite, perolat2018actor,shi2022multi}
and competing setting 
\citep{littman1994markov, auer2006logarithmic, zinkevich2007regret,wei2017online, fiez2019convergence,jin2020local}.
\citet{zhong2021can} studies the multi-player general-sum Markov games with one of the players designated as the leader and the other players regarded as followers and proposes efficient RL algorithms to achieve the Stackelberg-Nash equilibrium.

\noindent
\textbf{Assortment Optimization.}
To maximize the number of matches between the two sides (customers and suppliers), the platform must balance the inherent tension between recommending customers more potential suppliers to match with and avoiding potential collisions. \citet{ashlagi2022assortment} introduces a stylized model to study the above trade-off. 
Motivated by online labor markets \citet{aouad2022online} considers the online assortment optimization problem faced by a two-sided matching platform that hosts a set of suppliers waiting to match with a customer.
\citet{immorlica2021designing} considers a two-sided matching assortment optimization under the continuum model to achieve the optimized meeting rates and maximize the equilibrium social welfare.
\citet{rios2022improving} discusses the application of assortment optimization in dating markets. %
\citet{shi2022optimal} studies the minimum communication needed for a two-sided marketplace to reach an approximately stable outcome with the transaction price.

\noindent
\textbf{Two-sided Matching Markets with Known Preferences.}
One strand of related literature is two-sided matching, which is a stream of papers that started in \citet{gale1962college}. 
They proposed the DA algorithm with its application in the marriage problem and college admission problem.
A series of works \citep{knuth1976marriages, roth1982economics,roth1992two,roth2008deferred} discuss the theories of the DA algorithm such as stability, optimality, and incentive compatibility, and provide the practical use.
In particular, \citet{roth1985college} and \citet{sonmez1997manipulation} propose that the college admissions problem is not equivalent to the marriage problem, especially when a college can manipulate its capacity and preference.
Notably, in the hospital doctor matching example, 
since hospitals want diversity of specializations and demographic diversity, they care about the combination (group of doctors) they get.
\citet{roth1986allocation} shows that 
if all preferences are strict, and hospitals (firms) have responsive preferences, the set of doctors (workers) employed and positions filled is the same at every stable match. However, when there exists \emph{couples} in the preference list (not \emph{responsive preference} \citep{klaus2005stable}), it might make the set of stable matchings empty. Even when stable matchings exist, there need not be an optimal stable matching for either side. Later, \citet{ashlagi2011matching} revisits this couple matching problem and provides the \emph{sorted deferred acceptance algorithm} that can find a stable matching with high probability in large random markets.
\citet{biro2014hospitals} provides an integer programming model for hospital/resident problems with couples (HRC) and ties (HRCT).
\citet{manlove2017almost} releases the HRC with minimal blocking pairs and shows that if the preference list of every single resident and hospital is of length at most 2, their method can find a polynomial-time algorithm.
\citet{nguyen2018near, nguyen2022complementarities} find the stable matching in the nearby NRC problem, which is that the quota constraints are soft.
\citet{azevedo2018existence, che2019stable, greinecker2021pairwise} discuss the existence and uniqueness of stable matching with complementaries and its relationship with substitutable preferences in large economies.
Besides, there are also papers considering stability and optimality of the refugee allocation matching \citep{aziz2018stability, hadad2022improving}. 
\citet{tomoeda2018finding, boehmer2022fine} consider that firms have hard constraints both on the minimum and maximum type-specific quotas.

\section{Conclusion and Future Work}
In this paper, we proposed a new algorithm, $\mmts$ to solve the $\cmcp$. $\mmts$ builds on the strengths of TS for exploration and employs a double matching method to find a stable solution for complementary preferences and quota constraints. Through theoretical analysis, we show the effectiveness of the algorithm in achieving stability at every matching step under these constraints, achieving a $\widetilde{\mathcal{O}}(Q{\sqrt{K_{\max}T}})$-Bayesian regret over time, and exhibiting the incentive compatibility property.

There are several directions for future research.
One is to investigate more efficient exploration strategies to reduce the time required to learn the agents' unknown preferences. Another is to study scenarios where agents have indifferent preferences, and explore the optimal strategy for breaking ties. Additionally, it is of interest to incorporate real-world constraints such as budget or physical locations into the matching process, which could be studied using techniques from constrained optimization. Moreover, it is interesting to incorporate side information, such as background information of agents, into the matching process. This can be approached using techniques from recommendation systems or other machine learning algorithms that incorporate side information. Finally, it would be interesting to extend the algorithm to handle time-varying matching markets where preferences and the number of agents may change over time.

\section*{Acknowledgements}
We would like to thank the area chair and anonymous referees for their constructive suggestions
that improve the paper.
Xiaowu Dai acknowledges support of CCPR as a part of the Eunice Kennedy Shriver National Institute of Child Health and
Human Development (NICHD) population research infrastructure grant P2C-HD041022.

\section*{Impact Statement}
This paper aims to advance the two-sided matching market by addressing complementarity and unknown preferences. Our work has potential societal implications, including promoting efficient matching for couples and enhancing diversity in the matching.

\bibliography{learning}

\begin{thebibliography}{63}
\providecommand{\natexlab}[1]{#1}
\providecommand{\url}[1]{\texttt{#1}}
\expandafter\ifx\csname urlstyle\endcsname\relax
  \providecommand{\doi}[1]{doi: #1}\else
  \providecommand{\doi}{doi: \begingroup \urlstyle{rm}\Url}\fi

\bibitem[Abeledo \& Rothblum(1995)Abeledo and Rothblum]{abeledo1995paths}
Abeledo, H. and Rothblum, U.~G.
\newblock Paths to marriage stability.
\newblock \emph{Discrete applied mathematics}, 63\penalty0 (1):\penalty0 1--12,
  1995.

\bibitem[Agrawal \& Goyal(2012)Agrawal and Goyal]{agrawal2012analysis}
Agrawal, S. and Goyal, N.
\newblock Analysis of thompson sampling for the multi-armed bandit problem.
\newblock In \emph{Conference on learning theory}, pp.\  39--1. JMLR Workshop
  and Conference Proceedings, 2012.

\bibitem[Aouad \& Saban(2022)Aouad and Saban]{aouad2022online}
Aouad, A. and Saban, D.
\newblock Online assortment optimization for two-sided matching platforms.
\newblock \emph{Management Science}, 2022.

\bibitem[Ashlagi et~al.(2011)Ashlagi, Braverman, and
  Hassidim]{ashlagi2011matching}
Ashlagi, I., Braverman, M., and Hassidim, A.
\newblock Matching with couples revisited.
\newblock In \emph{Proceedings of the 12th ACM conference on Electronic
  commerce}, pp.\  335--336, 2011.

\bibitem[Ashlagi et~al.(2022)Ashlagi, Krishnaswamy, Makhijani, Saban, and
  Shiragur]{ashlagi2022assortment}
Ashlagi, I., Krishnaswamy, A.~K., Makhijani, R., Saban, D., and Shiragur, K.
\newblock Assortment planning for two-sided sequential matching markets.
\newblock \emph{Operations Research}, 70\penalty0 (5):\penalty0 2784--2803,
  2022.

\bibitem[Auer \& Ortner(2006)Auer and Ortner]{auer2006logarithmic}
Auer, P. and Ortner, R.
\newblock Logarithmic online regret bounds for undiscounted reinforcement
  learning.
\newblock \emph{Advances in neural information processing systems}, 19, 2006.

\bibitem[Auer et~al.(1995)Auer, Cesa-Bianchi, Freund, and
  Schapire]{auer1995gambling}
Auer, P., Cesa-Bianchi, N., Freund, Y., and Schapire, R.~E.
\newblock Gambling in a rigged casino: The adversarial multi-armed bandit
  problem.
\newblock In \emph{Proceedings of IEEE 36th annual foundations of computer
  science}, pp.\  322--331. IEEE, 1995.

\bibitem[Azevedo \& Hatfield(2018)Azevedo and Hatfield]{azevedo2018existence}
Azevedo, E.~M. and Hatfield, J.~W.
\newblock Existence of equilibrium in large matching markets with
  complementarities.
\newblock \emph{Available at SSRN 3268884}, 2018.

\bibitem[Aziz et~al.(2018)Aziz, Chen, Gaspers, and Sun]{aziz2018stability}
Aziz, H., Chen, J., Gaspers, S., and Sun, Z.
\newblock Stability and pareto optimality in refugee allocation matchings.
\newblock In \emph{Proceedings of the 17th International Conference on
  Autonomous Agents and MultiAgent Systems}, pp.\  964--972, 2018.

\bibitem[Basu et~al.(2021)Basu, Sankararaman, and Sankararaman]{basu2021beyond}
Basu, S., Sankararaman, K.~A., and Sankararaman, A.
\newblock Beyond $\log^2(t)$ regret for decentralized bandits in matching
  markets.
\newblock In \emph{International Conference on Machine Learning}, pp.\
  705--715. PMLR, 2021.

\bibitem[Bir{\'o} et~al.(2014)Bir{\'o}, Manlove, and
  McBride]{biro2014hospitals}
Bir{\'o}, P., Manlove, D.~F., and McBride, I.
\newblock The hospitals/residents problem with couples: Complexity and integer
  programming models.
\newblock In \emph{International Symposium on Experimental Algorithms}, pp.\
  10--21. Springer, 2014.

\bibitem[Boehmer \& Heeger(2022)Boehmer and Heeger]{boehmer2022fine}
Boehmer, N. and Heeger, K.
\newblock A fine-grained view on stable many-to-one matching problems with
  lower and upper quotas.
\newblock \emph{ACM Transactions on Economics and Computation}, 10\penalty0
  (2):\penalty0 1--53, 2022.

\bibitem[Cen \& Shah(2022)Cen and Shah]{cen2022regret}
Cen, S.~H. and Shah, D.
\newblock Regret, stability \& fairness in matching markets with bandit
  learners.
\newblock In \emph{International Conference on Artificial Intelligence and
  Statistics}, pp.\  8938--8968. PMLR, 2022.

\bibitem[Che et~al.(2019)Che, Kim, and Kojima]{che2019stable}
Che, Y.-K., Kim, J., and Kojima, F.
\newblock Stable matching in large economies.
\newblock \emph{Econometrica}, 87\penalty0 (1):\penalty0 65--110, 2019.

\bibitem[Dai \& Jordan(2021{\natexlab{a}})Dai and Jordan]{dai2021learningb}
Dai, X. and Jordan, M.
\newblock Learning in multi-stage decentralized matching markets.
\newblock \emph{Advances in Neural Information Processing Systems},
  34:\penalty0 12798--12809, 2021{\natexlab{a}}.

\bibitem[Dai \& Jordan(2021{\natexlab{b}})Dai and Jordan]{dai2021learninga}
Dai, X. and Jordan, M.~I.
\newblock Learning strategies in decentralized matching markets under uncertain
  preferences.
\newblock \emph{Journal of Machine Learning Research}, 22:\penalty0 260--1,
  2021{\natexlab{b}}.

\bibitem[Dai et~al.(2022)Dai, Qi, and Jordan]{dai2022incentive}
Dai, X., Qi, Y., and Jordan, M.~I.
\newblock Incentive-aware recommender systems in two-sided markets.
\newblock \emph{arXiv preprint arXiv:2211.15381}, 2022.

\bibitem[Dubins \& Freedman(1981)Dubins and Freedman]{dubins1981machiavelli}
Dubins, L.~E. and Freedman, D.~A.
\newblock Machiavelli and the gale-shapley algorithm.
\newblock \emph{The American Mathematical Monthly}, 88\penalty0 (7):\penalty0
  485--494, 1981.

\bibitem[Fiez et~al.(2019)Fiez, Chasnov, and Ratliff]{fiez2019convergence}
Fiez, T., Chasnov, B., and Ratliff, L.~J.
\newblock Convergence of learning dynamics in stackelberg games.
\newblock \emph{arXiv preprint arXiv:1906.01217}, 2019.

\bibitem[Gale \& Shapley(1962)Gale and Shapley]{gale1962college}
Gale, D. and Shapley, L.~S.
\newblock College admissions and the stability of marriage.
\newblock \emph{The American Mathematical Monthly}, 69\penalty0 (1):\penalty0
  9--15, 1962.

\bibitem[Gonz{\'a}lez-S{\'a}nchez \&
  Hern{\'a}ndez-Lerma(2013)Gonz{\'a}lez-S{\'a}nchez and
  Hern{\'a}ndez-Lerma]{gonzalez2013discrete}
Gonz{\'a}lez-S{\'a}nchez, D. and Hern{\'a}ndez-Lerma, O.
\newblock \emph{Discrete--time stochastic control and dynamic potential games:
  the Euler--Equation approach}.
\newblock Springer Science \& Business Media, 2013.

\bibitem[Greinecker \& Kah(2021)Greinecker and Kah]{greinecker2021pairwise}
Greinecker, M. and Kah, C.
\newblock Pairwise stable matching in large economies.
\newblock \emph{Econometrica}, 89\penalty0 (6):\penalty0 2929--2974, 2021.

\bibitem[Hadad \& Teytelboym(2022)Hadad and Teytelboym]{hadad2022improving}
Hadad, J. and Teytelboym, A.
\newblock Improving refugee resettlement: insights from market design.
\newblock \emph{Oxford Review of Economic Policy}, 38\penalty0 (3):\penalty0
  434--448, 2022.

\bibitem[Immorlica et~al.(2021)Immorlica, Lucier, Manshadi, and
  Wei]{immorlica2021designing}
Immorlica, N., Lucier, B., Manshadi, V., and Wei, A.
\newblock Designing approximately optimal search on matching platforms.
\newblock In \emph{Proceedings of the 22nd ACM Conference on Economics and
  Computation}, pp.\  632--633, 2021.

\bibitem[Jagadeesan et~al.(2021)Jagadeesan, Wei, Wang, Jordan, and
  Steinhardt]{jagadeesan2021learning}
Jagadeesan, M., Wei, A., Wang, Y., Jordan, M., and Steinhardt, J.
\newblock Learning equilibria in matching markets from bandit feedback.
\newblock \emph{Advances in Neural Information Processing Systems},
  34:\penalty0 3323--3335, 2021.

\bibitem[Jagadeesan et~al.(2022)Jagadeesan, Jordan, and
  Haghtalab]{jagadeesan2022competition}
Jagadeesan, M., Jordan, M.~I., and Haghtalab, N.
\newblock Competition, alignment, and equilibria in digital marketplaces.
\newblock \emph{arXiv preprint arXiv:2208.14423}, 2022.

\bibitem[Jin et~al.(2020)Jin, Netrapalli, and Jordan]{jin2020local}
Jin, C., Netrapalli, P., and Jordan, M.
\newblock What is local optimality in nonconvex-nonconcave minimax
  optimization?
\newblock In \emph{International conference on machine learning}, pp.\
  4880--4889. PMLR, 2020.

\bibitem[Klaus \& Klijn(2005)Klaus and Klijn]{klaus2005stable}
Klaus, B. and Klijn, F.
\newblock Stable matchings and preferences of couples.
\newblock \emph{Journal of Economic Theory}, 121\penalty0 (1):\penalty0
  75--106, 2005.

\bibitem[Knuth(1976)]{knuth1976marriages}
Knuth, D.~E.
\newblock Marriages stables.
\newblock \emph{Technical report}, 1976.

\bibitem[Knuth(1997)]{knuth1997stable}
Knuth, D.~E.
\newblock \emph{Stable marriage and its relation to other combinatorial
  problems: An introduction to the mathematical analysis of algorithms},
  volume~10.
\newblock American Mathematical Soc., 1997.

\bibitem[Komiyama et~al.(2015)Komiyama, Honda, and
  Nakagawa]{komiyama2015optimal}
Komiyama, J., Honda, J., and Nakagawa, H.
\newblock Optimal regret analysis of thompson sampling in stochastic
  multi-armed bandit problem with multiple plays.
\newblock In \emph{International Conference on Machine Learning}, pp.\
  1152--1161. PMLR, 2015.

\bibitem[Kong \& Li(2023)Kong and Li]{kong2023player}
Kong, F. and Li, S.
\newblock Player-optimal stable regret for bandit learning in matching markets.
\newblock In \emph{Proceedings of the 2023 Annual ACM-SIAM Symposium on
  Discrete Algorithms (SODA)}, pp.\  1512--1522. SIAM, 2023.

\bibitem[Kong et~al.(2022)Kong, Yin, and Li]{kong2022thompson}
Kong, F., Yin, J., and Li, S.
\newblock Thompson sampling for bandit learning in matching markets.
\newblock \emph{arXiv preprint arXiv:2204.12048}, 2022.

\bibitem[Lattimore \& Szepesv{\'a}ri(2020)Lattimore and
  Szepesv{\'a}ri]{lattimore2020bandit}
Lattimore, T. and Szepesv{\'a}ri, C.
\newblock \emph{Bandit algorithms}.
\newblock Cambridge University Press, 2020.

\bibitem[Li et~al.(2022)Li, Wang, Cheng, and Sun]{li2022rate}
Li, Y., Wang, C.-h., Cheng, G., and Sun, W.~W.
\newblock Rate-optimal contextual online matching bandit.
\newblock \emph{arXiv preprint arXiv:2205.03699}, 2022.

\bibitem[Littman(1994)]{littman1994markov}
Littman, M.~L.
\newblock Markov games as a framework for multi-agent reinforcement learning.
\newblock In \emph{Machine learning proceedings 1994}, pp.\  157--163.
  Elsevier, 1994.

\bibitem[Littman(2001)]{littman2001value}
Littman, M.~L.
\newblock Value-function reinforcement learning in markov games.
\newblock \emph{Cognitive systems research}, 2\penalty0 (1):\penalty0 55--66,
  2001.

\bibitem[Liu et~al.(2020)Liu, Mania, and Jordan]{liu2020competing}
Liu, L.~T., Mania, H., and Jordan, M.
\newblock Competing bandits in matching markets.
\newblock In \emph{International Conference on Artificial Intelligence and
  Statistics}, pp.\  1618--1628. PMLR, 2020.

\bibitem[Liu et~al.(2021)Liu, Ruan, Mania, and Jordan]{liu2021bandit}
Liu, L.~T., Ruan, F., Mania, H., and Jordan, M.~I.
\newblock Bandit learning in decentralized matching markets.
\newblock \emph{J. Mach. Learn. Res.}, 22:\penalty0 211--1, 2021.

\bibitem[Manlove et~al.(2017)Manlove, McBride, and Trimble]{manlove2017almost}
Manlove, D.~F., McBride, I., and Trimble, J.
\newblock “almost-stable” matchings in the hospitals/residents problem with
  couples.
\newblock \emph{Constraints}, 22\penalty0 (1):\penalty0 50--72, 2017.

\bibitem[Min et~al.(2022)Min, Wang, Xu, Wang, Jordan, and Yang]{min2022learn}
Min, Y., Wang, T., Xu, R., Wang, Z., Jordan, M.~I., and Yang, Z.
\newblock Learn to match with no regret: Reinforcement learning in markov
  matching markets.
\newblock \emph{arXiv preprint arXiv:2203.03684}, 2022.

\bibitem[Nguyen \& Vohra(2018)Nguyen and Vohra]{nguyen2018near}
Nguyen, T. and Vohra, R.
\newblock Near-feasible stable matchings with couples.
\newblock \emph{American Economic Review}, 108\penalty0 (11):\penalty0
  3154--69, 2018.

\bibitem[Nguyen \& Vohra(2022)Nguyen and Vohra]{nguyen2022complementarities}
Nguyen, T. and Vohra, R.
\newblock Complementarities and externalities.
\newblock \emph{Online and Matching-Based Market Design}, 2022.

\bibitem[Perolat et~al.(2018)Perolat, Piot, and Pietquin]{perolat2018actor}
Perolat, J., Piot, B., and Pietquin, O.
\newblock Actor-critic fictitious play in simultaneous move multistage games.
\newblock In \emph{International Conference on Artificial Intelligence and
  Statistics}, pp.\  919--928. PMLR, 2018.

\bibitem[Rios et~al.(2022)Rios, Saban, and Zheng]{rios2022improving}
Rios, I., Saban, D., and Zheng, F.
\newblock Improving match rates in dating markets through assortment
  optimization.
\newblock \emph{Manufacturing \& Service Operations Management}, 2022.

\bibitem[Roth(1982)]{roth1982economics}
Roth, A.~E.
\newblock The economics of matching: Stability and incentives.
\newblock \emph{Mathematics of operations research}, 7\penalty0 (4):\penalty0
  617--628, 1982.

\bibitem[Roth(1985)]{roth1985college}
Roth, A.~E.
\newblock The college admissions problem is not equivalent to the marriage
  problem.
\newblock \emph{Journal of economic Theory}, 36\penalty0 (2):\penalty0
  277--288, 1985.

\bibitem[Roth(1986)]{roth1986allocation}
Roth, A.~E.
\newblock On the allocation of residents to rural hospitals: a general property
  of two-sided matching markets.
\newblock \emph{Econometrica: Journal of the Econometric Society}, pp.\
  425--427, 1986.

\bibitem[Roth(2008)]{roth2008deferred}
Roth, A.~E.
\newblock Deferred acceptance algorithms: History, theory, practice, and open
  questions.
\newblock \emph{international Journal of game Theory}, 36\penalty0
  (3):\penalty0 537--569, 2008.

\bibitem[Roth \& Sotomayor(1992)Roth and Sotomayor]{roth1992two}
Roth, A.~E. and Sotomayor, M.
\newblock Two-sided matching.
\newblock \emph{Handbook of game theory with economic applications},
  1:\penalty0 485--541, 1992.

\bibitem[Russo \& Van~Roy(2013)Russo and Van~Roy]{russo2013eluder}
Russo, D. and Van~Roy, B.
\newblock Eluder dimension and the sample complexity of optimistic exploration.
\newblock \emph{Advances in Neural Information Processing Systems}, 26, 2013.

\bibitem[Russo \& Van~Roy(2014)Russo and Van~Roy]{russo2014learning}
Russo, D. and Van~Roy, B.
\newblock Learning to optimize via posterior sampling.
\newblock \emph{Mathematics of Operations Research}, 39\penalty0 (4):\penalty0
  1221--1243, 2014.

\bibitem[Russo et~al.(2018)Russo, Van~Roy, Kazerouni, Osband, Wen,
  et~al.]{russo2018tutorial}
Russo, D.~J., Van~Roy, B., Kazerouni, A., Osband, I., Wen, Z., et~al.
\newblock A tutorial on thompson sampling.
\newblock \emph{Foundations and Trends{\textregistered} in Machine Learning},
  11\penalty0 (1):\penalty0 1--96, 2018.

\bibitem[Sankararaman et~al.(2021)Sankararaman, Basu, and
  Sankararaman]{sankararaman2021dominate}
Sankararaman, A., Basu, S., and Sankararaman, K.~A.
\newblock Dominate or delete: Decentralized competing bandits in serial
  dictatorship.
\newblock In \emph{International Conference on Artificial Intelligence and
  Statistics}, pp.\  1252--1260. PMLR, 2021.

\bibitem[Shi et~al.(2022)Shi, Wan, Song, Luo, Song, and Zhu]{shi2022multi}
Shi, C., Wan, R., Song, G., Luo, S., Song, R., and Zhu, H.
\newblock A multi-agent reinforcement learning framework for off-policy
  evaluation in two-sided markets.
\newblock \emph{arXiv preprint arXiv:2202.10574}, 2022.

\bibitem[Shi(2022)]{shi2022optimal}
Shi, P.
\newblock Optimal matchmaking strategy in two-sided marketplaces.
\newblock \emph{Management Science}, 2022.

\bibitem[S{\"o}nmez(1997)]{sonmez1997manipulation}
S{\"o}nmez, T.
\newblock Manipulation via capacities in two-sided matching markets.
\newblock \emph{Journal of Economic theory}, 77\penalty0 (1):\penalty0
  197--204, 1997.

\bibitem[Thompson(1933)]{thompson1933likelihood}
Thompson, W.~R.
\newblock On the likelihood that one unknown probability exceeds another in
  view of the evidence of two samples.
\newblock \emph{Biometrika}, 25\penalty0 (3-4):\penalty0 285--294, 1933.

\bibitem[Tomoeda(2018)]{tomoeda2018finding}
Tomoeda, K.
\newblock Finding a stable matching under type-specific minimum quotas.
\newblock \emph{Journal of Economic Theory}, 176:\penalty0 81--117, 2018.

\bibitem[Wei et~al.(2017)Wei, Hong, and Lu]{wei2017online}
Wei, C.-Y., Hong, Y.-T., and Lu, C.-J.
\newblock Online reinforcement learning in stochastic games.
\newblock \emph{Advances in Neural Information Processing Systems}, 30, 2017.

\bibitem[Zhang et~al.(2018)Zhang, Yang, Liu, Zhang, and Basar]{zhang2018finite}
Zhang, K., Yang, Z., Liu, H., Zhang, T., and Basar, T.
\newblock Finite-sample analyses for fully decentralized multi-agent
  reinforcement learning.
\newblock \emph{arXiv preprint arXiv:1812.02783}, 2018.

\bibitem[Zhong et~al.(2021)Zhong, Yang, Wang, and Jordan]{zhong2021can}
Zhong, H., Yang, Z., Wang, Z., and Jordan, M.~I.
\newblock Can reinforcement learning find stackelberg-nash equilibria in
  general-sum markov games with myopic followers?
\newblock \emph{arXiv preprint arXiv:2112.13521}, 2021.

\bibitem[Zinkevich et~al.(2007)Zinkevich, Johanson, Bowling, and
  Piccione]{zinkevich2007regret}
Zinkevich, M., Johanson, M., Bowling, M., and Piccione, C.
\newblock Regret minimization in games with incomplete information.
\newblock \emph{Advances in neural information processing systems}, 20, 2007.

\end{thebibliography}
\bibliographystyle{icml2024}

\newpage
\appendix
\onecolumn

\setcounter{equation}{0}
\renewcommand{\theequation}{C.\arabic{equation}}
\bigskip

\begin{center}
{\large\bf\MakeUppercase{SUPPLEMENT TO Two-sided Competing Matching Recommendation Markets With Quota and Complementary Preferences Constraints}}\\
\end{center}
This supplement is organized as follows. In Section \ref{supp: fea of stable matching}, we discuss the feasibility of the matching problem with complementary preference and its corresponding assumption to secure a stable matching solution. In Section \ref{sec: com complexity}, we show the computational complexity of $\mmts$.
In Section \ref{supp-sec: insufficient exploration-ucb-vs-ts}, we exhibit why the centralized UCB suffers insufficient exploration through a toy example.
In Section \ref{supp-sec:misc}, we provide the fundamental Hoeffding concentration lemma for main theorems in this paper. 
In Section \ref{supp-thm:  stability of mmts}, we provide the stability property of $\mmts$.
In Section \ref{supp-sec: regret upper bound}, we give the detailed proof of the regret upper bound of $\mmts$ and decompose its proof into three parts, regret decomposition (\ref{supp-sec: regret deco}),  bound for confidence width (\ref{lem: confidence width}), and bad events' probabilities' upper bound (\ref{supp-sec: bad event prob upper bound}). In Section \ref{supp-lem: incentive comp}, we prove $\mmts$'s strategy-proof property. Besides, as a reference, we append the DA with type and without type algorithms in Section \ref{supp-sec: algos}. Finally, in Section \ref{supp-sec: exps}, we provide details of experiments and demonstrate the robustness of $\mmts$ in large markets.

\section{Feasibility of the Stable Matching}
\label{supp: fea of stable matching}
\paragraph{Assumption for the feasibility:}
For the two-sided finite market matching problem with complementary preferences, marginal preference is a sufficient condition for the feasibility. But for the large market, it requires more assumptions such as the substitutability and indifferences, etc,.  
The key difference between the finite and infinite market matching problem \citep{azevedo2018existence, greinecker2021pairwise} lies in the agents availability.
In the infinite market, we assume that there is an uncountable number of agents on both sides of the market. This essentially means that the number of agents is so large that it can be treated as continuous, and you can't assign a specific numerical value to it. An example of an infinite market could be the matching of agents is extremely large and cannot be practically counted.
In the finite market, the number of agents on both sides is limited and countable. You can assign a specific numerical value to the number of agents. An example could be the matching of agents where there is a definite small number of agents.
In our case, in the finite market, if the complementary preference can be marginalized (or referred as the responsive preference \citep{roth1985college}, $(a_{1}, b_{1}) >(a_{1}, b_{2})$ as long as $b_{1} > b_{2}$, verse visa for $(a_{1}, b_{1}) > (a_{2}, b_{1})$ as long as $a_{1} > a_{2}$), then based on our proposed double matching algorithm and Theory 1, it exists such a stable matching solution. However, as discussed in the related works in Section \ref{sec: related work}, if there exists couples in the preference list that cannot be marginalized, which could potentially lead to an empty set of stable matchings.
\citet{che2019stable} discussed that if there exists couples in the preference list in a infinite market (large) with a continuum of workers, provided that each firm’s choice is convex and changes continuously as the set of available workers changes. They proved the \emph{existence} and structure of stable matchings under preferences exhibiting substitutability and indifferences in a large market.

\section{Complexity}
\label{sec: com complexity}
Based on \citep{gale1962college, knuth1997stable}, the stable marriage problem's DA algorithm's worst total proposal number is $N^2 - 2N + 2 = \mathcal{O}(N^2)$ when the number of participants on both sides is equal ($N=K$). The computational complexity of the college admission matching problem with quota consideration is also $\mathcal{O}(NK)$.
$\mmts$ algorithm consists of two steps of matching. The computational complexity of the first step matching is $\mathcal{O}(\sum_{m=1}^{M}NK_{m})$ if we virtually consider each type's matching process is organized in parallel. The second step's computation cost is 
also $\mathcal{O}(\sum_{m=1}^{M}NK_{m})$. That is, in the first match, if all firms are matched with their best workers, this step meets the lower bound quota constraints. Then the second match will be reduced to the standard college admission problem without type consideration and the computational complexity is $\mathcal{O}(N\sum_{m=1}^{M}K_{m})$. So the total 
computational complexity is 
still $\mathcal{O}(\sum_{m=1}^{M}NK_{m})$, which is polynomial in the number of firm ($N$) and the number of workers $\sum_{m=1}^{M}K_{m}$.

\section{Incapable Exploration}
\label{supp-sec: insufficient exploration-ucb-vs-ts}
In this section, we show why the TS strategy has an advantage over the vanilla UCB method in estimating the ranks of workers.
We even find that centralized UCB does achieve linear firm-optimal stable regret in some cases. In the following example (Example 6 from \citep{liu2020competing}), we show the firm achieves linear optimal stable regret if follow the UCB algorithm.\footnote{Here we only consider one type of worker, and the firm's quota is one.}

Let $\cN = \{p_{1}, p_{2}, p_{3}\}$, $\cK_{m} = \{a_{1}, a_{2}, a_{3}\}$, and $M=1$, with true preferences given below:
\begin{equation*}
\begin{aligned}
    &p_{1}: a_{1} \succ a_{2} \succ a_{3}   \hspace{2cm} a_{1}: p_{2} \succ p_{3} \succ p_{1}\\
    &p_{2}: a_{2} \succ a_{1} \succ a_{3}   \hspace{2cm} a_{2}: p_{1} \succ p_{2} \succ p_{3}\\
    &p_{3}: a_{3} \succ a_{1} \succ a_{2}   \hspace{2cm} a_{3}: p_{3} \succ p_{1} \succ p_{2}\\
\end{aligned}    
\end{equation*}
The firm optimal stable matching is $(p_{1}, a_{1}), (p_{2}, a_{2}), (p_{3}, a_{3})$. However, due to incorrect ranking from firm $p_{3}$, $a_{1} \succ a_{3} \succ a_{2}$, and the output stable matching is $(p_1, a_{2}), (p_{2}, a_{1}), (p_{3}, a_{3})$ based on the DA algorithm. 
In this case, $p_{3}$ will never have a chance to correct its mistake because $p_{3}$ will never be matched with $a_{1}$ again and cause the upper confidence bound for $a_{1}$ will never shrink and result in this rank $a_{1} \succ a_{3}$. Thus, it causes that $p_{1}$ and $p_{2}$ suffer linear regret.
However, the TS is capable of avoiding this situation. By the property of sampling showed in Algorithm \ref{algo:ts-sampling}, firm $p_{1}$'s initial prior over worker $a_{1}$ is a uniform random variable, and thus $r_{3}(t) > r_{1}(t)$ (if we omit $a_{2}$) with probability $\widehat{\mu}_{3} \approx \mu_{3}$, rather than \emph{zero}! This differs from the UCB style method, which cannot update $a_{1}$'s upper bound due to lacking exploration over $a_{1}$. The benefit of TS is that it can occasionally explore different ranking patterns, especially when there exists such a previous example. 
In Figure \ref{fig:ts_vs_ucb}, we show a quick comparison of centralized UCB \citep{liu2020competing} in the settings shown above and $\mmts$ when $M=1, Q=1, N=3, K=3$. The UCB method occurs a linear regret in firm 1 and firm 2 and achieves a low matching rate (0.031)\footnote{We count 1 if the matching at time $t$ is fully equal to the optimal match when two sides' preferences are known. Then we take an average over the time horizon $T$.}. However, the TS method suffers a sublinear regret in firm 1 and firm 2 and achieves a high matching rate (0.741). All results are averaged over 100 trials. See Section \ref{app: ucb vs ts} for the experimental details.

\subsection{Section \ref{sec: insufficient exploration-ucb-vs-ts} Example - Insufficient Exploration}
\label{app: ucb vs ts}

We set the true matching score for three firms to $(0.8, 0.4, 0.2)$, $(0.5, 0.7, 0.2)$, $(0.6, 0.3, 0.65)$. All preferences from companies over workers can be derived from the true matching score. As we can view, company $p_{3}$ has a similar preference over $a_{1}$ (0.6) and $a_{3}$ (0.65). Thus, the small difference can lead the incapable exploration as described in Section \ref{sec: insufficient exploration-ucb-vs-ts} by the UCB algorithm.

\section{Hoeffding Lemma}
\label{supp-sec:misc}
\begin{lem}
\label{supp: concentration}
For any $\delta > 0$, with probability $1-\delta$, the confidence width for a $\mw$ $a_{j}^{m} \in \cA_{i,t}^{m}$ at time $t$ is upper bounded by 
\begin{equation}
    w_{i, \cF_{i, t}^{m}}^{m}(a_{j}^{m}) \leq \min \bigg(2\sqrt{\frac{\log(\frac{2}{\delta})}{n_{i,j}^{m}(t)}}, 1\bigg)
\end{equation}
where $n_{i,j}^{m}(t)$ is the number of times that the pair $(p_{i}, a_{j}^{m})$ has been matched at the start of round $t$.
\end{lem}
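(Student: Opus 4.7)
The plan is to derive this as a standard Hoeffding-type concentration bound on the empirical mean rewards, then relate that deviation to the diameter of the confidence set that defines the width $w_{i,\mathcal{F}_{i,t}^m}^m(\cdot)$. First I would recall that the width is (by definition in the eluder-dimension framework alluded to in Section 1) the diameter of the confidence set of plausible mean rewards consistent with the history $\mathcal{F}_{i,t}^m$; by symmetry of the sub-Gaussian tail, this diameter is exactly twice the Hoeffding slack placed around the empirical mean $\hat{\mu}_{i,j}^m(t)$.

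Second, I would invoke Hoeffding's inequality on the $n_{i,j}^m(t)$ i.i.d.\ sub-Gaussian noise terms $\epsilon_{i,j}^m(\tau)$ accumulated across the rounds in which the pair $(p_i, a_j^m)$ was matched, to obtain
\[
\Pr\!\left(\,|\hat{\mu}_{i,j}^m(t) - \mu_{i,j}^m| > \sqrt{\tfrac{\log(2/\delta)}{n_{i,j}^m(t)}}\,\right) \;\leq\; \delta,
\]
absorbing the sub-Gaussian parameter $\sigma$ into the constant via the normalization used in the paper (Bernoulli / bounded feedback). Doubling this slack yields the $2\sqrt{\log(2/\delta)/n_{i,j}^m(t)}$ term. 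The $\min$ with $1$ is then immediate because the rewards are bounded in $[0,1]$, so the confidence set is contained in $[0,1]$ and has diameter at most $1$; thus the width never exceeds $1$ regardless of $n_{i,j}^m(t)$.

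The main obstacle is that $n_{i,j}^m(t)$ is a data-dependent random count, since the matches assigned by $\mmts$ depend on the reward history. A naive Hoeffding bound requires the number of summands to be deterministic. The standard remedy I would use is a union bound over all possible realizations $n \in \{1,\dots,t\}$ of $n_{i,j}^m(t)$ together with the tower property (or equivalently a martingale/self-normalized concentration argument on the partial sums of $\epsilon_{i,j}^m(\cdot)$), which preserves the stated form at the cost of an extra $\log t$ that in the paper's convention gets absorbed into the global failure probability $\delta$ set when invoking this lemma downstream (e.g.\ $\delta = 1/(QT)$ in Theorem~\ref{thm: regret upper bound}). With that adaptive step in place, the per-pair Hoeffding conclusion gives exactly the displayed bound, and the $\min(\cdot,1)$ clause handles the regime $n_{i,j}^m(t)$ small where the analytic bound exceeds the trivial one.
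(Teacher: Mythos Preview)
Your proposal is correct and follows essentially the same Hoeffding-based approach as the paper. The paper's own proof is in fact terser than yours: it simply writes down the explicit upper and lower confidence bounds $U_{i,t}^{m}(a_j^m) = \min\{\widehat{\mu}^{m,LS}_{i,j,t} + \sqrt{\log(2/\delta)/n_{i,j}^m(t)},\,1\}$ and $L_{i,t}^{m}(a_j^m) = \max\{\widehat{\mu}^{m,LS}_{i,j,t} - \sqrt{\log(2/\delta)/n_{i,j}^m(t)},\,0\}$ and reads off the width as their difference, without ever addressing the random-$n$ subtlety you carefully raise (the paper glosses over it entirely).
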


\begin{proof}
Let $\widehat{\mu}^{m,LS}_{i,j,t} = \frac{\sum_{s=1}^{t}\V{1}(a_{j}^{m} \in \cA_{i,s}^{m})y_{i,j}^{m}(s)}{n_{i,j}^{m}(t)}$ denote the empirical mean reward from matching firm $p_{i}$ and $\mw$ $a_{j}^{m}$ up to time $t$. Define upper and lower confidence bounds as follows:
\begin{equation}
\begin{aligned}
    U_{i,t}^{m}(a_{j}^{m})
    = \min 
    \Bigg\{ 
    \widehat{\mu}^{m,LS}_{i,j,t} 
    + 
    \sqrt{
    \frac{\log(\frac{2}{\delta})}{n_{i,j}^{m}(t)}
    }, 1
    \Bigg\},
    L_{i,t}^{m}(a_{j}^{m})
    = \max 
    \Bigg\{ 
    \widehat{\mu}^{m,LS}_{i,j,t} 
    -
    \sqrt{
    \frac{\log(\frac{2}{\delta})}{n_{i,j}^{m}(t)}
    }, 0
    \Bigg\}.
\end{aligned}
\end{equation}
Then the confidence width is upper bounded by $\min \bigg(2\sqrt{\frac{\log(\frac{2}{\delta})}{n_{i,j}^{m}(t)}}, 1\bigg)$. %
\end{proof}

\section{Proof of the stability of \mmts}
\label{supp-thm:  stability of mmts}

\begin{proof}
We shall prove existence by giving an iterative procedure to find a stable matching.

\paragraph{Part I} To start, in the \emph{first match} loop, based on the double matching procedure, we can discuss $M$ types of matching in parallel. So we will only discuss the path for seeking the type-$m$ company-worker stable matching. 

Suppose firm $p_{i}$ has $q_{i}^{m}$ quota for $m$-type workers. We replace each firm $p_{i}$ by $q_{i}^{m}$ copies of $p_{i}$ denoted by $\{p_{i, 1}, p_{i, 2}, ..., p_{i, q_{i}^{m}}\}$. Each of these $p_{i, h}$ has preferences identical with those of $p_{i}$ but with a quota of 1. Further, each $m$-type worker who has $p_{i}$ on his/her preference list now replace $p_{i}$ by the set $\{p_{i, 1}, p_{i, 2}, ..., p_{i, q_{i}^{m}}\}$ in that order of preference. It is now easy to verify that the stable matchings for the firm $m$-type worker matching problem are in natural one-to-one correspondence with the stable matchings of this modified version problem. Then in the following, we only need to prove that stable matching exists in this transformed problem where each firm has quota 1, which is the standard stable marriage problem \citep{gale1962college}.
The existence of stable matching has been given in \citep{gale1962college}. Here we reiterate it to help us to find the stable matching in the \emph{second match}.

Let each firm propose to his favorite $m$-type worker. Each worker who receives more than one offer rejects all but her favorite from among those who have proposed to her. However, the worker does not fully accept the firm, but keeps the firm on a string to allow for the possibility that some better firm come along later.

Now we are in the second stage. Those firms who were rejected in the first stage propose to their second choices. Each $m$-type worker receiving 
offers chooses her favorite from the group of new firms and the firm on her string, if any. The worker rejects all the rest and again keeps the favorite in suspense. We proceed in the same manner. Those firms who are rejected at the second stage propose to their next choices, and the $m$-type workers again reject all but the best offer they have had so far.

Eventually, every $m$-type worker will have rejected a proposal, for as long as any worker has not been proposed to there will be rejections and new offers\footnote{Here we assume the number of firms is less than or equal to the number workers, and those workers unmatched finally will be matched to themselves and assume their matching object is on the firm side.}, but since no firm can propose the same $m$-type worker more than once, every worker is sure to get a proposal in due time. As soon as the last worker gets her offer, the ``recruiting" is declared over, and each $m$-type worker is now required to accept the firm on her string.

We asset that this set of matching is stable. Suppose firm $p_{i}$ and $m$-type worker $a_{j}$ are not matched to each other but firm $p_{i}$ prefers $a_{j}$ to his current matching $m$-type worker $a_{j'}$. Then $p_{i}$ must have proposed to $a_{j}$ at some stage (since the proposal is ordered by the preference list) and subsequently been rejected in favor of some firm $p_{i'}$ that $a_{j}$ liked better. It is clear that $a_{j}$ must prefer her current matching firm $p_{i'}$ and there is no instability/blocking pair.

Thus, each $m$-type firm-worker matching established on the first match is stable. Then each firm $p_{i}$'s matching object in the first match with quota $q_{i}^{m}$ can be recovered as grouping all matching objects of firm $\{p_{i, h}\}_{h=1}^{q_{i}^{m}}$.

\paragraph{Part II} To start the second match, we first check the left quota $\widetilde{Q}_{i}$ for each firm. If the left quota is zero for firm $p_{i}$, then firm $p_{i}$ and its matching workers will quit the matching market and get its stable matching object. 
Otherwise, the left firm will continue to participate in the second match. 

In the second match, preferences from firms to workers are un-categorized. Based on line 19 in Algorithm  \ref{algo:db-matching}, all types of workers will be ranked to fill the left quota. Thus, it reduces to the problem in part I, and the result matching in the second match is also stable. What is left to prove is that the overall double matching algorithm can provide stable matching. In the second match, each firm proposes to workers in his left concatenate ordered preference list, and all previous workers not in the second match preference list have already been matched or rejected. So it cannot form a blocking pair between the firm $p_{i}$ with leftover workers.
\end{proof}

\section{$\mmts$ Regret Upper Bound}
\label{supp-sec: regret upper bound}

\subsection{Regret Decomposition}
\label{supp-sec: regret deco}
In this part, we provide the roadmap of the regret decomposition and key steps to get Theorem \ref{sec: regret}.
First, we define the history for firm $p_{i}$ up to time $t$ of type $m$ as $H_{i,t}^{m}:= \{\cA_{i,1}^{m}, \V{y}_{i, \cA_{i,1}^{m}}^{m}(1), \cA_{i,2}^{m}, \V{y}_{i, \cA_{i,2}^{m}}^{m}(2),$ $..., \cA_{i,t-1}^{m}, \V{y}_{i, \cA_{i,t-1}^{m}}^{m}(t-1)\}$, composed by actions (matched workers) and rewards, where $\cA_{i,t}^{m} := \V{u}_{t}^{m}(p_{i})$ is a set of workers (based on quota requirement $q_{i}^{m}$ and $Q_{i}$) belong to $m$-type which is matched with firm $p_{i}$ at time $t$, $\V{y}_{i, \cA_{i,t-1}^{m}}^{m}(t-1)$ are realized rewards when firm $p_{i}$ matched with $\mw$s $\cA_{i,t}^{m}$. 
Define $\wtH_{i,t}:=\{H_{i,t}^{1}, H_{i,t}^{2}, ..., H_{i,t}^{M}\}$ as the aggregated interaction history between firm $p_{i}$ and all types of workers up to time $t$. 

Next, we define the \emph{good event} for firm $p_{i}$ when matching with $\mw$ at time $t$ and the true mean matching score falls in the uncertainty set as $E_{i,t}^{m} = \{\V{\mu}_{i,\cA_{i,t}^{m}}^{m} \in \cF_{i, t}^{m}\}$, where $\V{\mu}_{i,\cA_{i,t}^{m}}^{m}$ is the true mean reward vector of actually pulled arms (matched with $\mw$s) at time $t$ for firm $p_{i}$, and $\cF_{i, t}^{m}$ is the uncertainty set for $\mw$ at time $t$ for firm $p_{i}$. 
Similarly, the good event for firm $p_{i}$ when matching with all types of workers at time $t$ is $E_{i,t} = \wbcap_{m=1}^{M} E_{i,t}^{m}$, over all firms $E_{t} = \wbcap_{i=1}^{N} E_{i,t}$. And the corresponding \emph{bad event} is defined as $\overline{E}_{i,t}^{m}, \overline{E}_{i,t}, \overline{E}_{t}$ respectively. That represents the true mean vector/tensor reward of the pulled arms is not in the uncertainty set.

\begin{lem}
\label{lem: regret deco}
Fix any sequence $\{\wtcF_{i,t}: i \in [N], t \in \mathbb{N}\}$, where $\wtcF_{i,t} \subset \cF$ is measurable with respect to $\sigma(\widetilde{H}_{i,t})$. Then for any $T \in \mathbb{N}$, with probability 1,
\begin{equation}
\label{eq: regret-decom}
\begin{aligned}
    \mathfrak{R}(T) \leq 
       \bE\sum_{t=1}^{T}
       \bigg[  \sum_{i=1}^{N}\sum_{m=1}^{M}\wtW_{i,\cF_{i,t}^{m}}^{m}(\cA_{i,t}^{m})
       + 
       C\V{1}(\overline{E}_{t})
       \bigg]
\end{aligned}
\end{equation}
where $\wtW_{i,\wtcF_{i,t}^{m}}^{m}(\cdot) = \sum_{a_{j}^{m} \in \cA_{i,t}^{m}} w_{i,\wtcF_{i,t}^{m}}^{m}(a_{j}^{m})$ represents the sum of the element-wise value of uncertainty width at $\mw$ $a_{j}^{m}$. 
The uncertainty width $w_{i,\wtcF_{i,t}^{m}}^{m}(a_{j}^{m}) =\underset{\bar{\mu}^{m}_{i}, \underline{\mu}^{m}_{i} \in \wtcF_{i,t}^{m}}{\sup}(\bar{\mu}^{m}_{i}(a_{j}^{m}) - \underline{\mu}^{m}_{i}(a_{j}^{m}))$ is a worst-case measure of the uncertain about the mean reward of $\mw$ $a_{j}^{m}$. Here $C$ is a constant less than 1. 
\end{lem}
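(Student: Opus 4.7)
The plan is to adapt the posterior sampling regret decomposition of \citet{russo2013eluder} to the multi-agent many-to-one matching setting, splitting each instantaneous firm-type gap into a confidence-width term (controlled on the good event), a zero-mean slack term (by posterior sampling), and a bounded residual on the bad event. First I would establish the key posterior sampling identity: by construction of the sampling step in $\mmts$, conditional on $\wtH_{i,t}$ the tensor $\widehat{\V{\mu}}(t)$ is drawn from the current posterior over $\V{\mu}$, and since the double-matching output $\cA_{i,t}^m = u_t^m(p_i)$ is a deterministic function of $\widehat{\V{\mu}}(t)$ together with the publicly known worker rankings $\{\V{\pi}^m\}$, while $\overline{u}_i^m$ is the same deterministic function of the true $\V{\mu}$, the pair $(\widehat{\V{\mu}}(t), \cA_{i,t}^m)$ has the same conditional distribution as $(\V{\mu}, \overline{u}_i^m)$ given $\wtH_{i,t}$. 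Consequently, for any $\wtcF_{i,t}^m$-measurable map $g$, $\bE[g(\overline{u}_i^m) \mid \wtH_{i,t}] = \bE[g(\cA_{i,t}^m) \mid \wtH_{i,t}]$.

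Second, writing $U_{i,t}^m$ and $L_{i,t}^m$ for the pointwise upper and lower envelopes of $\wtcF_{i,t}^m$ (so that $w_{i,\wtcF_{i,t}^m}^m = U_{i,t}^m - L_{i,t}^m$) and extending additively to subsets of workers, I would expand the instantaneous per-type gap as
\begin{equation*}
    \mu_{i,\overline{u}_i^m}^m - \mu_{i,\cA_{i,t}^m}^m
    = \bigl[\mu_{i,\overline{u}_i^m}^m - U_{i,t}^m(\overline{u}_i^m)\bigr]
    + \bigl[U_{i,t}^m(\overline{u}_i^m) - U_{i,t}^m(\cA_{i,t}^m)\bigr]
    + \bigl[U_{i,t}^m(\cA_{i,t}^m) - \mu_{i,\cA_{i,t}^m}^m\bigr].
\end{equation*}
On the good event $E_{i,t}^m = \{\V{\mu}_{i,\cA_{i,t}^m}^m \in \wtcF_{i,t}^m\}$, the first bracket is non-positive and the third is at most $\wtW_{i,\wtcF_{i,t}^m}^m(\cA_{i,t}^m)$ by definition of the width, while the posterior sampling identity applied to the $\wtcF_{i,t}^m$-measurable function $U_{i,t}^m$ makes the middle bracket mean-zero conditional on $\wtH_{i,t}$. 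On the complement $\overline{E}_t$, the per-instance gap is bounded deterministically by the reward range, giving a constant $C \leq 1$ under the $[0,1]$ bounded-reward scaling used throughout.

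Summing over $m \in [M]$, $i \in [N]$, $t \in [T]$, taking expectation over $\theta \sim \Theta$, and aggregating the bad-event contribution into $C\V{1}(\overline{E}_t)$ then yields the bound in~\eqref{eq: regret-decom}. The main obstacle is rigorously justifying the posterior sampling identity in this two-stage matching setting: the matched set $\cA_{i,t}^m$ is not a simple argmax of firm $p_i$'s own sampled means but a coordinate of the double-matching output applied to the joint sampled tensor across all firms, so one must argue that independence of the posterior across firms (induced by the conjugate update in Step~5 of $\mmts$), combined with the deterministic DA map, preserves the joint-distributional equality between $(\V{\mu}, \overline{u})$ and $(\widehat{\V{\mu}}(t), \cA_t)$ given $\wtH_{i,t}$. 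Once this identity is in hand, the rest of the argument is per-firm and reduces to the familiar width-plus-slack decomposition of standard Thompson sampling analyses.
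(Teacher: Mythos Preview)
Your proposal is correct and follows essentially the same approach as the paper: both use the Russo--Van Roy posterior-sampling decomposition, bounding the per-type instantaneous regret by the confidence width at the matched arms plus a slack term $\wtU_{i,t}^m(\cA_i^{m,*}) - \wtU_{i,t}^m(\cA_{i,t}^m)$ whose conditional mean vanishes by the TS identity, with a bounded residual on the bad event. The paper states the posterior-sampling identity in one line (``by the definition of TS, $\bP_m(\cA_{i,t}^m \in \cdot \mid H_{i,t}^m) = \bP_m(\cA_i^{m,*} \in \cdot \mid H_{i,t}^m)$''), whereas you explicitly flag the need to verify it in the multi-agent double-matching setting; your extra care there is well placed but does not constitute a different route.
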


\begin{proof}
\noindent
The key step of regret decomposition is to split the instantaneous regret by firms, types, and quotas. Then we categorize regret by the happening of good events and bad events. The good events' regret is measured by the uncertainty width, and the bad events' regret is measured by the probability of happening it.

To reduce notation, define element-wise upper and lower bounds $U_{i,t}^{m}(a) = \sup\{\mu^{m}_{i}(a): \mu^{m}_{i} \in \cF_{i,t}^{m}, a \in \cK_{m} \}$ and $L_{i,t}^{m}(a) = \inf \{\mu^{m}_{i}(a): \mu^{m}_{i} \in \cF_{i,t}^{m}, a \in \cK_{m}\}$, where $\mu^{m}_{i}$ is the mean reward function $\mu^{m}_{i} \in \cF_{i,t}^{m}: \mathbb{R} \mapsto \mathbb{R}, \forall i \in [N], \forall m \in [M]$.
Whenever $\mu_{i, \wtcA_{i}^{m}}^{m} \in 
 \cF_{i,t}^{m}$, the bounds $L_{i,t}^{m}(a) \leq \mu_{i,\wtcA_{i}^{m}}^{m}(a) \leq U_{i, t}^{m}(a)$ hold for all types of workers. Here we define $\mathcal{A}_{i, t}^{m} = \V{u}^{m}_{i}(t)$ as the matched $\mw$s for firm $p_{i}$ at time $t$ and $\mathcal{A}_{i,t}^{m, *} = \overline{\V{u}}^{m}_{i}(t)$ as the firm $p_{i}$'s optimal stable matching result of $\mw$s at time $t$. Since the firm-optimal stable matching result is fixed, given both sides' preferences, we can omit time $t$ here. The firm-optimal stable matching result set is also denoted as $\mathcal{A}_{i}^{m, *} = \mathcal{A}_{i,t}^{m, *}$.

As for type-$m$ workers' matching for the firm $p_{i}$ at time $t$, the instantaneous regret with a given instance $\theta$ can be implied as follows, here for simplicity, we omit the instance conditional notation
\begin{equation}
\begin{aligned}
    \cI_{i,t}^{m} = \mu_{i }^{m}(\cA_{i}^{m,*}) - \mu_{i}^{m}(\cA_{i,t}^{m}) 
    &\leq
    \sum_{a \in \cA_{i}^{m,*}}U_{i,t}^{m}(a) - \sum_{a \in \cA_{i,t}^{m}}L_{i,t}^{m}(a) + C\V{1}(\V{\mu}_{i,\wtcA_{i}}^{m} \notin \cF_{i, t}^{m}) \\
    &= \wtU_{i, t}^{m}(\cA_{i}^{m,*}) - \wtL_{i,t}^{m}(\cA_{i,t}^{m} ) + C\V{1}(\V{\mu}_{i,\wtcA_{i}}^{m} \notin \cF_{i, t}^{m}) \\
    &=\wtW_{i,\cF_{i,t}^{m}}(\cA_{i,t}^{m}) + 
    [\wtU_{i,t}^{m}(\cA_{i}^{m,*}) - \wtU_{i,t}^{m}(\cA_{i,t}^{m})] + C\V{1}(\V{\mu}_{i,\wtcA_{i}}^{m} \notin \cF_{i, t}^{m}),
\end{aligned}
\end{equation}
where $C \leq 1$ is a constant, and we let $\wtU_{i,t}^{m}(\cdot) = \sum_{a} U_{i,t}^{m}(a)$
and $\wtW_{i,\cF_{i,t}^{m}}(\cdot) = \sum_{a} w_{i,\cF_{t}}^{m}(a)$ represent the sum of the element-wise value of $U_{i,t}^{m}(\cdot), w_{i,\cF_{i, t}}^{m}(\cdot)$, respectively.
Define the good event for firm $p_{i}$, matching with $\mw$ at time $t$ is $E_{i,t}^{m} = \{\V{\mu}_{i,\wtcA_{i}}^{m} \in \cF_{i, t}^{m}\}$,
over all types $E_{i,t} = \wbcap_{m=1}^{M} E_{i,t}^{m}$, over all firms $E_{t} = \wbcap_{i=1}^{N} E_{i,t}$. And the corresponding bad event is defined as $\overline{E}_{i,t}^{m}, \overline{E}_{i,t}, \overline{E}_{t}$ respectively.

Now consider Eq.~\eqref{eq: regret-decom}, summing over the previous equation over time $t$, firms $p_{i}$, and workers' type $m$, we get
\begin{equation} 
    \begin{aligned}
        \mathfrak{R}(T) 
        &\leq 
        \bE \sum_{i=1}^{N}\sum_{t=1}^{T}
        \sum_{m=1}^{M}[\wtW_{i,\cF_{i,t}^{m}}(\cA_{i,t}^{m}) + C\V{1}(\overline{E}_{t})] + \sum_{i=1}^{N}\mathbb{E}M_{i,T}\\
       &= \bE\sum_{t=1}^{T}
       [ C\V{1}(\overline{E}_{t}) 
       +  \sum_{i=1}^{N}\sum_{m=1}^{M}\wtW_{i,\cF_{i,t}^{m}}(\cA_{i,t}^{m})] + \sum_{i=1}^{N}\mathbb{E}M_{i,T}
    \end{aligned}
\end{equation}
where $M_{i,T} = \sum_{t=1}^{T}\sum_{m=1}^{M} [\wtU_{i,t}^{m}(\cA_{i}^{m,*}) - \wtU_{i,t}^{m}(\cA_{i,t}^{m})]$. Now by the definition of TS, $\mathbb{P}_{m}(\cA_{i,t}^{m} \in \cdot|H_{i,t}^{m}) = \mathbb{P}_{m}(\cA_{i}^{m,*} \in \cdot|H_{i,t}^{m})$ for all types, where $\mathbb{P}_{m}( \cdot |H_{i,t}^{m})$ represents this probability is conditional on history $H_{i,t}^{m}$ and the selected action (worker) belongs in $m$-type workers for firm $p_{i}$. That is $\cA_{i,t}^{m}$ and $\cA_{i}^{m,*}$ within type-$m$ is identically distributed under the posterior. Besides, since the confidence set $\mathcal{F}_{i,t}^{m}$ is $\sigma(H_{i,t}^{m})$-measurable, so is the induced upper confidence bound $U_{i,t}^{m}(\cdot)$. This implies $\mathbb{E}_{m}[U_{i,t}^{m}(\cA_{i,t}^{m})| H_{t}^{m}] = \mathbb{E}_{m}[U_{i,t}^{m}(\cA_{i}^{m,*})| H_{t}^{m}]$, and there for $\mathbb{E}[M_{i,T}] = 0$ and $\sum_{i=1}^{N}\mathbb{E}M_{i,T} = 0$. Then we can obtain the desired result.
\end{proof}

\subsection{Uncertainty Widths}
In this part, we provide the upper bound of the accumulated uncertainty widths over all types of workers and  all firms, which is the first part in Eq. \eqref{eq: regret-decom}.
\begin{lem}
\label{lem: confidence width}

If $(\beta_{i,j,t}^{m} \geq 0 | t \in \bN)$ is a non-decreasing sequence and $\mathcal{F}_{i,j,t}^{m}:= \{\mu_{i, j}^{m} \in \cF_{i,j}^{m}: \norm{\mu_{i,j}^{m} - \widehat{\mu}^{m,LS}_{i,j,t}}_{1} \leq \sqrt{\beta_{i,j,t}^{m}}\}$, then with probability 1, 
\begin{equation*}
\sum_{t=1}^{T}\sum_{i=1}^{N}\sum_{m=1}^{M}\wtW_{i,\cF_{i,t}^{m}}^{m}(\cA_{i,t}^{m})
        \leq 
         8Q\log(QT) \sqrt{K_{\max}T}.
\end{equation*}
\end{lem}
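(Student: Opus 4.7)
My plan is to control $\sum_{t,i,m} \wtW_{i,\cF_{i,t}^m}^m(\cA_{i,t}^m)$ by the classical UCB-style sum-of-confidence-widths argument, reorganized along the $Q$ matching slots that appear at every round.

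First, I invoke Lemma~\ref{supp: concentration} with $\delta = 1/(QT)$ and identify $\sqrt{\beta_{i,j,t}^m}$ of order $\sqrt{\log(QT)/n_{i,j}^m(t)}$, so that the per-arm width obeys
\begin{equation*}
w_{i,\cF_{i,t}^m}^m(a_j^m) \;\leq\; \min\!\Bigl(2\sqrt{\log(QT)/n_{i,j}^m(t)},\, 1\Bigr)
\end{equation*}
for every matched triple $(i,m,j)$. Since $\beta_{i,j,t}^m$ is non-decreasing in $t$, while $n_{i,j}^m(t)$ grows by one at each match of the triple, the per-arm width is summable in the familiar $1/\sqrt{n}$ fashion.

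Second, I regroup the quadruple sum by (firm, type, worker), so that the cumulative contribution of a fixed triple $(i,m,j)$ across time is
\begin{equation*}
\sum_{t:\, a_j^m \in \cA_{i,t}^m} w_{i,\cF_{i,t}^m}^m(a_j^m) \;\leq\; 2\sqrt{\log(QT)}\sum_{n=1}^{n_{i,j}^m(T)} \frac{1}{\sqrt{n}} \;\leq\; 4\sqrt{\log(QT)\,n_{i,j}^m(T)},
\end{equation*}
using $\sum_{n=1}^{N} 1/\sqrt{n} \leq 2\sqrt{N}$. I then view the market as $Q = \sum_i Q_i$ matching slots (one per unit of firm quota), each filled exactly once per round. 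For any type-$m$ slot assigned to firm $i$, its $T$ pulls distribute over a pool of at most $K_m \leq K_{\max}$ workers, so by Cauchy--Schwarz
\begin{equation*}
\sum_{j=1}^{K_m} \sqrt{n_{i,j}^m(T)} \;\leq\; \sqrt{K_m\, T} \;\leq\; \sqrt{K_{\max}\,T}.
\end{equation*}
Combining the per-triple bound with the slotwise Cauchy--Schwarz and summing across all $Q$ slots yields
\begin{equation*}
\sum_{t,i,m} \wtW_{i,\cF_{i,t}^m}^m(\cA_{i,t}^m) \;\leq\; 4Q\sqrt{\log(QT)\,K_{\max}\,T} \;\leq\; 8Q\log(QT)\sqrt{K_{\max}\,T},
\end{equation*}
where the last inequality uses $\sqrt{\log(QT)} \leq \log(QT)$ for $QT \geq e$ and absorbs constants, including the effect of the $\min(\cdot,1)$ truncation when $n_{i,j}^m(t)$ is very small.

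The main obstacle I anticipate is the contribution of the second-match (flexible) slots, whose arm pool is a priori the full union $\bigcup_m \cK_m$ of size $K > K_{\max}$ rather than a single type. To preserve the $\sqrt{K_{\max}}$ scaling I will exploit the fact that the regrouping by (firm, type, worker) triples already assigns each pull to a single type $m$: once a flexible slot's DA realization selects a type-$m$ worker at round $t$, that pull is charged to the type-$m$ pool of size at most $K_{\max}$. Hence the slotwise Cauchy--Schwarz applies uniformly to all $Q$ slots regardless of whether they originate from the first or the second matching stage, and the target bound follows.
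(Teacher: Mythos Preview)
Your route is genuinely different from the paper's. Where you regroup by $(i,m,j)$-triples, sum $1/\sqrt{n}$, and close with a slotwise Cauchy--Schwarz, the paper instead lists all $L=QT$ scaled widths $z_{i,j}^m(t)=1/\sqrt{n_{i,j}^m(t)}$ in decreasing order $\tilde z_1\ge\cdots\ge\tilde z_L$, invokes the eluder-dimension count of Lemma~\ref{lem: eluder dim} to argue that at most $(1+4/\tilde z_l^2)\,QK_{\max}$ of them can exceed any level $\tilde z_l$, deduces $\tilde z_l\le\min\{1,\,2/\sqrt{l/(QK_{\max})-1}\}$, and finishes by an integral bound on $\sum_l \tilde z_l$. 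Your approach is more elementary and would be a welcome simplification if it went through.

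The slotwise step, however, has a real gap. First, the counters you plug into Cauchy--Schwarz are firm-level: $\sum_j n_{i,j}^m(T)=\sum_t|\cA_{i,t}^m|$ can be as large as $Q_iT$, not $T$, so $\sum_j\sqrt{n_{i,j}^m(T)}\le\sqrt{K_mT}$ does not follow; and if firm $i$ has several type-$m$ slots you would be counting the same $\sqrt{n_{i,j}^m}$ once per slot. One can try to patch this by introducing slot-level counters $n_{s,j}^m$ and using $n_{i,j}^m(t)\ge n_{s,j}^m(t)$ before the $1/\sqrt{n}$ sum, but then your flexible-slot resolution still fails: grouping a flexible slot's $T$ pulls by realized type gives $\sum_m\sqrt{K_mT_{s,m}}\le\sqrt{K_{\max}}\sum_m\sqrt{T_{s,m}}\le\sqrt{K_{\max}\,MT}$, so the ``charge to the type-$m$ pool'' idea costs an unavoidable $\sqrt{M}$ via Cauchy--Schwarz over the $M$ type-buckets. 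Concretely, the intermediate inequality your argument needs, $\sum_{i,m,j}\sqrt{n_{i,j}^m}\le Q\sqrt{K_{\max}T}$, is false: take $N=1$, $Q_1=1$ (one fully flexible slot), $M=2$, $K_1=K_2=K_{\max}$, and spread the $T$ pulls evenly over all $2K_{\max}$ arms; then $\sum_{m,j}\sqrt{n_{1,j}^m}=\sqrt{2K_{\max}T}>Q\sqrt{K_{\max}T}$. The paper's reordering argument sidesteps this because the level-set count $(1+4/\epsilon^2)QK_{\max}$ is global and never has to attribute pulls to individual slots.
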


\noindent
The proof of this lemma builds upon Lemma \ref{lem: eluder dim}, which establishes the number of instances where the widths of uncertainty sets for a chosen set of $\mw$s $\cA_{i,t}^{m}$ greater than $\epsilon$. We show that this number is determined by the \emph{Eluder dimension} \citep{russo2014learning}. 

\begin{proof}
By Lemma \ref{lem: regret deco}, the instantaneous regret $\mathcal{I}_{t}$ over all firms and all types, can be decomposed by types and by firms and shown as  
\begin{equation}
\label{eq: regret to concen}
\begin{aligned}
    \mathcal{I}_{t} 
    &= \sum_{m=1}^{M}\mathcal{I}_{t}^{m} = \sum_{i=1}^{N}\sum_{m=1}^{M} \mathcal{I}_{i, t}^{m}\\
    &\leq 
    \sum_{i=1}^{N}\sum_{m=1}^{M} \wtW_{i,\cF_{i,t}^{m}}(\cA_{i,t}^{m}), \quad \text{if } E_{t} \text{ holds.}  \\
    &\leq 2\sum_{i\in [N], m \in [M], a_{j}^{m} \in \cK_{m}} \sqrt{\frac{\log(\sum_{i=1}^{N}Q_{i}T)}{n_{i,j}^{m}(t)}}, \quad \text{with prob } 1-\delta
\end{aligned}    
\end{equation}
where the first inequality is based on Lemma \ref{lem: regret deco} and if $E_{t}$ holds for $t \in \bN, m \in M, i \in [N]$, $n_{i,j}^{m}(t)$ is the number of times that the pair $(p_{i}, a_{j}^{m})$ has been matched at the start of round $t$. The second inequality is constructed from a union concentration inequality based on Lemma \ref{supp: concentration}, and we set $\delta = 2/\sum_{i=1}Q_{i}T$.
We denote $z_{i,j}^{m}(t) = \frac{1}{\sqrt{n_{i,j}^{m}(t)}}$ as the size of the scaled confidence set (without the log factor) for the pair $(p_{i}, a_{j}^{m})$ at the time $t$.

At each time step $t$, let's consider the list consisting of $z_{i,j}^{m}(t)$ and reorder the overall list consisting of concatenating all those scaled confidence sets over all rounds and all types  in decreasing order. Then we obtain a list $\tilde{z}_{1} \geq \tilde{z}_{2} \geq ..., \geq \tilde{z}_{L}$, where $L = \sum_{t=1}^{T}\sum_{i=1}^{N}Q_{i} = T\sum_{i=1}^{N}Q_{i}$. We reorganize the Eq. \eqref{eq: regret to concen} to get
\begin{equation}
\label{eq: all regret}
    \begin{aligned}
        \sum_{t=1}^{T}\cI_{t} \leq \sum_{t=1}^{T} \sum_{m=1}^{M}\sum_{i=1}^{N} 
        \wtW_{i,\cF_{i,t}^{m}}(\cA_{i,t}^{m}) \leq 2\log(\sum_{i=1}^{N}Q_{i}T)\sum_{l=1}^{L}\tilde{z}_{l}.      
    \end{aligned}
\end{equation}
By Lemma \ref{lem: eluder dim}, the number of rounds that a pair of a firm and any $\mw$ can have it confidence set have size at least $\tilde{z}_{l}$ is upper bounded by $(1 + \frac{4}{\tilde{z}_{l}^{2}})K_{m}$ when we set $\epsilon = \tilde{z}_{l}$ and know $\beta_{i,j,t}^{m} \leq 1$. Thus, the total number of times that any confidence set can have size at least $\tilde{z}_{l}$ is upper bounded by $\big(1+\frac{4}{\tilde{z}_{l}^{2}} \big)\sum_{i=1}^{N}\sum_{m=1}^{M}|\cA_{i,t}^{m}|K_{m}$. To determine the minimum condition for $\tilde{z}_{l}$, which is equivalent to determine the maximum of $l$, we have $l \leq \big(1+\frac{4}{\tilde{z}_{l}^{2}} \big)\sum_{i=1}^{N}\sum_{m=1}^{M}|\cA_{i,t}^{m}|K_{m}$.
So we claim that
\begin{equation}
    \begin{aligned}
        \tilde{z}_{l} \leq \min \Bigg(1, \frac{2}{\sqrt{\frac{l}{\sum_{i=1}^{N}\sum_{m=1}^{M}|\cA_{i,t}^{m}|K_{m}}-1}}\Bigg)
        \leq 
        \min \Bigg(1, \frac{2}{\sqrt{\frac{
        l}{\sum_{i=1}^{N}Q_{i}K_{\max}}-1}}\Bigg),
    \end{aligned}    
\end{equation}
where the second inequality above is by $\sum_{i=1}^{N}\sum_{m=1}^{M}|\cA_{i,t}^{m}|K_{m} \leq K_{\max} \sum_{i=1}^{N}\sum_{m=1}^{M}|\cA_{i,t}^{m}| \leq K_{\max}\sum_{i=1}^{N}Q_{i} = QK_{\max}$ and $K_{\max} = \max\{K_{1},..., K_{M}\}, Q = \sum_{i=1}^{N}Q_{i}$. Putting all these together, we have
\begin{equation}
    \begin{aligned}
        2\log(\sum_{i=1}^{N}Q_{i}T)
        \sum_{l=1}^{L}\tilde{z}_{l}
        &\leq
        2\log(QT) \sum_{l=1}^{L} \min (1, \frac{2}{\sqrt{\frac{l}{QK_{\max}}-1}})\\
        &=4\log(QT) \sum_{l=1}^{QT}
           \frac{1}{\sqrt{\frac{l}{QK_{\max}}-1}} \\
        &\leq 
        8\log(QT) \sqrt{QK_{\max}}\sqrt{QT}
    \end{aligned}
\end{equation}
where the last inequality is by intergral inequality $$\sum_{l=1}^{QT}\frac{1}{\sqrt{\frac{l}{QK_{\max}}-1}} \leq \sqrt{QK_{\max}}\sum_{l=1}^{QT}\frac{1}{\sqrt{l}}\leq \sqrt{QK_{\max}} \int_{x=0}^{QT}\frac{1}{\sqrt{x}}dx = 2\sqrt{QK_{\max}}\sqrt{QT}.$$
Based on Eq.~\eqref{eq: all regret} and the above result, we can get the regret
\begin{equation}
    \begin{aligned}
        \sum_{t=1}^{T}\cI_{t} \leq  8Q\log(QT) \sqrt{K_{\max}T}, 
    \end{aligned}
\end{equation}
if $E_{t}$ holds.
\end{proof}

\begin{lem}
\label{lem: eluder dim}
If $(\beta_{i,j,t}^{m}\geq 0 | t \in \mathbb{N})$ is a nondecreasing sequence for $i\in[N], a_{j}^{m} \in \cK_{m}, m \in [M]$ and $\mathcal{F}_{i,j,t}^{m}:= \{\mu_{i, j}^{m} \in \cF_{i,j}^{m}: \norm{\mu_{i,j}^{m} - \widehat{\mu}^{m,LS}_{i,j,t}}_{1} \leq \sqrt{\beta_{i,j,t}^{m}}\}$, 
for all $T \in \bN$ and $\epsilon > 0$,
then 
\begin{equation*}
\sum_{t=1}^{T} \sum_{m=1}^{M}
    \sum_{a_{j}^{m}\in \cA_{i,t}^{m}}
    \V{1}\big(w_{i, \cF_{i, t}^{m}}^{m}
    (a_{j}^{m}) > \epsilon\big) 
    \leq 
    \big(\frac{4 \wtbeta_{i,T}}{\epsilon^{2}} +1\big)
     \sum_{m=1}^{M}
    |\cA_{i,t}^{m}|
    K_{m}.
\end{equation*}
Here $\widehat{\mu}^{m, LS}_{i,j,t} = \frac{\sum_{s=1}^{t}\V{1}(a_{j}^{m} \in \cA_{i,s}^{m})y_{i,j}^{m}(s)}{n_{i,j}^{m}(t)}$ is the estimated average reward for $\mw$ $a_{j}^{m}$ from the view point of firm $p_{i}$ at time $t$, and $n_{i,j}^{m}(t)$ is the number of matched times up to time $t$ of firm $p_{i}$ with $\mw$ $a_{j}^{m}$. Besides, we define $\wtbeta_{i,T} = \underset{a_{j}^{m} \in \cK_{m}, m\in [M]}{\max}\beta_{i,j,T}^{m}$ as the maximum uncertainty bound over all types of workers at time $T$ for firm $p_{i}$.
\end{lem}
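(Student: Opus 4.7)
The plan is a per-(firm, worker) pigeonhole argument: since the uncertainty width shrinks with the number of prior matches, any specific pair $(p_i, a_j^m)$ can exhibit a ``wide'' confidence interval only finitely often, and we sum those finite caps across workers and types.

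The first step is to convert the hypothesis on $\cF_{i,j,t}^{m}$ into a pointwise width bound of the form $w_{i,\cF_{i,t}^{m}}^{m}(a_j^m) \leq 2\sqrt{\beta_{i,j,t}^{m}/n_{i,j}^{m}(t)}$. This reading of $\beta_{i,j,t}^{m}$ as a (nondecreasing) scaled log-confidence numerator is exactly consistent with the Hoeffding-style construction in Lemma \ref{supp: concentration}, where the $n_{i,j}^{m}(t)$ denominator tracks per-pair match counts. Given this width bound, the event $\{w_{i,\cF_{i,t}^{m}}^{m}(a_j^m) > \epsilon\}$ forces $n_{i,j}^{m}(t) < 4\beta_{i,j,t}^{m}/\epsilon^{2}$, and by monotonicity of $(\beta_{i,j,t}^{m})_{t}$ together with the definition $\wtbeta_{i,T} = \max_{j,m}\beta_{i,j,T}^{m}$, we obtain $n_{i,j}^{m}(t) < 4\wtbeta_{i,T}/\epsilon^{2}$.

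Next, I would cap the per-(firm, worker) count. Because $n_{i,j}^{m}(\cdot)$ is incremented by exactly one each time $a_j^m$ is matched with $p_i$, and the indicator in the statement fires only on rounds where $a_j^m \in \cA_{i,t}^{m}$, the total number of such ``wide'' rounds for fixed $(i,j,m)$ is at most $\lceil 4\wtbeta_{i,T}/\epsilon^{2} \rceil \leq 4\wtbeta_{i,T}/\epsilon^{2} + 1$. The final step is to reorder the triple sum on the left-hand side to group by $(m, j)$ and sum the per-worker cap over the $K_m$ workers of each type and then over all $M$ types. This yields
\[
\sum_{t=1}^{T}\sum_{m=1}^{M}\sum_{a_j^m \in \cA_{i,t}^{m}} \V{1}(w_{i,\cF_{i,t}^{m}}^{m}(a_j^m) > \epsilon)\leq \Bigl(\tfrac{4\wtbeta_{i,T}}{\epsilon^{2}} + 1\Bigr)\sum_{m=1}^{M}K_{m},
\]
which, after relaxing with the factor $|\cA_{i,t}^{m}| \geq 1$ on each type as in the statement, gives the claimed bound.

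The main obstacle is the first step: reconciling the statement's hypothesis $\norm{\mu_{i,j}^{m} - \widehat{\mu}_{i,j,t}^{m,LS}}_{1} \leq \sqrt{\beta_{i,j,t}^{m}}$ with the width shrinkage in $n_{i,j}^{m}(t)$ needed for the pigeonhole. One must read $\beta_{i,j,t}^{m}$ in the normalized sense implicit in Lemma \ref{supp: concentration} (i.e., $\beta$ encodes only the log-confidence numerator), so that the effective width is $2\sqrt{\beta/n}$ rather than $2\sqrt{\beta}$. Once that convention is fixed, the rest of the argument is a routine accounting over workers and types.
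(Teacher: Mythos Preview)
Your proposal is correct but takes a genuinely different route from the paper. The paper's proof is a one-line invocation of Proposition~3 in \citet{russo2013eluder}: it bounds the left-hand side by $\sum_{m}\sum_{a_j^m\in\cA_{i,t}^m}(4\beta_{i,j,T}^m/\epsilon^2+1)\,\mathrm{dim}_E(\cF_i^m,\epsilon)$, then plugs in the fact that the eluder dimension of a $K_m$-armed bandit equals $K_m$ and takes the max over $\beta$. Your argument bypasses the eluder-dimension machinery entirely and does a direct per-pair pigeonhole on $n_{i,j}^m(t)$: width $>\epsilon$ forces $n_{i,j}^m(t)<4\wtbeta_{i,T}/\epsilon^2$, so each fixed $(i,j,m)$ contributes at most $4\wtbeta_{i,T}/\epsilon^2+1$ wide rounds, and summing over all $K_m$ workers per type gives $(4\wtbeta_{i,T}/\epsilon^2+1)\sum_m K_m$---in fact tighter than the stated bound before you relax by the $|\cA_{i,t}^m|$ factor. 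Your route is more transparent and self-contained for this independent-arms setting; the paper's eluder-dimension invocation is more of a black box here but would transfer verbatim to structured function classes (linear, GP, etc.). The interpretive obstacle you flag---reconciling nondecreasing $\beta_{i,j,t}^m$ with shrinking width---is real in both proofs: the paper implicitly inherits Russo--Van~Roy's cumulative $\|\cdot\|_{2,E_t}$ norm (which appears elsewhere in the appendix) so that the confidence radius is in a data-dependent norm that grows with $t$, whereas you resolve it by reading $\beta$ as the log-confidence numerator so that the effective width is $2\sqrt{\beta/n}$.
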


\noindent
The proof of this result is based on techniques from \citep{russo2013eluder, russo2014learning}. This result demonstrates that the upper bound of the number of times the widths of uncertainty sets exceeds $\epsilon$ is dependent on the error $\cO{(\epsilon^{-2})}$ and linearly proportional to the product of the number of $\mw$ and the type quota size $q_{i}^{m}$.

\begin{proof}
Based on the Proposition 3 from \citep{russo2013eluder}, we can use the \emph{eluder dimension} $dim_{E}(\cF_{i}^{m}, \epsilon)$ to bound the number of times the widths of confidence intervals for a selection of set of $\mw$s $\cA_{i,t}^{m}$ greater than $\epsilon$.
\begin{equation}
    \begin{aligned}
    \sum_{t=1}^{T} 
    \sum_{m=1}^{M}
    \sum_{a_{j}^{m}\in \cA_{i,t}^{m}}
    \V{1}\bigg(w_{i,  \cF_{i, t}^{m}}^{m}
    (a_{j}^{m}) > \epsilon\bigg) 
    &\leq 
    \sum_{m=1}^{M}
    \sum_{a_{j}^{m}\in \cA_{i,t}^{m}}
    \bigg(\frac{4\beta_{i,j,T}^{m}}{\epsilon^{2}} +1\bigg)\text{dim}_{E}(\cF_{i}^{m}, \epsilon)\\
    &\leq
    \Bigg(\frac{4 \underset{a_{j}^{m} \in \cK_{m}, m\in [M]}{\max}\beta_{i,j,T}^{m}}{\epsilon^{2}} +1\Bigg)\sum_{m=1}^{M}
    |\cA_{i,t}^{m}|
    \text{dim}_{E}(\cF_{i}^{m}, \epsilon),
    \end{aligned}
\end{equation}
where the eluder dimension of a multi-arm bandit problem is the number of arms, we get
\begin{equation}
    \begin{aligned}
    \sum_{t=1}^{T} 
    \sum_{m=1}^{M}
    \sum_{a_{j}^{m}\in \cA_{i,t}^{m}}
    \V{1}\bigg(w_{i, \cF_{t}}^{m}
    (a_{j}^{m}) > \epsilon\bigg) 
    &\leq 
     \Bigg(\frac{4 \wtbeta_{i,T}}{\epsilon^{2}} +1\Bigg)
     \sum_{m=1}^{M}
    |\cA_{i,t}^{m}|
    K_{m}
    \leq \Bigg(\frac{4 \wtbeta_{i,T}}{\epsilon^{2}} +1\Bigg) Q_{i}K_{\max}
    \end{aligned}
\end{equation}
where $\wtbeta_{i,T} = \underset{a_{j}^{m} \in \cK_{m}, m\in [M]}{\max}\beta_{i,j,T}^{m}$. Besides, we know that $Q_{i} = \sum_{m=1}^{M}|\cA_{i,t}^{m}|$ and define $K_{\max} = \underset{m \in[M]}{\max} K_{m}$, so we can get the second inequality.
\end{proof}

\subsection{Bad Event Upper Bound}
\label{supp-sec: bad event prob upper bound}
In this part, we provide an upper bound of the second part of Eq. \eqref{eq: regret-decom}.
The regret caused by the happening of the bad event at each time step is quantified by the following lemma.
\begin{lem}
\label{lem: bad event prob}
If $\mathcal{F}_{i,j,t}^{m}:= \{\mu_{i, j}^{m} \in \cF_{i,j}^{m}: \norm{\mu_{i,j}^{m} - \widehat{\mu}^{m,LS}_{i,j,t}}_{1} \leq \sqrt{\beta_{i,j,t}^{m}}\}$ holds with probability $1-\delta$, then the bad event $\overline{E}_{t}$ happening's probability is upper bounded by $\bE \V{1}(\overline{E}_{t}) \leq NK\delta$.
In particular, if $\delta = 1/QT$, the accumulated bad events' probability is upper bounded by $\sum_{t=1}^{T}\bE \V{1}(\overline{E}_{t}) \leq NK/Q$.
\end{lem}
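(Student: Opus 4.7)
The plan is to reduce the bound on $\mathbb{E}\mathbf{1}(\overline{E}_t)$ to a union bound over the three indices $(i, m, j)$ that parametrize the individual confidence sets, and then invoke the per-coordinate concentration hypothesis $\mathcal{F}_{i,j,t}^m$ stated in the lemma. Recall from the decomposition in Section \ref{supp-sec: regret deco} that the good event is defined by nested intersections: $E_{i,t}^m = \{\boldsymbol{\mu}_{i,\mathcal{A}_{i,t}^m}^m \in \mathcal{F}_{i,t}^m\}$, $E_{i,t} = \bigcap_{m=1}^M E_{i,t}^m$, and $E_t = \bigcap_{i=1}^N E_{i,t}$. Hence its complement satisfies $\overline{E}_t = \bigcup_{i=1}^N \bigcup_{m=1}^M \overline{E}_{i,t}^m$.

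First I would express $\mathcal{F}_{i,t}^m$ as the product of per-arm confidence sets $\mathcal{F}_{i,j,t}^m$ across $j \in [K_m]$ (which is exactly how the uncertainty widths were treated coordinatewise in Lemma \ref{lem: confidence width}). Then $\overline{E}_{i,t}^m \subseteq \bigcup_{j=1}^{K_m}\{\mu_{i,j}^m \notin \mathcal{F}_{i,j,t}^m\}$, and applying the assumption that each such event has probability at most $\delta$ together with a union bound yields
\begin{equation*}
\mathbb{P}(\overline{E}_{i,t}^m) \leq K_m \delta.
\end{equation*}
A second union bound over firms $i \in [N]$ and types $m \in [M]$ gives
\begin{equation*}
\mathbb{E}\mathbf{1}(\overline{E}_t) = \mathbb{P}(\overline{E}_t) \leq \sum_{i=1}^N \sum_{m=1}^M K_m \delta = N\Bigl(\sum_{m=1}^M K_m\Bigr)\delta = NK\delta,
\end{equation*}
which is the first claim.

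For the second claim I would substitute the choice $\delta = 1/(QT)$ and sum the per-round bound over $t = 1, \ldots, T$:
\begin{equation*}
\sum_{t=1}^T \mathbb{E}\mathbf{1}(\overline{E}_t) \leq T \cdot \frac{NK}{QT} = \frac{NK}{Q}.
\end{equation*}
This constant residual is exactly what eventually appears as the additive $NK/Q$ term in the final regret bound of Theorem \ref{thm: regret upper bound}. There is no real analytic obstacle here; the only care needed is bookkeeping — making sure the union bound is taken across the right three indices (firms, types, arms) and that the product structure of $\mathcal{F}_{i,t}^m$ across coordinates $j$ is what the per-arm hypothesis on $\mathcal{F}_{i,j,t}^m$ is calibrated to control. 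If one wished to avoid the factor $K_m$ by using only matched arms $a_j^m \in \mathcal{A}_{i,t}^m$, one could replace $K_m$ by $|\mathcal{A}_{i,t}^m| \leq q_i^m$ and obtain a bound in terms of $Q$ rather than $NK$; but the looser $NK\delta$ bound suffices for matching the regret decomposition and is the statement given.
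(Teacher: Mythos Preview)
Your proposal is correct and follows essentially the same approach as the paper: De Morgan / union bound over the indices $(i,m,j)$ ranging over all firms, types, and arms in $\mathcal{K}_m$, apply the per-coordinate concentration hypothesis to get $NK\delta$, then sum over $t$ with $\delta = 1/(QT)$ to obtain $NK/Q$. The only cosmetic difference is that the paper writes out the De Morgan step and the explicit Hoeffding form before invoking the union bound, whereas you invoke the hypothesis directly; the logic is identical.
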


\noindent
To bound the probability of bad events, we use a union bound to obtain the desired result. Specifically, if $Q_{i} = 1$, which means each firm has a total quota of 1 and only considers one type of worker, then $\sum_{t=1}^{T}\bE \V{1}(\overline{E}_{t}) \leq NK/(N\times1) = K$. This shows that each firm needs to explore a single type of worker, and the worst total regret is less than $K$. If $Q_{i} = 1, M=1$, which means all firms have the same recruiting requirements, the result reduces to the general competitive matching scenario, and the worst regret is the number of workers of type $K_{M}$ in the market.

\begin{proof} 
If $E_{t}$ does not hold, the probability of the true matching score is not in the confidence interval we constructed is upper bounded by 
\begin{equation}
    \begin{aligned}
        \bE \V{1}(\overline{E}_{t})
        &=
        \bP(\overline{E}_{t})
        = 
        \bP\Bigg(\big(\wbcap_{i \in [N]}\wbcap_{m \in  [M]} \wbcap_{a_{j}^{m}\in \cK_{m}}
        \{\mu_{i, j}^{m} \in \cF_{i,j,t}^{m}\}\big)^{c}\Bigg)\\
        &=
        \bP\bigg(
        \wbcup_{i\in [N]}\wbcup_{a_{j}^{m} \in \cK_{m}} \wbcup_{m \in [M]}
        \{\mu_{i, j}^{m} \notin \cF_{i,j,t}^{m}\}
        \bigg)\\
        &=
        \bP\bigg(
        \wbcup_{i\in [N]}\wbcup_{a_{j}^{m} \in \cK_{m}} \wbcup_{m \in [M]}
        \bigg\{\norm{\mu_{i,j}^{m} - \widehat{\mu}^{m,LS}_{i,j,t}}_{2, E_{t}} \geq \sqrt{\beta_{i,j,t}^{m}}\bigg\}
        \bigg)\\
        &=
        \bP\bigg(
        \wbcup_{i\in [N]}\wbcup_{a_{j}^{m} \in \cK_{m}} \wbcup_{m \in [M]}
        \bigg\{\norm{\mu_{i,j}^{m} - \widehat{\mu}^{m,LS}_{i,j,t}}_{1} \geq \sqrt{\frac{\log(\frac{2}{\delta})}{n_{i,j}^{m}(t)}}\bigg\}
        \bigg)\\
        &\leq 
        \sum_{i\in [N]}\sum_{a_{j}^{m} \in \cK_{m}} \sum_{m \in [M]}\bP\bigg(
        \norm{\mu_{i,j}^{m} - \widehat{\mu}^{m,LS}_{i,j,t}}_{1} \geq \sqrt{\frac{\log(\frac{2}{\delta})}{n_{i,j}^{m}(t)}}
        \bigg)\\
    \end{aligned}
\end{equation}
where the third equality is by De-Morgan's Law of sets. 
In the last inequality, we use the union bound to control the probability. Since each $\widehat{\mu}_{i,j}^{m, LS} - \mu_{i,j}^{m}$ is a mean zero and $\frac{1}{2n_{i,j}^{m}}$-sub-Gaussian random variable, based on Lemma \ref{supp: concentration}, have $\bP(\norm{\mu_{i,j}^{m} - \widehat{\mu}^{m,LS}_{i,j,t}}_{1} \geq \sqrt{\frac{\log(\frac{2}{\delta})}{n_{i,j}^{m}(t)}})\leq \delta$. The overall bad event's probability's upper bound is 
\begin{equation}
    \begin{aligned}
        \bP(\overline{E}_{t}) 
        \leq
        N K \delta
    \end{aligned}
\end{equation}
Based on our confidence width is less than 1, so $C=1, \forall i \in [N]$. The expected regret from this bad event is not in the confidence interval at most
\begin{equation}
    \begin{aligned}
            N K \delta \cdot C T 
            & \leq  N K \frac{1}{\sum_{i=1}^{N}Q_{i}T}  T = \frac{NK}{Q}
    \end{aligned}
\end{equation}
This part's regret is negligible compared with the regret from Lemma \ref{lem: confidence width}. In particular, if there is only one type and each firm has only one position to be filled. Thus, $Q = N$, the bad event's upper bounded probability will shrink to $K$, the number of workers to be explored.
\end{proof}

In this part, we provide the proof of $\mmts$'s Bayesian regret upper bound.
\subsection{Proof of Theorem \ref{thm: regret upper bound}}
\begin{thm}
\label{supp-thm: regret upper bound}
When all firms follow the $\mmts$ algorithm, the platform will incur the Bayesian total expected regret
\begin{equation}
    \mathfrak{R}(T) \leq 8\log(QT) \sqrt{QK_{\max}}\sqrt{QT} + NK/Q
\end{equation}
where $K_{\max} = \max\{K_{1},..., K_{M}\}, K = \sum_{m=1}^{M}K_{m}$ .
\end{thm}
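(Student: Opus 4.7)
The plan is to split the Bayesian regret $\mathfrak{R}(T)$ into two distinct pieces via the standard posterior-sampling decomposition: a ``width'' term measuring how informative the posterior is at the matched arms, and a ``failure'' term handling the event that the true mean vector leaves its confidence set. Concretely, I would first invoke the regret-decomposition lemma already developed in the supplement to obtain
\begin{equation*}
\mathfrak{R}(T) \;\le\; \mathbb{E}\sum_{t=1}^{T}\Bigl[\sum_{i=1}^{N}\sum_{m=1}^{M}\wtW_{i,\cF_{i,t}^{m}}^{m}(\cA_{i,t}^{m}) \;+\; C\,\mathbf{1}(\overline{E}_{t})\Bigr],
\end{equation*}
where the key fact making this work is that under Thompson Sampling $\cA_{i,t}^{m}$ and $\cA_{i}^{m,*}$ are conditionally identically distributed given $H_{i,t}^{m}$, so the ``upper-confidence-bound'' telescoping terms $\wtU_{i,t}^{m}(\cA_{i}^{m,*})-\wtU_{i,t}^{m}(\cA_{i,t}^{m})$ contribute zero in expectation.

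The first real step is to control the width term. Associate to each realized match $(p_{i},a_{j}^{m})$ at round $t$ the scaled width $z_{i,j}^{m}(t)=1/\sqrt{n_{i,j}^{m}(t)}$, so that Hoeffding's inequality gives $\wtW_{i,\cF_{i,t}^{m}}^{m}(\cA_{i,t}^{m})\le 2\sqrt{\log(1/\delta)}\sum_{a_{j}^{m}\in\cA_{i,t}^{m}} z_{i,j}^{m}(t)$. There are in total $\sum_{t,i}|\cA_{i,t}^{m}|$ summed up to $QT$ such widths; I would pool them into a single list, sort them in decreasing order as $\tilde z_{1}\ge\tilde z_{2}\ge\cdots\ge\tilde z_{QT}$, and count how often the width can exceed a threshold $\varepsilon$. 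A multi-armed bandit has eluder dimension equal to its number of arms, so after an eluder-style pigeon-hole the number of rounds on which any one pair has width larger than $\varepsilon$ is at most $(1+4/\varepsilon^{2})K_{m}$; summing over types gives $(1+4/\varepsilon^{2})\sum_{m}|\cA_{i,t}^{m}|K_{m}\le (1+4/\varepsilon^{2})QK_{\max}$. Inverting yields $\tilde z_{l}\le \min(1,2/\sqrt{l/(QK_{\max})-1})$, and summing via the integral estimate $\sum_{l=1}^{QT} l^{-1/2}\le 2\sqrt{QT}$ produces the leading $8Q\log(QT)\sqrt{K_{\max}T}$ contribution.

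Next, for the failure term I would take $\delta = 1/(QT)$ and apply a union bound over the $NK$ (firm, worker) pairs: since each $\widehat{\mu}_{i,j}^{m,LS}-\mu_{i,j}^{m}$ is $(2n_{i,j}^{m})^{-1}$-sub-Gaussian, Hoeffding gives $\mathbb{P}(\overline{E}_{t})\le NK/(QT)$, so the accumulated contribution over $t$ is at most $NK/Q$. Combining this with the width bound yields exactly the stated $\widetilde{\mathcal{O}}(Q\sqrt{K_{\max}T})+NK/Q$ rate.

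I expect the main obstacle to be the width step, specifically the passage from a per-firm, per-type eluder count to the pooled bound $\sum_{m}|\cA_{i,t}^{m}|K_{m}\le QK_{\max}$. This step is what delivers the $\sqrt{K_{\max}}$ rather than $\sqrt{\sum_{m}K_{m}}$ dependence and therefore captures the large-market improvement advertised in the theorem; in particular it requires being careful that the double-matching structure does not re-introduce cross-type coupling that would force a $\sum_{m}K_{m}$ factor. Once this counting is pinned down, the rest of the argument is a bookkeeping exercise in Hoeffding union bounds and the standard $\sum l^{-1/2}$ integration.
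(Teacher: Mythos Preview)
Your proposal is correct and follows essentially the same route as the paper: invoke the regret-decomposition lemma (TS makes $\cA_{i,t}^{m}$ and $\cA_{i}^{m,*}$ conditionally exchangeable), bound the pooled widths via the eluder-dimension counting argument and the $\sum_{m}|\cA_{i,t}^{m}|K_{m}\le QK_{\max}$ inequality to obtain the $8Q\log(QT)\sqrt{K_{\max}T}$ term, and handle the failure event by a union bound over all $NK$ firm--worker pairs with $\delta=1/(QT)$ to obtain $NK/Q$. You have also correctly identified the crux of the argument, namely that the pooling step $\sum_{m}|\cA_{i,t}^{m}|K_{m}\le QK_{\max}$ is what produces the $\sqrt{K_{\max}}$ rather than $\sqrt{\sum_{m}K_{m}}$ dependence.
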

\begin{proof}
We decompose the Bayesian Social Welfare Gap for all firms by 
\begin{equation}
    \begin{aligned}
        \mathfrak{R}(T) &= \mathbb{E}_{\theta \in \Theta}\bigg[\sum_{i=1}^{N}R_{i}(T,\theta)\bigg] 
        = 
        \mathbb{E}_{\theta \in \Theta}\bigg[ \sum_{i=1}^{N}\sum_{m=1}^{M}\sum_{t=1}^{T}\mu_{i,\overline{u}_{i}^{m}(t)}(t) - \sum_{i=1}^{N}\sum_{m=1}^{M}\sum_{t=1}^{T} \mu_{i,u_{i}^{m}}(t)|\theta\bigg]\\
        &= \sum_{i=1}^{N}\sum_{t=1}^{T}\mathbb{E}_{\theta \in \Theta}\bigg[ \sum_{m=1}^{M}
        (\mu_{i,\overline{u}_{i}^{m}(t)}(t) - \mu_{i,u_{i}^{m}}(t))
        |\theta\bigg]\\
        &=
        \mathbb{E}_{\theta \in \Theta}\bigg[\sum_{t=1}^{T}
        \sum_{i=1}^{N}\sum_{m=1}^{M}
        \cI_{i,t}^{m}
        |\theta\bigg]\\
        &=\mathbb{E}_{\theta \in \Theta}
        \bigg[\sum_{t=1}^{T}
        \cI_{t}
        |\theta\bigg]
    \end{aligned}
\end{equation}
where we define $\cI_{i,t}^{m} = \mu_{i,\V{\theta}}^{m}(\cA_{i}^{m,*}) - \mu_{i, \V{\theta}}^{m}(\cA_{i,t}^{m})$ and $\cI_{t} = \sum_{i=1}^{N}\sum_{m=1}^{M}\cI_{i,t}^{m}$. Here $\cA_{i}^{m,*}$ is the optimal matched workers for firm $p_{i}$ of type $m$ and $\cA_{i,t}^{m}$ is the actual matched workers for firm $p_{i}$ of type $m$ at time $t$ under the instance $\theta$.
 
Based Lemma \ref{lem: regret deco}, $\mathfrak{R}(T)$ is upper bounded by $\bE\sum_{t=1}^{T}\big[ C\V{1}(\overline{E}_{t}) +  \sum_{i=1}^{N}\sum_{m=1}^{M}\wtW_{i,\cF_{i,t}^{m}}(\cA_{i,t}^{m})\big]$. The first term, the sum of the bad event probability $\bE\sum_{t=1}^{T}C\V{1}(\overline{E}_{t}) = C\sum_{t=1}^{T}\bP(\overline{E}_{t})$, which is upper bounded by $NK/Q$ based on Lemma \ref{lem: bad event prob} and $C\leq 1$. The second term, the sum of confidence widths is upper bounded by $8Q\log(QT) \sqrt{TK_{\max}}$ based on Lemma \ref{lem: confidence width}. Thus the Bayesian regret is upper bounded by $8Q\log(QT) \sqrt{TK_{\max}} + NK/Q$.
\end{proof}

\section{Incentive-Compatibility}
\label{sec: app-strategy}
In this section, we discuss the incentive-compatibility property of $\mmts$. That is, if one firm does not follow the $\mmts$ when all other firms submit their $\mmts$ preferences, that firm cannot benefit (matched with a better worker than his optimal stable matching worker) over a sublinear order. 
As we know, 
\citet{dubins1981machiavelli}
discussed the \emph{Machiavelli} firm could not benefit from incorrectly stating their true preference when there exists a unique stable matching. However, when one side's preferences are unknown and need to be learned through data, this result no longer holds. Thus, the maximum benefits that can be gained by the Machiavelli firm are under-explored in the setting of learning in matching. \citet{liu2020competing} discussed the benefits that can be obtained by Machiavelli firm when other firms follow the centralized-UCB algorithm with the problem setting of one type of worker and quota equal one in the market.

We now show in $\cmcp$,  when all firms except one $p_{i}$ submit their $\mmts$-based preferences to the 
matching platform, the firm $p_{i}$ has an incentive also to submit preferences based on their sampling rankings in a \emph{long horizon}, so long as the matching result do not have multiple stable solutions. Now we establish the following lemma, which is an upper bound of the expected number of pulls that a firm $p_{i}$ can match with a $m$-type worker that is better than their optimal $m$-type workers, regardless of what preferences they submit to the platform.

Let's use $\cH_{i,l}^{m}$ to define the achievable \emph{sub-matching} set of  $\V{u}^{m}$ when all firms follow the $\mmts$, which represents firm $p_{i}$ and $\mw$ $a_{l}^{m}$ is matched such that $a_{l}^{m} \in \V{u}_{i}^{m}$. Let \textUpsilon$_{\V{u}^{m}}(T)$ be the number of times sub-matching $\V{u}^{m}$ is played by time $t$. We also provide the blocking triplet in a matching definition as follows.

\begin{defn}[Blocking triplet]
A blocking triplet $(p_{i}, a_{k}, a_{k'})$ for a matching $u$ is that there must exist a firm $p_{i}$ and worker $a_{j}$ that they both prefer to match with each other than their current match. That is, if $a_{k'} \in \V{u}_{i}$, $\mu_{i, k'} < \mu_{i, k}$ and worker $a_{k}$ is either unmatched or $\pi_{k,i} < \pi_{k, \V{u}^{-1}(k)}$.  
\end{defn}

The following lemma presents the upper bound of the number of matching times of $p_{i}$ and $a_{l}^{m}$ by time $T$, where $a_{l}^{m}$ is a \emph{super optimal} $\mw$ (preferred than all stable optimal $\mw$s under true preferences), when all firms follow the $\mmts$.

\begin{lem}
\label{lem: app-IC lemma}
Let \textUpsilon$_{i,l}^{m}(T)$ be the number of times a firm $p_{i}$ matched with a $m$-type worker such that the mean reward of $a_{l}^{m}$ for firm $p_{i}$ is greater than $p_{i}$'s optimal match $\overline{\V{u}}_{i}^{m}$, which is $\mu_{i,a_{l}^{m}}^{m} > \underset{a_{j}^{m} \in \overline{\V{u}}_{i}^{m}}{\max}\mu_{i, j}^{m}$. Then the expected number of matches between $p_{i}$ and $a_{l}^{m}$ is upper bounded by
\begin{equation*}
\bE[\text{\textUpsilon}_{i,l}^{m}(T)] \leq 
        \underset{S^{m} \in \cC(\cH_{i,l}^{m})}{\min}
        \sum_{(p_{j}, a_{k}^{m}, a_{k'}^{m})\in S^{m}}
        \big(
            C^{m}_{i,j,k'}(T) +
            \frac{\log(T)}{d(\mu_{j,\overline{\V{u}}_{i,\min}^{m}}, \mu_{j, k'})}
        \big),
\end{equation*}
where $\overline{\V{u}}_{i,\min}^{m} = \underset{a_{k}^{m} \in \overline{\V{u}}_{j}^{m}}{\argminE}\hspace{1mm} \mu_{i,k}^{m}$, and  $C^{m}_{i,j,k'} = \cO((\log(T))^{-1/3})$.
\end{lem}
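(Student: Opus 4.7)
The plan is to reduce the event that $p_{i}$ is matched with a super-optimal worker $a_{l}^{m}$ to the presence of a blocking triplet in the realized matching, and then bound the occurrence of each such triplet by a single-agent Thompson Sampling argument. Fix a round $t$ in which $a_{l}^{m}\in \V{u}_{i}^{m}(t)$. Because $a_{l}^{m}$ is strictly preferred by $p_{i}$ to every worker in its firm-optimal stable set $\overline{\V{u}}_{i}^{m}$, the realized sub-matching $\V{u}^{m}\in\cH_{i,l}^{m}$ cannot coincide with the firm-optimal stable matching under the \emph{true} preferences. By the stability of the DA-based double matching output (Theorem \ref{thm: proof stability}) and the standard rural-hospitals style comparison with the true-preference stable matching, any such $\V{u}^{m}$ must contain at least one blocking triplet $(p_{j}, a_{k}^{m}, a_{k'}^{m})$: there must be a firm $p_{j}$ whose actually matched worker $a_{k'}^{m}$ is worse (in $\mu_{j,\cdot}^{m}$) than a worker $a_{k}^{m}$ which would have accepted $p_{j}$ under true preferences but did not because of $p_{j}$'s \emph{sampled} ranking.

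The second step is to translate ``triplet appears in the round-$t$ matching'' into an event concerning only $p_{j}$'s Thompson samples. Conditional on all firms' samples $\{\widehat{\V{\mu}}_{i}^{m}(t)\}$, the double-matching output is deterministic, and the triplet $(p_{j}, a_{k}^{m}, a_{k'}^{m})$ can appear only if $p_{j}$'s sampled ranking inverts the true order of $a_{k'}^{m}$ against the least-preferred worker $\overline{\V{u}}_{i,\min}^{m}$ in $p_{j}$'s firm-optimal stable set of type $m$. The reason $\overline{\V{u}}_{i,\min}^{m}$ is the right reference is the bottom-of-the-quota mechanic of the double-matching: a misranking only propagates into an incorrect match when it displaces precisely the marginally-accepted worker in $p_{j}$'s stable set. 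Under $\mmts$, the number of rounds up to $T$ in which $p_{j}$'s Thompson sample inverts $\mu_{j,\overline{\V{u}}_{i,\min}^{m}}^{m}$ versus $\mu_{j,k'}^{m}$ is controlled by the Agrawal--Goyal-style TS regret bound, giving
\begin{equation*}
\bE\bigl[\#\{t\le T: (p_{j},a_{k}^{m},a_{k'}^{m}) \text{ is present}\}\bigr]
\;\le\;
\frac{\log T}{d\bigl(\mu_{j,\overline{\V{u}}_{i,\min}^{m}}^{m},\mu_{j,k'}^{m}\bigr)}
\,+\, C_{i,j,k'}^{m}(T),
\end{equation*}
with the $\cO((\log T)^{-1/3})$ tail $C_{i,j,k'}^{m}(T)$ arising from the three-phase decomposition of the TS posterior concentration argument (sub-optimal pulls before the posterior has concentrated, saturated arm pulls, and concentrated pulls), exactly as in the classical analysis of Thompson Sampling.

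The third step is the covering argument. By definition, every sub-matching $\V{u}^{m}\in\cH_{i,l}^{m}$ contains at least one triplet from any $S^{m}\in\cC(\cH_{i,l}^{m})$. Hence
\begin{equation*}
\bE[\text{\textUpsilon}_{i,l}^{m}(T)]
\;=\;\sum_{\V{u}^{m}\in\cH_{i,l}^{m}} \bE[\text{\textUpsilon}_{\V{u}^{m}}(T)]
\;\le\;\sum_{(p_{j},a_{k}^{m},a_{k'}^{m})\in S^{m}}
\bE\bigl[\#\{t\le T: \text{triplet present}\}\bigr],
\end{equation*}
and combining with the per-triplet bound and then minimizing over $S^{m}\in\cC(\cH_{i,l}^{m})$ yields the statement of the lemma. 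The main obstacle is step two: coupling TS posterior samples with the DA output, since $p_{j}$'s observed reward at time $t$ depends on the whole realized matching, which in turn depends on everyone's samples. I expect to handle this by conditioning on the filtration $\wtH_{j,t}$ and arguing that, because $\mmts$ always matches $p_{j}$ with \emph{some} $m$-type worker (the quota $q_{j}^{m}\ge 1$ is always filled in the first DA round), the information $p_{j}$ accrues about $\mu_{j,\overline{\V{u}}_{i,\min}^{m}}^{m}$ and $\mu_{j,k'}^{m}$ is at least as rich as in the single-agent TS problem, so the Agrawal--Goyal bound applies directly and the cross-agent coupling only helps.
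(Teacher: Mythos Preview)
Your three-step skeleton (instability $\Rightarrow$ blocking triplet, per-triplet Thompson-Sampling bound, covering and minimization over $S^{m}$) matches the paper's route, and Steps~2--3 are essentially the paper's argument, with Komiyama--Honda--Nakagawa \citep{komiyama2015optimal} supplying the $\log T/d(\cdot,\cdot)+\cO((\log T)^{-1/3})$ bound you attribute to Agrawal--Goyal.

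There is, however, a real gap in Step~1. You assert that whenever $a_{l}^{m}\in\V{u}_{i}^{m}(t)$ with $a_{l}^{m}$ super-optimal, the realized sub-matching contains a blocking triplet $(p_{j},a_{k}^{m},a_{k'}^{m})$ ``because of $p_{j}$'s \emph{sampled} ranking''. But the TS bound in Step~2 can only be applied to a firm $p_{j}$ that is actually running $\mmts$; in the incentive-compatibility setting of this lemma, $p_{i}$ may deviate arbitrarily, so you must show that some blocking triplet involves a firm $p_{j}\neq p_{i}$. The rural-hospitals comparison you invoke does not give this: rural hospitals concerns invariance of the matched set across stable matchings, not the location of blocking pairs in an unstable one. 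The paper closes this gap with a contradiction argument using Theorem~4.2 of Abeledo--Rothblum \citep{abeledo1995paths}: if \emph{every} blocking triplet of $\V{u}^{m}$ (under true preferences) involved only $p_{i}$, then one could iteratively satisfy blocking pairs in a gender-consistent order and reach a stable matching in which $p_{i}$ is never worse off than $a_{l}^{m}$, contradicting the fact that $a_{l}^{m}$ strictly dominates every worker in $\overline{\V{u}}_{i}^{m}$. Hence some $p_{j}\neq p_{i}$ is blocked, and since all firms other than $p_{i}$ follow $\mmts$, $p_{j}$'s ranking inversion is a genuine TS event to which the Komiyama et al.\ bound applies.

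A secondary point: your appeal to Theorem~\ref{thm: proof stability} in Step~1 is misplaced. That theorem guarantees stability of the double-matching output with respect to the \emph{sampled} preferences, not the true ones; the instability you need is with respect to the true $\mu$'s and follows directly from $a_{l}^{m}$ being super-optimal for $p_{i}$, not from any property of the algorithm. Once you replace the rural-hospitals handwave by the Abeledo--Rothblum path argument (and drop the Theorem~\ref{thm: proof stability} reference), your proposal coincides with the paper's proof.
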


Then we provide the benefit (lower bound of the regret) of Machiavelli firm $p_{i}$ can gain by not following the $\mmts$ from matching with $m$-type workers. Let's define the \emph{super worker reward gap} as $\overline{\Delta}_{i,l}^{m} = \underset{a_{j}^{m} \in \overline{\V{u}}_{i}^{m}}{\max}\mu_{i, j}^{m} - \mu_{i, l}^{m}$, where $a_{l}^{m} \notin \overline{\V{u}}_{i}^{m}$.
\begin{thm}
\label{thm: app-IC}
Suppose all firms other than firm $p_{i}$ submit preferences according to the $\mmts$ to the centralized platform. Then the following upper bound on firm $p_{i}$'s optimal regret for $m$-type workers holds:
\begin{equation}
    \begin{aligned}
        R_{i}^{m}(T,\theta) \geq \sum_{l: \overline{\Delta}_{i,l}^{m} < 0} \overline{\Delta}_{i,l}^{m}
        \Bigg[
        \underset{S^{m} \in \cC(\cH_{i,l}^{m})}{\min}
        \sum_{(p_{j}, a_{k}^{m}, a_{k'}^{m})\in S^{m}}
        \bigg(
            C^{m}_{i,j,k'} +
            \frac{\log(T)}{d(\mu_{j, \overline{\V{u}}_{i,\min}^{m}}, \mu_{j, k'})}
        \bigg)
        \Bigg]
    \end{aligned}
\end{equation}
where $\overline{\V{u}}_{i,\min}^{m} = \underset{a_{k}^{m} \in \overline{\V{u}}_{j}^{m}}{\argminE}\hspace{1mm} \mu_{i,k}^{m}$, and  
$C^{m}_{i,j,k'} = \cO((\log(T))^{-1/3})$.
\end{thm}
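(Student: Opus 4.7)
The plan is to derive Theorem~\ref{thm: app-IC} directly from Lemma~\ref{lem: app-IC lemma}, which is how the paper itself describes the argument. The key idea is that firm $p_i$'s $m$-type optimal regret $R_i^m(T,\theta)$ is a sum of per-round, per-worker score differences, and only super-optimal workers (those with $\mu_{i,l}^m > \max_{a_j^m \in \overline{u}_i^m}\mu_{i,j}^m$, equivalently $\overline{\Delta}_{i,l}^m < 0$) can contribute negatively. Once the regret is lower-bounded by those negative contributions, the only remaining quantitative ingredient is an upper bound on the number of super-optimal matches, which is exactly what Lemma~\ref{lem: app-IC lemma} provides.

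First, I would rewrite $R_i^m(T,\theta)$ by regrouping the temporal sum by the identity of the matched worker. For each worker $a_l^m$, let $N_{i,l}^m(T)$ denote the number of rounds in which $a_l^m \in u_t^m(p_i)$. Using $R_i^m(T,\theta)=\sum_{t=1}^T[\mu_{i,\overline{u}_i^m}-\mu_{i,u_t^m(p_i)}(t)]$ and a standard swap argument, each match of $p_i$ with a super-optimal worker $a_l^m$ effectively displaces some worker $a_j^m \in \overline{u}_i^m$, contributing $\mu_{i,j}^m-\mu_{i,l}^m \geq \overline{\Delta}_{i,l}^m$ per match (where the inequality goes in this direction precisely because $\overline{\Delta}_{i,l}^m$ uses the \emph{maximum} over $\overline{u}_i^m$, yielding the least-negative, and hence conservative, bound). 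All contributions coming from matches with workers in $\overline{u}_i^m$ itself, or from worse-than-optimal workers, are non-negative, and can be dropped. This produces
\[
R_i^m(T,\theta) \;\geq\; \sum_{l:\,\overline{\Delta}_{i,l}^m<0} \overline{\Delta}_{i,l}^m \cdot \text{\textUpsilon}_{i,l}^m(T).
\]

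Next, I would take expectations over the randomness of the $\mmts$ policy and invoke Lemma~\ref{lem: app-IC lemma} on each term. Because $\overline{\Delta}_{i,l}^m<0$, multiplying both sides of the upper bound for $\mathbb{E}[\text{\textUpsilon}_{i,l}^m(T)]$ by this negative constant reverses the inequality, so the upper bound on the count becomes a \emph{lower} bound on the (negative) contribution to the regret. Substituting in the explicit form furnished by Lemma~\ref{lem: app-IC lemma}, namely the minimum over $S^m \in \cC(\cH_{i,l}^m)$ of the $\sum_{(p_j,a_k^m,a_{k'}^m)\in S^m}\bigl(C_{i,j,k'}^m+\log(T)/d(\mu_{j,\overline{u}_{i,\min}^m},\mu_{j,k'})\bigr)$ expression, and summing over all super-optimal workers $a_l^m$, recovers exactly the right-hand side of Theorem~\ref{thm: app-IC}.

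The main obstacle, and the only step requiring real care, is the swap-accounting in the first paragraph: since each round matches $p_i$ with a \emph{set} of $q_i^m$ workers rather than a single one, one must check that it is legitimate to attribute the per-round score gap to individual workers and to lower bound the displacement by $\overline{\Delta}_{i,l}^m$. After this bookkeeping is settled, the remainder is purely mechanical application of Lemma~\ref{lem: app-IC lemma} together with sign tracking through the inequality.
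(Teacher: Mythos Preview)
Your overall plan matches the paper's: the paper simply says ``This result can be directly derived from Lemma~\ref{lem: IC lemma}'' and you are filling in that derivation by (i) lower-bounding $R_i^m$ by the negative contributions from super-optimal matches and (ii) plugging in the lemma's upper bound on $\bE[\text{\textUpsilon}_{i,l}^m(T)]$ with the sign flip. So the route is right.

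However, the swap step you flagged as the ``main obstacle'' is not carried out correctly. You assert that a match with a super-optimal $a_l^m$ displaces some $a_j^m\in\overline{u}_i^m$ and contributes $\mu_{i,j}^m-\mu_{i,l}^m\geq \overline{\Delta}_{i,l}^m$. But $\overline{\Delta}_{i,l}^m=\max_{a_{j'}^m\in\overline{u}_i^m}\mu_{i,j'}^m-\mu_{i,l}^m$, so for any particular displaced $a_j^m$ one has $\mu_{i,j}^m-\mu_{i,l}^m\leq \overline{\Delta}_{i,l}^m$, not $\geq$. Concretely, if $\overline{u}_i^m=\{a_1,a_2\}$ with $\mu_{i,1}^m=1,\mu_{i,2}^m=0$ and $a_3$ is super-optimal with $\mu_{i,3}^m=2$, then $\overline{\Delta}_{i,3}^m=-1$, yet the round $u_t^m=\{a_1,a_3\}$ has instantaneous regret $-2<-1$, so the pointwise inequality $R_i^m(T,\theta)\geq \sum_l \overline{\Delta}_{i,l}^m\,\text{\textUpsilon}_{i,l}^m(T)$ fails. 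Using the $\max$ in $\overline{\Delta}_{i,l}^m$ gives the \emph{least} negative quantity, which is the wrong side for a lower bound via displacement. The paper does not spell out this bookkeeping either, so you are right that this is where the work lies; but the fix is to pair against the \emph{minimum} optimal reward (or otherwise account for which worker is actually displaced), not the maximum, and then check that the resulting constant still matches what the statement claims.
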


This result can be directly derived from Lemma \ref{lem: IC lemma}.
Theorem \ref{thm: IC} demonstrates that there is no sequence of preferences that a firm can submit to the centralized platform that would result in negative optimal regret greater than $\cO(\log T)$ in magnitude within type $m$. When considering multiple types together for firm $p_{i}$, this magnitude remains $\cO(\log T)$ in total. Theorem \ref{thm: IC} confirms that, when there is a unique stable matching in type $m$, firms cannot gain significant advantage in terms of firm-optimal stable regret by submitting preferences other than those generated by the $\mmts$ algorithm. An example is provided in Section \ref{exp: negative regret} to illustrate this incentive compatibility property.
Figure \ref{fig: neg regret} illustrates the total regret, with solid lines representing the aggregate regret over all types for each firm, and dashed lines representing the regret for each type. It is observed that the type 1 regret of firm 1 is negative, owing to the inaccuracies in the rankings submitted by both firm 1 and firm 2. A detailed analysis of this negative regret pattern is given in Section \ref{sec: learning parameters}.

\subsection{Proof of Incentive Compatibility}
\label{supp-lem: incentive comp}

\begin{lem}
\label{supp-lem: IC lemma}
Let \textUpsilon$_{i,l}^{m}(T)$ be the number of times a firm $p_{i}$ matched with a $m$-type worker such that the mean reward of $a_{l}^{m}$ for firm $p_{i}$ is greater than $p_{i}$'s optimal match $\overline{u}_{i}^{m}$, which is $\mu_{i,a_{l}^{m}}^{m} > \underset{a_{j}^{m} \in \overline{u}_{i}^{m}}{\max}\mu_{i, j}^{m}$. Then
\begin{equation}
    \begin{aligned}
        \bE[\text{\textUpsilon}_{i,l}^{m}(T)] \leq 
        \underset{S^{m} \in \cC(\cH_{i,l}^{m})}{\min}
        \sum_{(p_{j}, a_{k}^{m}, a_{k'}^{m})\in S^{m}}
        \bigg(
            C^{m}_{i,j,k'}(T) +
            \frac{\log(T)}{d(\mu_{j,\overline{u}_{i,\min}^{m}}, \mu_{j, k'})}
        \bigg)
    \end{aligned}
\end{equation}
where $\overline{u}_{i,\min}^{m} = \underset{a_{k}^{m} \in \overline{u}_{j}^{m}}{\argminE}\hspace{1mm} \mu_{i,k}^{m}$,  $C^{m}_{i,j,k'} = \cO((\log(T))^{-1/3})$.
\end{lem}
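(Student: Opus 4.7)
The plan is to reduce the event that $p_i$ is matched with a super-optimal $\mw$ to the occurrence of a blocking triplet under the true preferences, and then to invoke a Thompson-sampling concentration bound on each triplet. First I would exploit the stability of the matching produced by double matching (Theorem~\ref{thm: proof stability}). If at round $t$ MMTS assigns some $a_l^m$ to $p_i$ with $\mu_{i,l}^m > \max_{a \in \overline{u}_i^m}\mu_{i,a}^m$, then the realized sub-matching cannot be stable with respect to the true preferences, so there must exist a blocking triplet $(p_j, a_k^m, a_{k'}^m)$: firm $p_j$ is currently matched with $a_{k'}^m$ yet truly prefers $a_k^m$, and $a_k^m$ is willing to switch to $p_j$. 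Because every sub-matching in $\cH_{i,l}^m$ carries at least one such triplet, the definition of $\cC(\cH_{i,l}^m)$ lets me choose a family $S^m$ of triplets that jointly covers $\cH_{i,l}^m$. The count $\text{\textUpsilon}_{i,l}^m(T)$ is then dominated by the sum, over $(p_j, a_k^m, a_{k'}^m) \in S^m$, of the number of rounds in which the triplet is realized.

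Second, I would bound the expected number of rounds in which a fixed triplet $(p_j, a_k^m, a_{k'}^m)$ occurs under MMTS. For the triplet to realize, $p_j$'s sampled posterior mean for $a_{k'}^m$ must exceed its sampled posterior mean for a member of its optimal stable set $\overline{u}_j^m$, in particular for $\overline{u}_{j,\min}^m = \argminE_{a \in \overline{u}_j^m} \mu_{j,a}^m$. Conditioning on $p_j$'s local history (which is what drives its posterior for $a_{k'}^m$), the standard TS concentration (Agrawal--Goyal, Korda--Kaufmann--Munos) yields that the expected number of pulls of the inferior arm $a_{k'}^m$ is $\log(T)/d(\mu_{j,\overline{u}_{j,\min}^m}, \mu_{j,k'}^m)$ plus a lower-order correction. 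All remaining contributions---posterior tails, cross-firm coupling through DA, and the probability that the other firms' samples produce the specific ranking profile required to form the triplet inside double matching---are absorbed into the correction $C^m_{i,j,k'}(T) = \cO((\log T)^{-1/3})$.

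Finally, summing over $(p_j, a_k^m, a_{k'}^m) \in S^m$ and minimizing over $S^m \in \cC(\cH_{i,l}^m)$ delivers the stated bound. The main obstacle will be the second step: because MMTS couples the firms' matches through the DA mechanism and through the type and total quota constraints, the single-agent TS analysis does not apply verbatim to $p_j$'s pull count of $a_{k'}^m$. I would manage this coupling by observing that $p_j$'s posterior for $a_{k'}^m$ only updates when $(p_j, a_{k'}^m)$ are actually matched, so the standard posterior-concentration inequalities still control its sampled rank; the residual dependence is pushed into the $\cO((\log T)^{-1/3})$ slack, in the spirit of the centralized-UCB analysis of \citet{liu2020competing} adapted to Thompson sampling. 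A union bound over the chosen cover closes the argument.
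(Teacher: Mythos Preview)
Your high-level outline matches the paper's route: super-optimality of $a_l^m$ forces instability under the true preferences, hence a blocking triplet exists; then cover $\cH_{i,l}^m$ by triplets and bound each triplet's occurrence count by a single-arm Thompson-sampling estimate. Two points, however, deserve attention.

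First, you omit the one genuinely non-trivial structural step in the paper's argument: showing that the blocking triplet can always be taken with $p_j \neq p_i$. This matters because the lemma is designed to feed Theorem~\ref{thm: IC}, where $p_i$ is the potentially deviating firm and need not follow MMTS; the TS pull-count bound can only be applied to a firm that is actually running TS. Merely knowing the matching is unstable under the true preferences gives you \emph{some} blocking triplet, but it could a priori involve $p_i$ alone, in which case your second step collapses. The paper closes this gap by a contradiction argument that relies on the path-to-stability result of Abeledo and Rothblum (Theorem~4.2 in \citep{abeledo1995paths}): if every blocking triplet involved only $p_i$, one could resolve them in a gender-consistent order, and since each resolution only improves $p_i$'s match, $p_i$ would keep $a_l^m$ all the way to a stable matching, contradicting super-optimality. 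You should insert this step before appealing to the TS bound on $p_j$.

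Second, a minor point: your invocation of Theorem~\ref{thm: proof stability} is a red herring. That theorem certifies stability with respect to the \emph{sampled} preferences, whereas what you need here is instability with respect to the \emph{true} preferences, which follows immediately from the definition of the firm-optimal match and requires no algorithmic property of double matching. The paper's proof does not use Theorem~\ref{thm: proof stability} at this point. Also, for the per-triplet bound the paper cites \citet{komiyama2015optimal} directly, obtaining the KL form $\log(T)/d(\mu_{j,\overline{u}_{i,\min}^m},\mu_{j,k'})$ with an additive $\cO((\log T)^{-1/3})$ remainder, rather than the Agrawal--Goyal style bounds you mention; the substance is the same, but the constant and the KL divergence in the statement come from that reference.
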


\begin{proof}

We claim that if firm $p_{i}$ is matched with a \emph{super optimal} $\mw$ $a_{l}^{m}$ in any round, the matching $u^{m}$ must be unstable according to true preferences from both sides. We then state that there must exist a $m$-type blocking triplet $(p_{j}, a_{k}^{m}, a_{k'}^{m})$ where $p_{j} \neq p_{i}$.

We prove it by contradiction. Suppose all blocking triplets in matching $u$ \emph{only} involve firm $p_{i}$ within $\mw$. By Theorem 4.2 in \citep{abeledo1995paths}, we can start from any matching $u$ to a stable matching by iteratively satisfying blocking pairs in a \emph{gender consistent} order, which means that we can provide a well-defined order to determine which blocking triplet should be satisfied (matched) first within preferences from firm $p_{i}$\footnote{This gender consistent requirement is to satisfy a blocking pair $(p_{j}, a_{k}^{m})$ and those blocking pairs can be ordered before we break their current matches if any, and then match $p_{j}$ and $a_{k}^{m}$ to get a new matching.}. Doing so, firm $p_{i}$ can never get a worse match than $a_{l}^{m}$ since a blocking pair will let firm $p_{i}$ match with a better $\mw$ than $a_{l}^{m}$, or become unmatched as the algorithm proceeds, so the matching will remain unstable. The matching will continue, which is a contradiction.

Hence there must exist a firm $p_{j} \neq p_{i}$ such that $p_{j}$ is part of a blocking triplet in $u$ when firm $p_{i}$ is matched with $\mw$ $a_{l}^{m}$ under the matching $u$. In particular, based on the Theorem 9 (Dubins-Freedman Theorem), firm $p_{j}$ must submit its TS preference.

Let $L_{j,k,k'}^{m}(T)$ be the number of times firm $p_{j}$ matched with $\mw$ $a_{k'}^{m}$ when the triplet $(p_{j}, a_{k}^{m}, a_{k'}^{m})$ is blocking the matching provided by the centralized platform. Then by the definition 
\begin{equation}
\label{eq: TS exceeds times}
    \begin{aligned}
        \sum_{u^{m} \in B_{j,k,k'}^{m}}\text{\textUpsilon}_{u^{m}}(T) = L_{j,k,k'}^{m}(T)
    \end{aligned}
\end{equation}
By the definition of a blocking triplet, we know that if $p_{j}$ is matched with $\mw$ $a_{k'}^{m}$ when the blocking triplet $(p_{j}, a_{k}^{m}, a_{k'}^{m})$ is blocking, the TS sample must have a higher mean reward for $a_{k'}^{m}$ than $a_{k}^{m}$. In other words, we need to bound the expected number of times that the TS mean reward for $\mw$ $a_{k'}^{m}$ is greater than $a_{k}^{m}$.
From \citep{komiyama2015optimal}, we know that the number of times that $(p_{j}, a_{k}^{m}, a_{k'}^{m})$ forms a blocking pair in Thompson sampling, is upper bounded by 
\begin{equation}
\label{eq: TS suboptimal upper bound}
    \begin{aligned}
        \bE L_{j,k,k'}^{m} \leq C^{m}_{i,j,k'}(T) +
            \frac{\log(T)}{d(\mu_{j,\overline{u}_{i,\min}^{m}}, \mu_{j, k'})}
    \end{aligned}
\end{equation}
where $\overline{u}_{i,\min}^{m} = \underset{a_{k}^{m} \in \overline{u}_{j}^{m}}{\argminE}\hspace{1mm} \mu_{i,k}^{m}$ and $C^{m}_{i,j,k'} = \cO((\log(T))^{-1/3})$. The $d(x, y) = x \log(x/y) + (1-x)\log((1-x)/(1-y))$ is the KL divergence between two Bernoulli distributions with expectation $x$ and $y$.

The expected number of times \textUpsilon$_{i,l}^{m}(T)$ a firm $p_{i}$ matched with a $\mw$ such that the mean reward of $a_{l}^{m}$ for firm $p_{i}$ is greater than $p_{i}$'s optimal match $\overline{u}_{i}^{m}$, which is equivalent to the expected number of times viat the achievable sub-matching set \textUpsilon$_{u^{m}}(T)$ where $u^{m} \in \cH_{i,l}^{m}$. So the result then follows from the identity
\begin{equation}
    \bE[\text{\textUpsilon}_{i,l}^{m}(T)] = \sum_{u^{m} \in \cH_{i,l}^{m}} \bE \text{\textUpsilon}_{u^{m}}(T)
\end{equation}
Given a set $\cH^{m}_{i,l}$ of matchings, we say a set $S^{m}$ of triplets $(p_{j}, a_{k}^{m}, a_{k'}^{m})$ is a \emph{cover} of $\cH^{m}_{i,l}$ if
\begin{equation}
    \begin{aligned}
        \wbcup_{(p_{j}, a_{k}^{m}, a_{k'}^{m}) \in S^{m}} B_{j,k,k'}^{m} \supseteq H^{m}_{i,l}
    \end{aligned}
\end{equation}
Let $\cC(H^{m}_{i,l})$ denote the set of covers of $H^{m}_{i,l}$.
Then 
\begin{equation}
    \begin{aligned}
        \bE[\text{\textUpsilon}_{i,l}^{m}(T)] 
        &=
        \bE \sum_{u^{m} \in \cH_{i,l}^{m}}  \text{\textUpsilon}_{u^{m}}(T)\\
        &\leq \bE \min_{S^{m} \in \cC(\cH_{i,l}^{m})} \sum_{(p_{j}, a_{k}^{m}, a_{k'}^{m}) \in S^{m}} \text{\textUpsilon}_{u^{m}}(T)\\
        &= \min_{S^{m} \in \cC(\cH_{i,l}^{m})} \bE \sum_{(p_{j}, a_{k}^{m}, a_{k'}^{m}) \in S^{m}}
        \text{\textUpsilon}_{u^{m}}(T) \\
        &= \min_{S^{m} \in \cC(\cH_{i,l}^{m})} \sum_{(p_{j}, a_{k}^{m}, a_{k'}^{m}) \in S^{m}} \bE  L_{j,k,k'}^{m}(T)\\
        &\leq
        \min_{S^{m} \in \cC(\cH_{i,l}^{m})} \sum_{(p_{j}, a_{k}^{m}, a_{k'}^{m}) \in S^{m}}
        \Bigg(C^{m}_{i,j,k'}(T) +
            \frac{\log(T)}{d(\mu_{j,k}, \mu_{j, k'})}
        \Bigg)\\
        &\leq
        \min_{S^{m} \in \cC(\cH_{i,l}^{m})} \sum_{(p_{j}, a_{k}^{m}, a_{k'}^{m}) \in S^{m}}
        \Bigg(C^{m}_{i,j,k'}(T) +
            \frac{\log(T)}{d(\mu_{j,\overline{u}_{i,\min}^{m}}, \mu_{j, k'})}
        \Bigg)
    \end{aligned}
\end{equation}
where the first inequality is from the property of cover and we select the minimum cover $S^{m}$ from $\cC(\cH_{i,l}^{m})$. And summation in the third line is equivalent to $\sum_{u^{m} \in B^{m}_{j,k,k'}}$. Based on Eq.~\eqref{eq: TS exceeds times}, the third equality is obvious. From \citep{komiyama2015optimal}, we know the expected number of times of matching with the sub-optimal $\mw$ is upper bounded by Eq.~\eqref{eq: TS suboptimal upper bound}. 
\end{proof}

\section{Firm DA Algorithm with type and without type consideration}
\label{supp-sec: algos}
In this section, we present the DA algorithm with type consideration and without type consideration.

\begin{algorithm}[t]
\SetAlgoLined
	\DontPrintSemicolon
	\SetAlgoLined
	\SetKwInOut{Input}{Input}
    \SetKwInOut{Output}{Output}
    \SetKwInOut{Initialize}{Initialize}
	\Input{Type. firms set $\cN$, workers set $\cK_{m}, \forall m \in [M]$; firms to workers' preferences $\V{r}_{i}^{m}, \forall i \in [N], \forall m \in [M]$, workers to firms' preferences $\V{\pi}^{m}, \forall m \in [M]$; firms' type-specific quota $q_{i}^{m}, \forall i \in [N], \forall m \in [M]$, firms' total quota $Q_{i}, \forall i \in [N]$.}
    \Initialize{Empty set $\mathcal{S} = \{\}$, empty sets $S^{m} = \emptyset, \forall m \in [M]$.}
        \For{$m = 1,..., M$}{ 
        	\While{$\exists$ A firm $p$ who is not fully filled with the quota $q^{m}$ and has not contacted every $\mw$}{
        	    Let $a$ be the highest-ranking worker in firm $p$'s preference, to whom firm $p$ has not yet contacted.\\
        	    Now firm $p$ contacts the worker $a$.\\
        	    \uIf{Worker $a$ is free}{
        	        $(p,a)$ become matched (add $(p, a)$ to $S^{m}$).\\
                }\uElse{
                    Worker $a$ is matched to firm $p'$ (add $(p', a)$ to $S^{m}$).\\
                    \uIf{Worker $a$ prefers firm $p'$ to firm $p$}{
                        firm $p$ filled number minus 1 (remove ($p, a$) from $S^{m}$).\\
                    }\uElse{
                        Worker $a$ prefers firm $p$ to firm $p'$.\\
                        firm $p'$ filled number minus 1 (remove ($p', a$) from $S^{m}$).\\
                        ($p, a$) are paired (add ($p, a$) to $S^{m}$).\\
                     }
                }
            }   
            Update: Add $S^{m}$ to $\mathcal{S}$.\\
        }
    \Output{Matching result $\mathcal{S}$.}
    \caption{Firm-Proposing DA Algorithm with Type Consideration.}
    \label{algo:da-type}
\end{algorithm}

\begin{algorithm}[H]
\SetAlgoLined
	\DontPrintSemicolon
	\SetAlgoLined
	\SetKwInOut{Input}{Input}
    \SetKwInOut{Output}{Output}
    \SetKwInOut{Initialize}{Initialize}
	\Input{Worker Types, firms set $\cN$, workers set $\cK_{m}, \forall m \in [M]$; firms to workers' preferences $\V{r}_{i}^{m}, \forall i \in [N], \forall m \in [M]$, workers to firms' preferences $\V{\pi}^{m}, \forall m \in [M]$; firms' type-specific quota $q_{i}^{m}, \forall i \in [N], \forall m \in [M]$, firms' total quota $Q_{i}, \forall i \in [N]$.} 
    \Initialize{Empty set $S$.}
        \While{$\exists$ A firm $p$ who is not fully filled with the quota $ \tilde{Q}$ and has not contacted every worker}{
            Let $a$ be the highest-ranking worker in firm $p$'s preference over all types of workers, to whom firm $p$ has not yet contacted.\\
            Now firm $p$ contacts the worker $a$.\\
            \uIf{Worker $a$ is free}{
                $(p,a)$ become matched (add $(p, a)$ to $S$).\\
            }\uElse{
                Worker $a$ is matched to firm $p'$ (add $(p', a)$ to $S$).\\
                \uIf{Worker $a$ prefers firm $p'$ to firm $p$}{
                    firm $p$ filled number minus 1 (remove ($p, a$) from $S$).\\
                }\uElse{
                    Worker $a$ prefers firm $p$ to firm $p'$.\\
                    firm $p'$ filled number minus 1 (remove ($p', a$) from $S$).\\
                    ($p, a$) are paired (add ($p, a$) to $S$).\\
                 }
            }
        }
    \Output{Matching result $S$.}
	\caption{Firm-Proposing DA Algorithm without Type Consideration \citep{gale1962college}. }
 \label{algo:da-no-type}
\end{algorithm}

\section{Experimental Details}
\label{supp-sec: exps}
In this section, we provide more details about the learned parameters and large market.

\begin{table}[t]
\centering
\caption{True Matching Scores of two types of workers from two firms.}
\begin{tabular}{ccccccc}
\toprule
\multirow{1}{*}{Mean ID}  & \multicolumn{1}{c}{\textbf{Type}} & \multicolumn{1}{c}{\textbf{1}} & \textbf{2}  & \textbf{3} & \textbf{4} & \textbf{5} \\ \hline
\multirow{2}{*}{$\V{\mu}_{1}$}   & 1  & 0.406 & 0.956 & 0.738 & 0.970 & 0.695 \\ \cline{2-7}
 & 2 &  0.932 & 0.241 & 0.040 & 0.657 & 0.289  \\\midrule
\multirow{2}{*}{$\V{\mu}_{2}$}   & 1  & 0.682 & 0.909 & 0.823& 0.204 & 0.218  \\ \cline{2-7}
  & 2 & 0.303 & 0.849 & 0.131 & 0.886 & 0.428 \\\bottomrule
\end{tabular}
\label{table: mean reward of negative regret}
\end{table}

\subsection{Learning Parameters}
\label{sec: learning parameters}
In this section, we present the learning parameters of $(\V{\alpha}, \V{\beta})$ of Example 1.
Besides, we analyze which kind of pattern causes the non-optimal stable matching of Examples 1 and 2.

\begin{figure}
    \centering
    \includegraphics[scale=0.5]{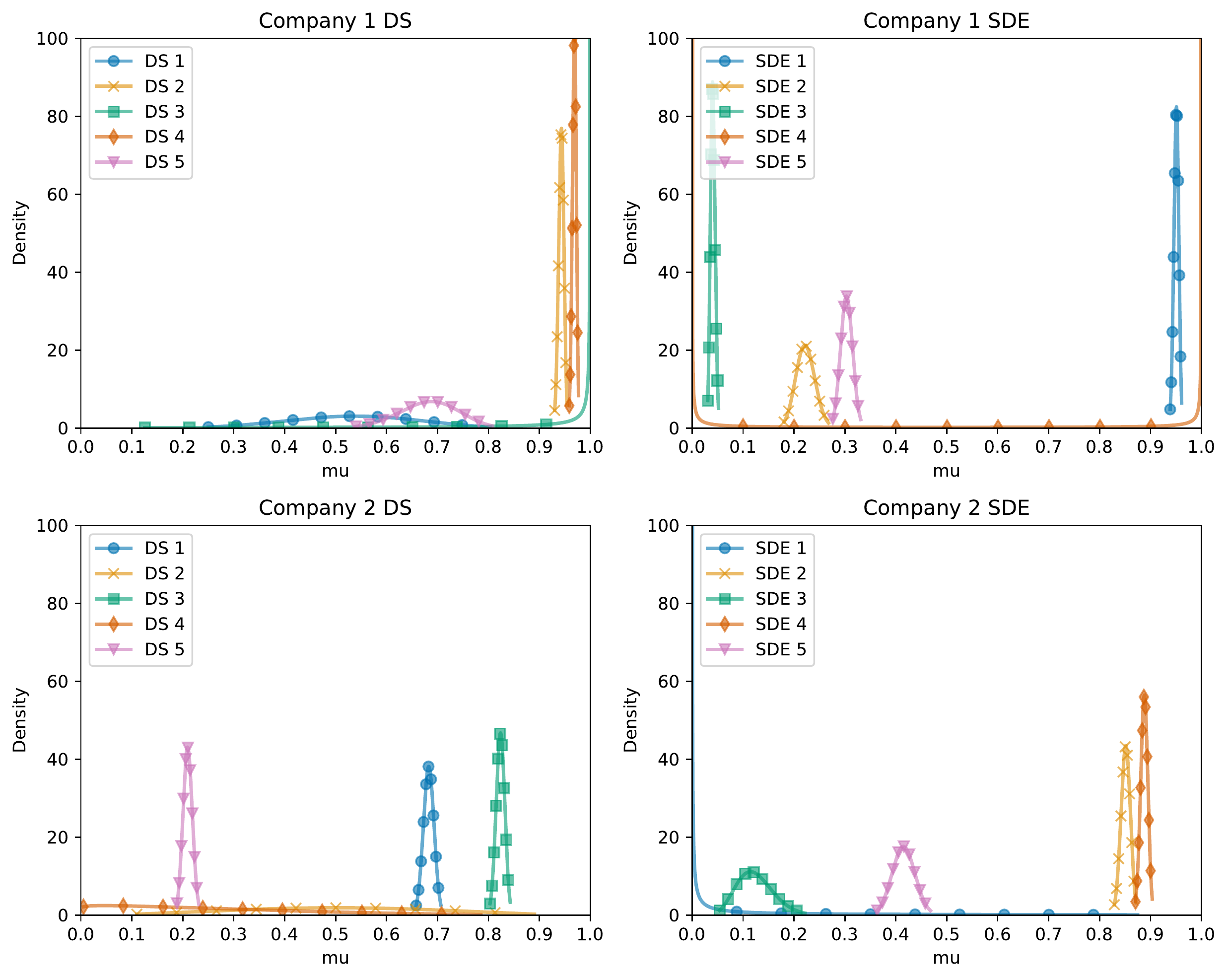}
    \vspace{-0.1in}
    \caption{Posterior distribution of learning parameters for two firms in Example 1.}
    \label{fig:posterior}
\end{figure}

\begin{table}[t]
\centering
\caption{Estimated mean reward and variance of each type of worker in view of two firms. The bold font is to represent the firm's optimal stable matching. $\dag$ represents the difference between the estimated mean and the true mean less than $1\%$. $\ddag$ represents the difference is less than $1.5\%$.}
\begin{tabular}{ccccccc}
\toprule
\multirow{1}{*}{Mean \& Var}  & \multicolumn{1}{c}{\textbf{Type}} & \multicolumn{1}{c}{\textbf{1}} & \textbf{2}  & \textbf{3} & \textbf{4} & \textbf{5} \\ \hline
\multirow{2}{*}{$\hat{\V{\mu}}_{1}$}   
 & 1 (DS) & $0.533_{0.015}$ & $\textbf{0.943}_{0.000}^{\ddag}$ & $0.917_{0.035}$ & $\textbf{0.968}_{0.000}^{\dag}$ & $0.682_{0.003}^{\ddag}$ \\ \cline{2-7}
 & 2 (SDE) & $\textbf{0.950}_{0.000}$ & $0.223_{0.000}$ & $\textbf{0.041}_{0.000}^{\dag}$ & $0.500_{0.208}$ & $\textbf{0.303}_{0.000}^{\ddag}$  \\\midrule
\multirow{2}{*}{$\hat{\V{\mu}}_{2}$}   
 & 1 (DS)  & $\textbf{0.683}_{0.000}^{\dag}$ & $0.500_{0.035}$ & $\textbf{0.823}_{0.000}^{\dag}$ & $0.262_{0.037}$ & $\textbf{0.210}_{0.000}^{\dag}$  \\ \cline{2-7}
  & 2 (SDE) & $0.083_{0.035}$ & $\textbf{0.851}_{0.000}^{\dag}$ & $0.124_{0.001}^{\dag}$ & $\textbf{0.887}_{0.000}^{\dag}$ & $0.415_{0.001}^{\ddag}$ \\\bottomrule
\end{tabular}
\label{table: posterior mean of the example 1}
\end{table}

We show the posterior distribution of $(\V{\alpha}, \V{\beta})$ in Figure \ref{fig:posterior}. 
The first and second row represents the posterior distributions of firm 1 and firm 2 over two types of workers after $T$ rounds interaction. 
The first and second columns in Figure \ref{fig:posterior} represent two firms' posterior distributions over type I and type II workers. 

We find that the posterior distributions of the workers that firms most frequently match with exhibit a relatively sharp shape, indicating that firms can easily construct uncertainty sets over these workers. However, in some instances, the distributions are relatively flat, indicating a lack of exploration. This can be attributed to two possible reasons: (1) the workers in question are not optimal stable matches for the firms, and are thus abandoned early on in the matching process, such as firm 1's DS 1 and DS 5, or (2) the workers are optimal, but are erroneously ranked by the firms and subsequently blocked, such as firm 2's SDE 3. To further illustrate this, we present the posterior mean and variance in Table \ref{table: posterior mean of the example 1}. The optimal stable matches for each firm are represented in bold, and the variance of the distributions is denoted by small font. Additionally, we use the dagger symbol to indicate when the difference between the posterior mean reward and true matching score is less than $1\%$ and $1.5\%$.

\textbf{Pattern Analysis.} We find that firm 1's type I matching in Figure \ref{fig: neg regret}, achieves a negative regret due to the high-frequency matching pattern of $\V{u}_{1} = \{[D_{4}, D_{2}, D_{5}], [S_{1}, S_{5}]\}$, and $\V{u}_{2} = \{ [D_{3}, D_{1}], [S_{4}, S_{2}, S_{3}]\}$. 
That means firm 1 and firm 2 have a correct (stable) matching in the first match $\tilde{\V{u}}_{1} = \{[D_{4}, D_{2}], [S_{1}, S_{5}]\}, \tilde{\V{u}}_{2} = \{ [D_{3}, D_{1}], [S_{4}, S_{2} ]\}$.
In the second match, they both need to compare worker $D_{5}$ and worker $S_{3}$, because all other workers are matched with firms or have been proposed in the first match. 
In Table \ref{table: mean reward of negative regret}, we find that two workers' true mean rewards for firm 1 are $\mu_{1,5}^{1} = 0.695, \mu_{1,3}^{2} = 0.040$ and two workers' estimated rewards for firm 1 are  $\widehat{\mu}_{1,5}^{1} = 0.682, \widehat{\mu}_{1,3}^{2} = 0.041$. These two workers are pretty different and can be easily detected. 
So firm 1 has a high chance of ranking them correctly. 
However, two workers' true rewards for firm 2 are $\mu_{2,5}^{1} = 0.218, \mu_{2,3}^{2} = 0.131$, and two workers' estimated rewards for firm 2 are $\widehat{\mu}_{1,5}^{1} = 0.210, \widehat{\mu}_{1,3}^{2} = 0.124$. These workers are close to each other, where these two posteriors' distributions overlap a lot and can be checked in Figure \ref{fig:posterior}.
So firm 2 has a non-negligible probability to incorrectly rank $S_{3}$ ahead of $D_{5}$. 
Therefore, based on the true preference, firm 2 could match with $S_{3}$ and firm 1 matches with $D_{5}$ with a non-negligible probability rather than the optimal stable matching $(p_{1}, S_{3})$ and $(p_{2}, D_{5})$ by $D_{5}$ preferring firm 2. 

The above pattern links to Section \ref{sec: insufficient exploration-ucb-vs-ts}, incapable exploration, and Section \ref{sec: strategy}, incentive compatibility. Due to the insufficient exploration of $S_{3}$ and $D_{5}$, firm 2 may rank them incorrectly to get a match with $S_{3}$ rather than optimal $D_{3}$ and the regret gap is $\mu_{2,3}^{1} - \mu_{2, 3}^{2} = 0.823 - 0.131 = 0.692$, which is a positive instantaneous regret. Due to the incorrect ranking from firm 2, firm 1 gets a final match with $D_{5}$ rather than optimal $S_{3}$, and suffers a regret gap $\mu_{1,3}^{2} - \mu_{1,5}^{1} = 0.040 - 0.695 = - 0.655$, which is a negative instantaneous regret. Thus firm 1 benefits from firm 2's incorrect ranking and can achieve a total negative regret, as shown in Figure \ref{fig: neg regret}.

\paragraph{Findings from Example 2.}
In our analysis of the non-optimal stable matching in Example 2, we observed that both firms incurred positive total regret, shown in Figure \ref{fig: first match example 2}. We find that the quota setting resulted in all workers of type II being assigned to firms in the first match. As a result, in the second match, the ranking submitted by firm 1 to the centralized platform did not affect firm 2's matching result for type II workers. This can be thought of as an analogy where firms are schools and workers are students. In the second stage of the admission process, school 2 would not participate in the competition for type II students, and its matching outcome would not be affected by the strategic behavior of other schools in the second stage, but rather by the strategic behavior of other schools in the first stage.

\subsection{Large markets}
\label{sec: large market}
In this part, we provide two large market examples to demonstrate the robustness of our algorithm. All preferences are randomly generated and all results are over 50 trials to take the average.

\paragraph{Example 3.} We consider a large market composed of many firms ($N=100$) and many workers ($K_{1} = K_{2} = 300$). Besides, we have $Q_{1} = Q_{2} = 3, q_{1}^{1} = q_{2}^{1} = q_{2}^{1} = q_{2}^{2} = 1$.

\paragraph{Example 4.} We also consider a large market consisting of many workers, and each firm has a large, specified quota and an unspecified type quota. In this setting, $N=10, M = 2, K_{1} = K_{2} = 500, Q_{1} = Q_{2} = 30, q_{1}^{1} = q_{2}^{1} = q_{2}^{1} = q_{2}^{2} = 10$.

\paragraph{Results.} In Figure \ref{fig: large example 3}, we randomly select 10 out of 100 to present firms' total regret, and all those firms suffer sublinear regret. In Figure \ref{fig: large example 4}, we also show all 10 firms' total regret. Comparing Examples 3 and 4, we find that firms' regret in Example 3 is less than firms' regret from Example 4 because in Example 4, each firm has more quotas (30 versus 3), which demonstrates our findings from Theorem \ref{thm: regret upper bound}. 
In addition, we find there is a sudden exchange in Figure \ref{fig: large example 3} nearby time $t=1500$. We speculate this phenomenon is due to the small gap between different workers and the shifting of the explored workers.

\begin{figure}%
    \centering
    \subfigure{%
    \label{fig: large example 3}%
    \includegraphics[width=0.45\textwidth, height=0.3\textwidth]{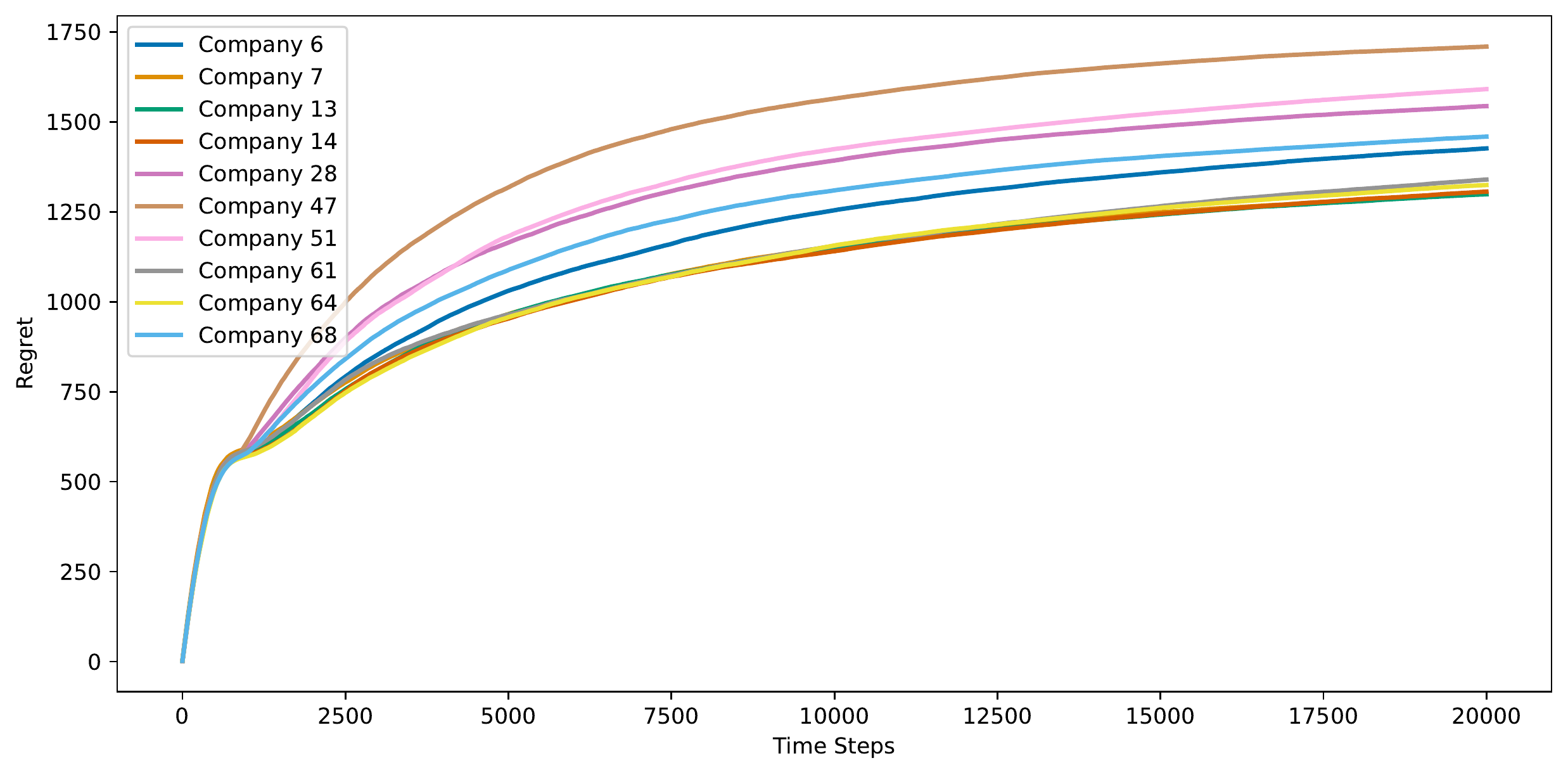}}%
    \subfigure{%
    \label{fig: large example 4}%
    \includegraphics[width=0.45\textwidth, height=0.3\textwidth]{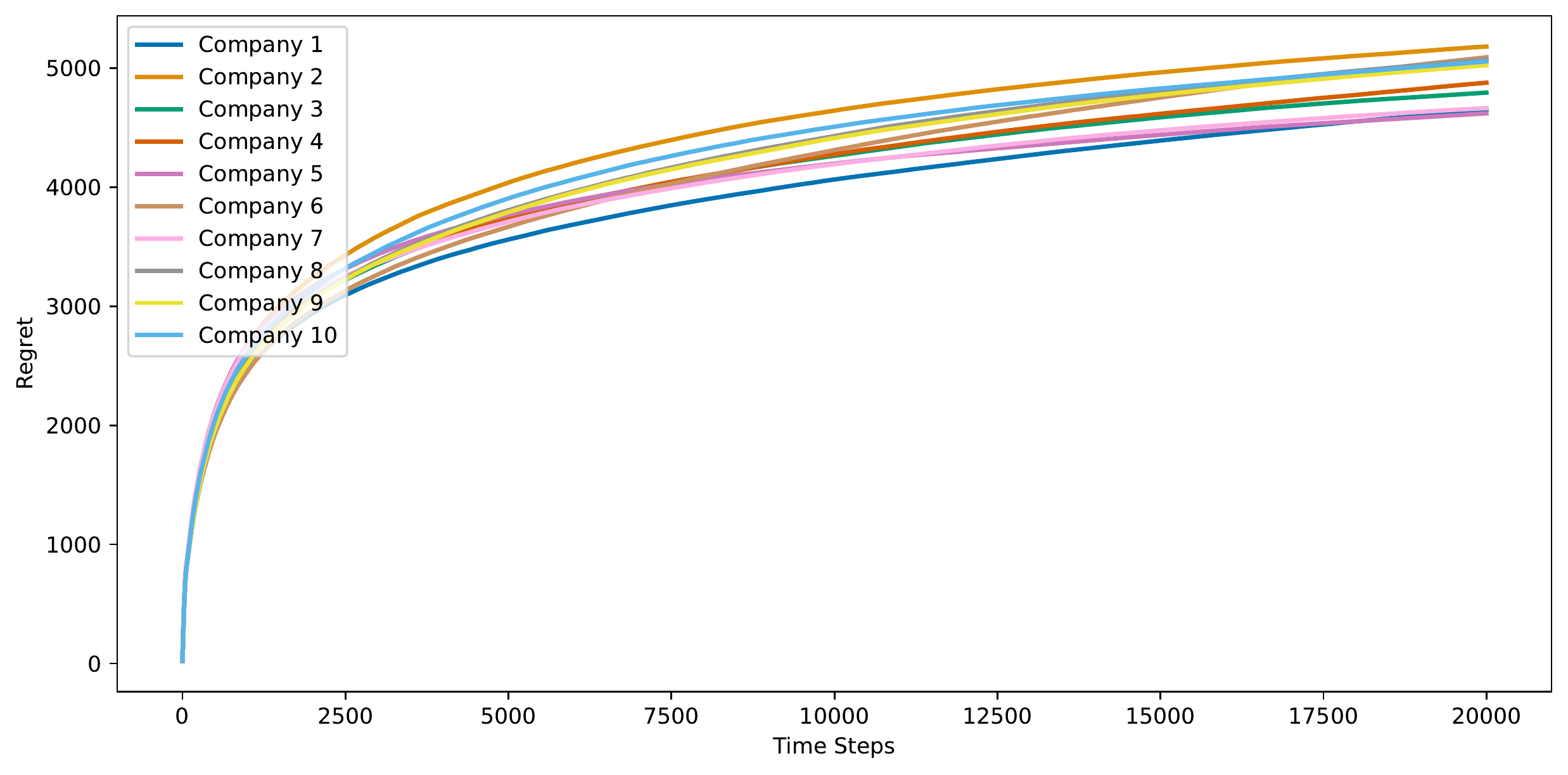}}%
    \caption{Left: 10 out of 100 randomly selected firms' total regret in Examples 3. Right: all firms' total regret in Example 4.}
    \label{fig: large market}
\end{figure}

\end{document}